\documentclass{article}

\usepackage[preprint]{neurips_2026}
\usepackage[utf8]{inputenc}
\usepackage[T1]{fontenc}
\usepackage{url}
\usepackage{booktabs}
\usepackage{amsfonts}
\usepackage{nicefrac}
\usepackage{microtype}
\usepackage{algorithm}
\usepackage{wrapfig}
\usepackage{algpseudocode}

\usepackage{hyperref}

\hypersetup{hidelinks}

\usepackage{amsmath,amssymb,amsthm,mathtools,bm}
\usepackage{enumitem}
\usepackage{multirow}
\usepackage{array}

\newtheorem{theorem}{Theorem}
\newtheorem{proposition}{Proposition}
\newtheorem{lemma}{Lemma}
\newtheorem{corollary}{Corollary}
\theoremstyle{definition}

\theoremstyle{remark}

\theoremstyle{definition}
\newtheorem{discussion}{Discussion}

\newcommand{\E}{\mathbb{E}}

\newcommand{\N}{\mathcal{N}}
\newcommand{\Reg}{\mathrm{Reg}}
\newcommand{\ReMax}{\mathrm{ReMax}}

\newcommand{\ind}{\mathbf{1}}

\newcommand{\dd}{\mathrm{d}}
\newcommand{\argmax}{\mathop{\mathrm{argmax}}}
\newcommand{\argmin}{\mathop{\mathrm{argmin}}}
\newcommand{\supp}{\mathrm{supp}}

\newcommand{\simplex}{\Delta^{K-1}}

\newcounter{algorithmctr}

\allowdisplaybreaks

\title{Finite-Time Regret Analysis of Retry-Aware Bandits}

\author{
    \textbf{Bingkui Tong}$^1$ \quad
    \textbf{Junpei Komiyama}$^{1,2}$ \quad
    \textbf{Soichiro Nishimori}$^3$ \quad
    \textbf{Paavo Parmas}$^3$ \\[2ex]
    $^1$Mohamed bin Zayed University of Artificial Intelligence \\[0.5ex]
    $^2$RIKEN AIP \\[0.5ex]
    $^3$The University of Tokyo \\[2ex]
    \small\texttt{bingkui.tong@mbzuai.ac.ae} \quad \small\texttt{junpei@komiyama.info} \\[0.5ex]
    \small\texttt{nishimori@ms.k.u-tokyo.ac.jp} \quad \small\texttt{paavo.parmas@weblab.t.u-tokyo.ac.jp}
}

\begin{document}

\maketitle

\begin{abstract}
We study a stochastic bandit algorithm motivated by retry-aware objectives that value the best outcome among multiple attempts, such as pass@$k$ and max@$k$.
Given a posterior over arm values, ReMax chooses a sampling distribution that maximizes the posterior expected maximum reward over $M$ virtual draws.
Although this objective was introduced in reinforcement learning as an exploration mechanism under uncertainty, its regret properties in bandit problems have remained unclear. 
For Gaussian rewards and the first nontrivial case $M=2$, we characterize the optimal ReMax distribution through an expected-improvement balance condition and prove the first sublinear regret bound for ReMax. 
Our analysis separates the usual saturation behavior of suboptimal arms from a ReMax-specific underestimation effect, in which the optimal arm may be sampled too rarely after an unfavorable estimate. 
This explains why ReMax can be more exploitative than Thompson sampling (TS) and why its regret analysis is technically delicate. 
Experiments support this picture: ReMax often outperforms KL-UCB and Thompson sampling under mild underestimation, while posterior-variance scaling empirically mitigates severe underestimation.
\end{abstract}

\section{Introduction}

The stochastic multi-armed bandit problem is a fundamental framework for studying the trade-off between exploration and exploitation in sequential decision making. In the classical formulation, the learner chooses one arm per round and aims to maximize cumulative reward. Accordingly, most bandit algorithms select actions according to quantities that estimate, upper-bound, or sample from the value of a single pull. Canonical examples include optimism-based methods such as upper confidence bound (UCB) \citep{lairobbins1985,Auer2002,cappe2013}, empirical divergence methods \citep{DBLP:conf/colt/HondaT10,maillard_phd,DBLP:conf/aistats/BianJ22}, and Thompson sampling \citep{thompson1933likelihood,chapelle2011empirical,DBLP:conf/alt/KaufmannKM12,komiyama15ts,agrawal2017near}.

A different perspective arises when performance is evaluated by the best outcome among multiple attempts. This retry-aware view appears in language-model evaluation through pass@$k$ and, more broadly, in best-of-$k$ or max@$k$ evaluation. Empirical studies suggest that improvements in pass@$1$ do not necessarily translate into corresponding gains in pass@$k$ \citep{chen2026does}, motivating objectives that directly value the best outcome over several attempts. Recent work has also begun to optimize pass@$k$ or max@$k$ criteria directly \citep{tang2025optimizing,chen2025passktraining,walder2026pass}. These objectives differ from the classical single-pull view: they value the best result across several attempts rather than the payoff of one draw. This raises a natural theoretical question: what kind of exploration behavior is induced by optimizing a best-of-$k$ objective?

We study this question through the ReMax objective of \citet{nishimori2026emergence}, which evaluates a policy by the expected maximum return over multiple virtual trials under reward uncertainty. In the bandit setting, ReMax leads to a simple posterior-based algorithm: given the current posterior over arm values, choose a sampling distribution that maximizes the posterior expected maximum reward over $M$ virtual draws. Although prior experiments suggested that this rule can explore effectively, its regret properties in bandit problems were not understood.

This motivates a closer theoretical study of ReMax in a bandit setting. We analyze the Gaussian case with $M=2$, the first nontrivial setting. We first characterize the optimal ReMax sampling distribution through an expected-improvement balance condition: all arms assigned positive probability make the same posterior-averaged marginal contribution to the best-of-two objective, and arms outside the support make a smaller contribution. This condition explains how ReMax uses posterior uncertainty and highlights its difference from Thompson sampling. While Thompson sampling randomizes according to the posterior probability that an arm is best, ReMax also accounts for the magnitude of the possible improvement in the maximum reward. Building on this structure, we prove a first sublinear regret bound for ReMax. The analysis shows that ReMax can be more exploitative than Thompson sampling: when the optimal arm is underestimated, ReMax may return to it too slowly. This underestimation effect is the main bottleneck for ReMax, and we provide heuristic and empirical evidence that posterior-variance scaling can help.

\paragraph{Contributions}
\begin{enumerate}[leftmargin=1.5em]
    \item We characterize the optimizer of the ReMax objective through a Bayesian expected-improvement balance condition, which clarifies the structure of the ReMax sampling rule.
    \item For Gaussian rewards in the first nontrivial case $M=2$, we provide the first regret analysis for ReMax, identify underestimation of the optimal arm as the main technical bottleneck, and empirically investigate 
    posterior-variance scaling as a mitigation strategy.
    \item We complement the theory with bandit experiments, demonstrating that ReMax achieves strong empirical performance and illustrating how retry-aware objectives induce exploratory behavior.
\end{enumerate}

\section{Preliminaries}
\label{sec:prelim}
We consider a $K$-armed stochastic bandit. We reserve $\mu_i$ for the true mean reward of arm $i$, $\hat{\mu}_i(t)$ for its empirical mean at round $t$, and $\theta_i$ for a posterior sample of arm $i$. At each round $t \in \{1,\dots,T\}$, the learner chooses an arm $I(t)\in [K] := \{1,\dots,K\}$ and observes a reward $R_t$ with mean $\mu_{I(t)}$. We assume that the means are ordered as $\mu_1 > \mu_2 \ge \cdots \ge \mu_K$, so arm $1$ is uniquely optimal. Of course, the algorithm itself does not exploit this ordering. For each $i>1$, let the suboptimality gap be $\Delta_i := \mu_1-\mu_i$, and define
\[
N_i(t) := \sum_{s=1}^{t} \ind\{I(s)=i\},
\qquad
\Reg(T) := \sum_{t=1}^{T}(\mu_1-\mu_{I(t)}) = \sum_{i=2}^{K} \Delta_i N_i(T).
\]
A good algorithm should balance exploration and exploitation to minimize the regret $\Reg(T)$.
It is known that \citep{lairobbins1985,burnetas1996optimal} any algorithm suffers at least 
\begin{equation}\label{ineq_reglower}
\liminf_{T \rightarrow \infty} \frac{\Reg(T)}{\log T} \ge \sum_{i=2}^K \frac{\Delta_i}{D(\mu_i || \mu_1)},
\end{equation}
where $D(p || q)$ is the Kullback–Leibler divergence between two distributions with parameters $p,q$. Popular algorithms such as UCB and TS have a regret that matches Equation~\eqref{ineq_reglower}.

Throughout the paper, we let $\Pi_t$ denote the posterior distribution over arm values after round $t$, and we reserve $\theta_i$ for a posterior sample of arm $i$. We write $\bm\theta=(\theta_1,\dots,\theta_K)$ for a sample drawn from the posterior.

\section{The ReMax bandit algorithm}
\label{sec:algorithm}

\subsection{Objective and sampling rule}

Let $\pi\in\simplex$ be a distribution over the $K$ arms, and let $A_{1:M}\sim\pi$ denote $M$ i.i.d. draws from $\pi$. The ReMax objective is
\begin{equation}
\label{eq:remax-objective}
J_{\ReMax}^{(M)}(\pi;\Pi)
:=
\mathbb{E}_{\bm\theta\sim\Pi}
\left[
\mathbb{E}_{A_{1:M}\sim\pi}
\left[
\max_{m\in[M]} \theta_{A_m}
\,\middle|\, \bm\theta
\right]
\right].
\end{equation}
When the same arm appears multiple times in $A_{1:M}$, we reuse the same posterior sample $\theta_i$ for that arm. Thus, repeated occurrences of an arm correspond to repeated action selection rather than re-sampling its posterior value.

At round $t$, the ReMax bandit algorithm selects a distribution maximizing the ReMax objective under the current posterior:
\begin{equation}
\label{eq:remax-policy}
\pi_t := \argmax_{\pi\in\simplex} J_{\ReMax}^{(M)}(\pi;\Pi_{t-1}),
\qquad
I(t) \sim \pi_t.
\end{equation}
After observing the reward $R_t$, the posterior is updated to obtain $\Pi_t$.

\subsection{Expected-improvement balance}

A key structural property of ReMax is that its optimizer equalizes a posterior-averaged notion of marginal improvement across arms in the support.

For a fixed posterior sample $\bm\theta$ and a distribution $\pi$, let
\begin{equation}
W_{M-1} := \max_{m\in[M-1]} \theta_{A_m},
\qquad A_{1:M-1}\sim\pi.
\end{equation}
We define the marginal expected improvement of arm $i$ by
\begin{equation}
s_{t,i}(\pi,\bm\theta)
:=
\mathbb{E}_{A_{1:M-1}\sim\pi}\bigl[(\theta_i-W_{M-1})_+\bigr], 
\end{equation}

and its posterior average at the optimizer $\pi_t$ by
\begin{equation}
\bar s_{t,i}
:=
\mathbb{E}_{\bm\theta\sim\Pi_{t-1}}\bigl[s_{t,i}(\pi_t,\bm\theta)\bigr].
\end{equation}

\begin{proposition}[Expected-Improvement Balance]
\label{prop:ei-balance}
Fix a round $t$ and let $\pi_t$ satisfy Equation~\eqref{eq:remax-policy}. Then:
\begin{enumerate}[leftmargin=1.5em]
    \item if $i,j\in\supp(\pi_t)$, then $\bar s_{t,i}=\bar s_{t,j}$;
    \item if $j\in\supp(\pi_t)$ and $k\notin\supp(\pi_t)$, then $\bar s_{t,j}\ge \bar s_{t,k}$,
\end{enumerate}
where $\supp(\pi_t) = \{i \in [K]: \pi_{t,i}>0\}$ is the set of arms with non-zero weight.
\end{proposition}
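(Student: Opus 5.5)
We treat this as a KKT (first-order optimality) argument for maximizing a smooth function over the simplex. For a fixed posterior sample $\bm\theta$, write the inner objective as
\[
g(\pi;\bm\theta) := \mathbb{E}_{A_{1:M}\sim\pi}\bigl[\max_{m\in[M]}\theta_{A_m}\bigr] = \sum_{a_1,\dots,a_M}\Bigl(\prod_{m=1}^M \pi_{a_m}\Bigr)\max_m \theta_{a_m}.
\]
This is a polynomial of degree $M$ in $\pi$. The function $J^{(M)}_{\ReMax}(\pi;\Pi_{t-1})=\mathbb{E}_{\bm\theta}[g(\pi;\bm\theta)]$ is therefore also a polynomial in $\pi$, provided the posterior has a finite first moment; this holds for the Gaussian posteriors considered. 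In particular $J$ is differentiable on a neighborhood of the simplex.

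\textbf{Computing the partial derivatives.} By symmetry of the product over the $M$ slots,
\[
\partial g/\partial\pi_i = M\,\mathbb{E}_{A_{1:M-1}\sim\pi}\bigl[\max(\theta_i,W_{M-1})\bigr].
\]
Using $\max(x,w)=w+(x-w)_+$, this equals
\[
M\bigl(\mathbb{E}[W_{M-1}] + s_{t,i}(\pi,\bm\theta)\bigr).
\]
The term $\mathbb{E}[W_{M-1}]$ does not depend on $i$. Differentiating under the posterior expectation, which is justified by dominated convergence since $|\max_m\theta_{a_m}|\le\sum_j|\theta_j|$ is integrable, gives
\[
\partial J/\partial\pi_i\big|_{\pi_t} = M\bigl(C_t + \bar s_{t,i}\bigr),
\]
where $C_t=\mathbb{E}_{\bm\theta}\mathbb{E}_{\pi_t}[W_{M-1}]$ is common to all arms.

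\textbf{KKT step.} Since $\pi_t$ maximizes $J$ over $\simplex$, consider moving mass $\epsilon$ from an arm $j\in\supp(\pi_t)$ to any arm $k$. The perturbed point $\pi_t+\epsilon(e_k-e_j)$ stays feasible for $0\le\epsilon\le\pi_{t,j}$. Optimality forces the one-sided directional derivative to be nonpositive:
\[
\partial_k J-\partial_j J\le 0, \quad\text{that is,}\quad \bar s_{t,k}\le\bar s_{t,j}.
\]
Taking $k\notin\supp(\pi_t)$ gives item 2. If both $i,j$ are in the support, we can move mass in either direction, so both inequalities hold and $\bar s_{t,i}=\bar s_{t,j}$, which is item 1.

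\textbf{Main obstacle.} The delicate points are the differentiation bookkeeping and the tie convention. The derivative of the degree-$M$ polynomial must collapse exactly to $M$ times the expectation over $M-1$ draws; this follows by expanding the product and noting that $\max$ is symmetric in its arguments. Ties $\theta_i=W_{M-1}$ cause no trouble, because $\max(x,w)=w+(x-w)_+$ holds identically, including at ties. Beyond that, only the integrability needed to swap $\partial/\partial\pi_i$ with $\mathbb{E}_{\bm\theta}$ must be checked, and it is immediate because $J$ is a polynomial in $\pi$ whose coefficients are finite posterior expectations.
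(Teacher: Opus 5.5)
Your proposal is correct and follows the paper's own proof. Both expand the inner objective as a degree-$M$ polynomial in $\pi$, collapse $\partial/\partial\pi_i$ to $M\,\mathbb{E}_{A_{1:M-1}\sim\pi}[\max(\theta_i,W_{M-1})]$, split this via $\max(a,b)=(a-b)_+ + b$ into $\bar s_{t,i}$ plus an arm-independent constant, and conclude from first-order optimality on the simplex. Your exchange-direction argument with $\pi_t+\epsilon(e_k-e_j)$ simply spells out the KKT conditions that the paper cites directly.
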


The proof follows by differentiating the ReMax objective with respect to the sampling weights and applying the KKT conditions on the simplex. See Appendix~\ref{app:ei-proof} for the full proof.

For arms $i$ and $j$, define
$G_{ij}(t) := \mathbb{E}\left[(\theta_i-\theta_j)_+\right]$
where $\theta_i$ and $\theta_j$ are sampled from $\Pi_{t-1,i}$ and $\theta_j\sim\Pi_{t-1,j}$ independently. Then the expected improvement of arm $i$ can be represented as
\begin{equation}
\label{eq:EI_weighted_sum}
\bar s_{t,i} = \sum_{j=1}^{K} \pi_{t,j} G_{ij}(t).
\end{equation}
Thus, arms in the support equalize weighted sums of pairwise positive-part gaps.

\begin{discussion}[Difference from Conventional Expected Improvement]
Note that the expected-improvement quantity in Proposition~\ref{prop:ei-balance} differs from the expected improvement algorithms studied in prior work, such as \cite{ryzhov2016}, which are widely used in global optimization \citep{Mockus1989,Vazquez10,brochu2010tutorial,JMLR:v12:bull11a,Snoek2012}. Proposition~\ref{prop:ei-balance} measures how adding the $M$-th arm improves the maximum over the other $M-1$ arms, whereas classical expected improvement algorithms measure how a single sample improves upon the maximum of the previous $t-1$ samples. In fact, the analysis in \cite{ryzhov2016} implies that expected improvement leads to sublinear regret, although its regret guarantees are not as strong as those of other popular bandit algorithms.
\end{discussion}

\subsection{Reward-variance adaptation}
\label{subsec:reward_variance_adaptation}
Assume Gaussian posteriors:
$
\Pi_{t-1,i}=\N\!\left(\hat\mu_i(t),\frac{1}{N_i(t)}\right)
$
for arm $i$ and
$
\Pi_{t-1,j}=\N\!\left(\hat\mu_j(t),\frac{1}{N_j(t)}\right)
$
for arm $j$.
Then $\theta_i-\theta_j$ is Gaussian with mean
$
\delta_{ij}(t):=\hat\mu_i(t)-\hat\mu_j(t)
$
and variance
$
\sigma_{ij}^2(t):=\frac{1}{N_i(t)}+\frac{1}{N_j(t)}.
$
Writing $\gamma_{ij}(t):=\delta_{ij}(t)/\sigma_{ij}(t)$, we obtain
\begin{equation}
\label{eq:gij-closed-form}
G_{ij}(t)
=
\sigma_{ij}(t)\phi\left(\gamma_{ij}(t)\right)
+
\delta_{ij}(t)\Phi\left(\gamma_{ij}(t)\right),
\end{equation}
where $\phi$ and $\Phi$ are the standard normal density and CDF (the proof is provided in Appendix~\ref{app:gaussian-positive-part}).

\begin{wrapfigure}{r}{0.48\textwidth}

\vspace{-1em}
\centering
\includegraphics[width=\linewidth]{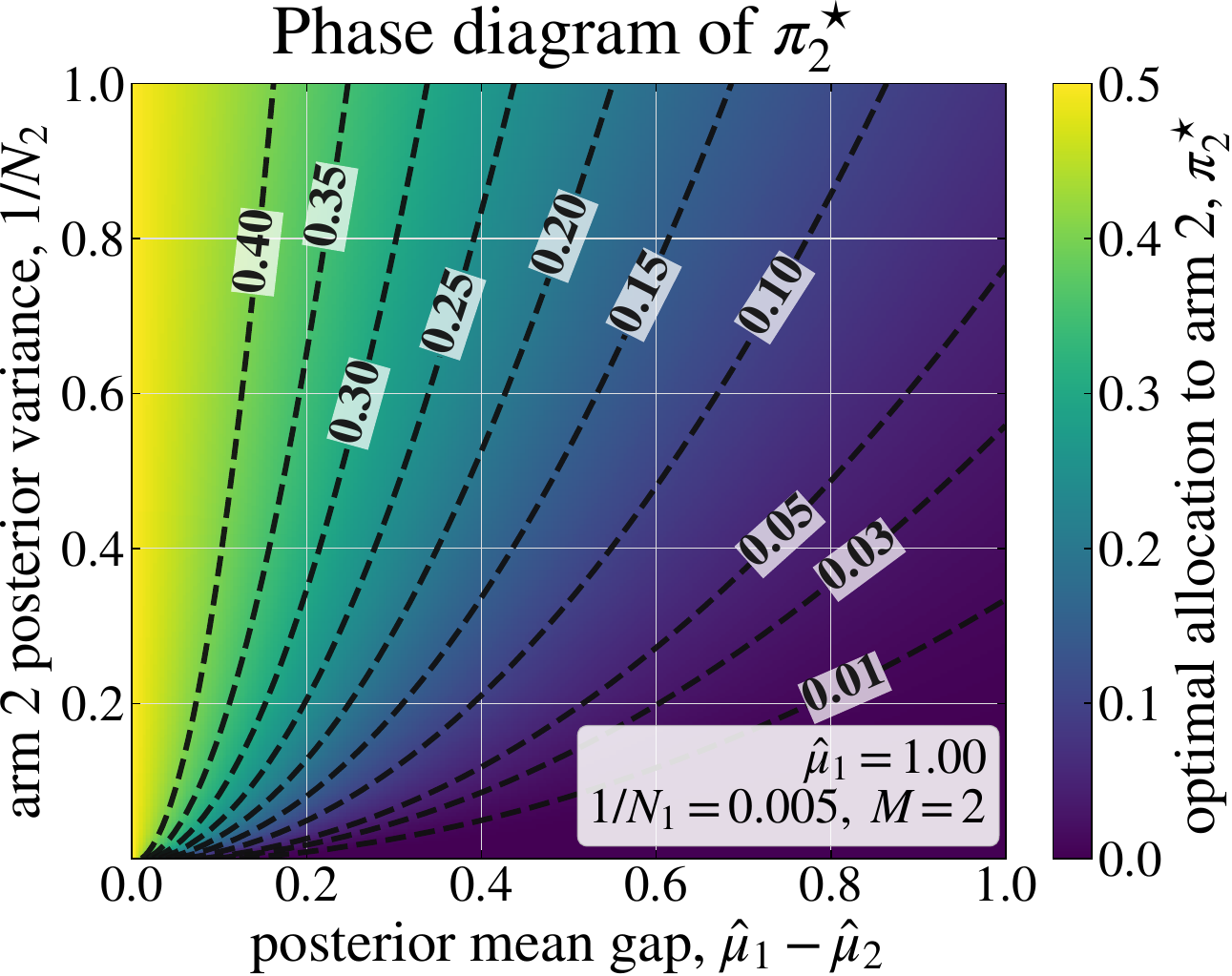}
\caption{Colors show the optimal sampling probability $\pi_2^\star$ as the posterior mean gap and arm 2's posterior variance vary.}
\label{fig:reward_variance_adaptation}

\vspace{-3em}
\end{wrapfigure}

The first term is explicitly proportional to the posterior standard deviation. ReMax is therefore naturally variance-adaptive: even a mean-suboptimal arm can remain valuable if its upper tail is sufficiently informative for the best-of-two objective. As a concrete illustration, Figure~\ref{fig:reward_variance_adaptation} shows a two-arm Gaussian example with $M=2$. Holding arm 1 fixed, the optimal probability of sampling arm 2 increases with its posterior variance and decreases with the posterior mean gap $\hat\mu_1-\hat\mu_2$.

This feature distinguishes ReMax from posterior-sampling rules such as Thompson sampling. Thompson sampling randomizes according to the posterior probability that an arm is optimal, whereas ReMax also accounts for the magnitude of the potential improvement through $G_{ij}(t)$.

\subsection{Intuitive optimality}
\label{subsec:intuitive_optimalty}
In the two-arm case, Equation~\eqref{eq:EI_weighted_sum} yields the useful heuristic
\begin{equation}
\label{eq:ratio-balance}
\frac{\pi_{t,2}}{\pi_{t,1}} = \frac{G_{21}(t)}{G_{12}(t)}.
\end{equation}
Assume that $\mu_1 \ge \mu_2$ and that the empirical quantities are close to the true values, namely, $\hat{\mu}_1(t) \approx \mu_1 > \hat{\mu}_2(t) \approx \mu_2$.
Then $\pi_{t,1}$ and $G_{12}(t)$ are both $O(1)$, and arm $2$ is drawn until $\pi_{t,2}$ is approximately $1/t$, at which point its draws are saturated.
We can show that, under $\hat{\mu}_2(t) < \hat{\mu}_1(t)$,
\begin{equation}
\label{ineq_saturated}
G_{21}(t) \approx \mathrm{poly}(N_2(t)) \exp\left(- N_2(t) D(\hat{\mu}_2(t) \Vert \hat{\mu}_1(t) )\right) = O\left(\frac{1}{t}\right),
\end{equation}
or, ignoring the polynomial term,
\[
N_2(t) \sim \frac{\log t}{D(\hat{\mu}_2(t) \Vert \hat{\mu}_1(t) )} \sim \frac{\log t}{D(\mu_2 \Vert \mu_1 )},
\]
which matches the asymptotically optimal regret lower bound in Equation~\eqref{ineq_reglower}.

In summary, the saturated\footnote{We use the term ``saturated'' to describe the case in which the empirical means approximately match the true means.} behavior of ReMax appears asymptotically optimal. Still, its regret analysis, which we formalize in the next section, is more challenging than that of Thompson sampling because the $\mathrm{poly}(N_2(t))$ term in Equation~\eqref{ineq_saturated} is substantially more involved, and the variance awareness of ReMax leads to more aggressive exploitation than TS. 

\section{Regret analysis}
\label{sec:theory}
In this section, we establish the regret upper bound. Following traditional bandit analysis, we derive the frequentist regret of the algorithm.\footnote{Bayesian regret can be derived by integrating the frequentist regret over the parameter distribution.}
After presenting the regret bound, we provide a proof sketch for the ReMax algorithm in the first nontrivial case where $M=2$ and each arm gets an initial pull. 
The formal proof is provided in Appendix~\ref{app:full_proof}.

\subsection{Main result}
\begin{theorem}[Regret Bound of ReMax]\label{thm:main}
Let $\Delta_{\max\!}=\!\max_{i\ge 2}\Delta_i$ and $\Delta_{\min}\!=\!\min_{i\ge 2}\Delta_i$. Under the assumption that the reward of each arm $i$ is sampled from a Gaussian distribution with mean $\mu_i$ and unit variance, and adopting an improper uniform prior such that the posterior is represented by $\Pi_{t,i}=\mathcal{N}(\hat{\mu}_{i}(t),\frac{1}{N_{i}(t)})$ after each arm is pulled once initially, ReMax satisfies
\begin{equation}\label{ineq:main}
\mathbb{E}[\mathrm{Reg}(T)] \le \sum_{i=2}^K O\left( \frac{\Delta_i \log T}{D(\mu_i + \varepsilon \,\|\, \mu_1 - \varepsilon/2)} \right) + \Delta_{\max} \tilde{O}\left(\frac{K^{1/3}T^{2/3}}{\varepsilon^3}\right),
\end{equation}
for any $\varepsilon \in (0, \Delta_{\min}/5]$.
In particular, by choosing $\varepsilon = \Delta_{\min}/5$, we have
\begin{equation}
\mathbb{E}[\mathrm{Reg}(T)] \le \sum_{i=2}^K O\left( \frac{\log T}{\Delta_i} \right) + \Delta_{\max} \tilde{O}\left(\frac{K^{1/3}T^{2/3}}{\Delta_{\min}^3}\right)
= o(T).
\end{equation}
\end{theorem}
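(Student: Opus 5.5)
Our plan is to follow the standard Thompson-sampling-style decomposition, with the randomized choice $I(t)\sim\pi_t$ playing the role of the posterior sample. We fix $\varepsilon\in(0,\Delta_{\min}/5]$ and, for each suboptimal arm $i$, split the rounds in which $i$ is pulled into three groups:
\begin{enumerate}[leftmargin=1.5em]
    \item[(a)] rounds where $\hat\mu_i(t)>\mu_i+\varepsilon$;
    \item[(b)] rounds where $\hat\mu_i(t)\le\mu_i+\varepsilon$ and $\hat\mu_1(t)\ge\mu_1-\varepsilon/2$ (the saturated regime);
    \item[(c)] rounds where $\hat\mu_1(t)<\mu_1-\varepsilon/2$ (underestimation of the optimal arm).
\end{enumerate}
Term (a) contributes $O(1/\varepsilon^2)$ in expectation by a Gaussian concentration and union bound over the values of $N_i(t)$, exactly as in UCB and TS analyses.

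For term (b) we use the balance condition of Proposition~\ref{prop:ei-balance} in the form \eqref{eq:EI_weighted_sum}. Since arm $1$'s own contribution $\pi_{t,1}G_{1i}$ dominates, we expect $\pi_{t,i}$ to be at most of order $\bar s_{t,i}$ divided by a constant. We derive this from the balance together with $\bar s_{t,1}\ge\bar s_{t,i}$, where $\bar s_{t,1}$ is bounded below by $\pi_{t,j}G_{1j}$ terms. On the other side, $\bar s_{t,i}\le\sum_j\pi_{t,j}G_{ij}$. In the saturated regime we bound $G_{ij}$ using the closed form \eqref{eq:gij-closed-form} and the Mills-ratio estimate $\sigma\phi(\gamma)+\delta\Phi(\gamma)\le\sigma^3\phi(\gamma)/\delta^2$ for $\delta<0$. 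With the gap $\hat\mu_1-\hat\mu_i\ge\Delta_i-3\varepsilon/2$, this gives $G_{i1}(t)\lesssim \mathrm{poly}(N_i)\exp(-N_iD(\mu_i+\varepsilon\|\mu_1-\varepsilon/2))$. Once $N_i(t)\ge (1+o(1))\log T/D(\mu_i+\varepsilon\|\mu_1-\varepsilon/2)$, we get $\pi_{t,i}=O(1/T)$, so the remaining pulls in regime (b) are $O(1)$. This yields the first sum in \eqref{ineq:main}.

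A subtlety in (b) is that other suboptimal arms $j$ may carry mass and make $G_{ij}$ large. We handle this by noting that arm $1$ must itself receive mass bounded below. If $\pi_{t,1}$ were tiny, arm $1$'s improvement $\sum_j\pi_{t,j}G_{1j}$ would be of constant order, since each $G_{1j}\gtrsim(\hat\mu_1-\hat\mu_j)_+$. This would exceed the others' improvement and contradict part 2 of Proposition~\ref{prop:ei-balance}.

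The main obstacle is term (c). When $\hat\mu_1$ is underestimated, ReMax can give arm $1$ probability as small as $\exp(-\Theta(N_1\varepsilon^2))$-like amounts, and unlike TS there is no clean $1/p-1$ identity to integrate. We would instead bound directly the expected number of rounds spent in (c). Condition on $N_1(t)=n$. The event $\hat\mu_1<\mu_1-\varepsilon/2$ has probability $e^{-n\varepsilon^2/8}$. During such a stretch, we lower-bound $\pi_{t,1}$ by a polynomial in $1/t$ and in the counts via the balance: $G_{1j}\ge\sigma_{1j}\phi(\gamma_{1j})$, with $\sigma_{1j}\ge n^{-1/2}$. This suggests $\pi_{t,1}\gtrsim \varepsilon^{c}\,\mathrm{poly}(n)^{-1}\cdot(\text{something like }N_j^{-1}e^{-n\varepsilon^2})$.

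We then use a threshold $n_0$. For $n\ge n_0\approx \varepsilon^{-2}\log T$, underestimation is too rare to matter. For $n<n_0$, we cap the waiting time per value of $n$ by a quantity $L$, paying at most $K n_0 L$ plus a tail term. Balancing $n_0$, the waiting time, and the $T$ rounds gives the $\tilde O(K^{1/3}T^{2/3}/\varepsilon^3)$ term. Concretely, we split into the cases ``arm $1$ has probability at least $(K/T)^{1/3}$'' and ``it has less''. In the latter case, the other arms must have inflated counts, which bounds how long it can persist.

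Multiplying by $\Delta_{\max}$ and summing over arms gives \eqref{ineq:main}. Substituting $\varepsilon=\Delta_{\min}/5$ and using $D(\mu_i+\varepsilon\|\mu_1-\varepsilon/2)=\Theta(\Delta_i^2)$ gives the final form.
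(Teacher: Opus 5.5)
Your three-way split and the bound on term (a) are fine. Neither of the two substantive steps goes through as written, however. \textbf{Term (c).} You propose to lower-bound $\pi_{t,1}$ by a positive quantity (a polynomial in $1/t$ times something like $e^{-n\varepsilon^2}$), but no such bound exists. For $M=2$ the ReMax policy maximizes a concave quadratic over the simplex, and an underestimated arm $1$ can lie \emph{outside} the support, with $\pi_{t,1}=0$ exactly. This lasts as long as its expected improvement stays below the pairwise improvements among the other arms; in the paper's three-arm example, that is while $G_{12}(t)\lesssim G_{23}(t)\asymp N^{-1/2}$. A lower bound on $G_{1j}$ lower-bounds $\bar s_{t,1}$, not $\pi_{t,1}$. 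The case split at $(K/T)^{1/3}$ with ``the other arms must have inflated counts'' supplies no mechanism, so the $T^{2/3}$ rate is asserted rather than derived.

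The missing idea is to use the lower bound on arm $1$'s improvement to force mass onto \emph{under-sampled} arms. Set $E=\frac14\mathbb{E}[(\theta_1-(\mu_1-\varepsilon))_+]$ with $\theta_1\sim\mathcal{N}(\hat\mu_{1,n},1/n)$. Then $\bar s_{t,1}\ge E$ whenever at least half the mass lies on arms with at least $4/(\pi E^2)$ pulls and $\hat\mu_i\le\mu_1-\varepsilon$. Pairwise improvements among such arms are at most $E/2$. Hence the balance condition at the lowest well-sampled arm in the support forces mass at least $E/(2(L+1))$ onto the insufficiently sampled arms, where $L$ is the range of the empirical means. That set contains arm $1$ while $N_1(t)=n$; this is the paper's Freeze Lemma. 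Each draw from the set either ends the stretch or moves an arm toward the threshold. So the expected number of such rounds with $N_1(t)=n$ is $O(K(L+1)E^{-3})$. Rounds in which the mass sits on well-sampled but optimistic suboptimal arms are counted separately by concentration.

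The rate then comes from two inequalities, with $\delta=\mu_1-\hat\mu_{1,n}$: $\min(T,CK(L+1)E^{-3})\le T^{2/3}(CK(L+1))^{1/3}E^{-1}$, and $E^{-1}\lesssim\sqrt{n}\,(3+n(\delta-\varepsilon)^2)e^{n(\delta-\varepsilon)^2/2}$. $E^{-1}$ is the largest power of $1/E$ that stays integrable against the density $e^{-n\delta^2/2}$ of $\hat\mu_{1,n}$. That is why the interpolation exponent is $1/3$ and the result is $\tilde O(K^{1/3}T^{2/3}\varepsilon^{-3})$.

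\textbf{Term (b).} First, the exponent in $G_{i1}(t)$ is $(\hat\mu_1-\hat\mu_i)^2/(2(1/N_1+1/N_i))$, not $N_i(\hat\mu_1-\hat\mu_i)^2/2$. So when $N_i$ is large but $N_1$ is small (e.g.\ right after arm $1$ recovers from a long underestimation), $G_{i1}$ is of constant order and $\pi_{t,i}$ is not $O(1/T)$. This is why the paper needs a lower bound on $N_1$. Let $T_i=\log T/D(\mu_i+\varepsilon\,\|\,\mu_1-\varepsilon/2)$. The paper handles $N_1<3T_i\le N_i$ by dominance: $g(\Delta,\sigma)=\Delta\Phi(\Delta/\sigma)+\sigma\phi(\Delta/\sigma)$ is increasing in both arguments. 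Hence $\hat\mu_1\ge\hat\mu_i$ and $N_1\le N_i$ give $G_{1k}\ge G_{ik}$ for every $k\notin\{1,i\}$, so $\pi_{t,i}\le\pi_{t,1}$. Pulls of $i$ in this regime are then charged to pulls of arm $1$ made before $N_1$ reaches $3T_i$. Only once both counts exceed $3T_i$ does $N_1N_i/(N_1+N_i)\ge 1.5T_i$ make your exponential bound give $O(1/T)$.

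Second, your treatment of the other arms fails. $\pi_{t,1}$ is not bounded below in regime (b): a third arm with $\hat\mu_k>\hat\mu_1$ makes $G_{1k}$ tiny and can absorb almost all the mass. Even with $\pi_{t,1}\ge c$, the inequality $\pi_{t,i}G_{1i}\le\bar s_{t,i}$ leaves terms $\pi_{t,k}G_{ik}$ that need not be small. You have to keep all of $\bar s_{t,1}$ and cancel. The inequality $(a)_+-(b)_+\le(a-b)_+$ gives $G_{ik}-G_{1k}\le G_{i1}$. Then $\bar s_{t,1}\le\bar s_{t,i}$ (valid whenever $\pi_{t,i}>0$) yields $\pi_{t,i}G_{1i}\le G_{i1}$, regardless of where the remaining mass sits.
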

The constants hidden in the Landau notation are explicit in Appendix~\ref{app:full_proof}; the proof is non-asymptotic. The $\tilde{O}(\log T)$ term arises from the overestimation of suboptimal arms (the ``saturation'' term). This term has the same order as in standard bandit analyses, but its proof is also more delicate than the corresponding Thompson sampling argument: after a suboptimal arm is saturated, the ReMax EI comparison still requires the optimal arm to be sufficiently sampled, and hence our analysis needs a lower bound on $N_1(t)$. However, the $O\left(T^{2/3}\right)$ term (the ``underestimation'' term) stems from a challenge unique to ReMax; it is highly exploitative. If the optimal arm is initially underestimated, it may take a long time before the algorithm pulls the optimal arm again, leading to a protracted recovery period before its empirical mean is corrected. 

\subsection{Proof sketch}
The central structural property of ReMax is the expected improvement (EI) condition: under the optimal sampling distribution $\pi_t$, all arms in the support must achieve an identical, maximal posterior-averaged marginal gain. Since regret accumulates only when suboptimal arms are pulled, and the EI condition strictly dictates which arms receive positive probability, our regret analysis reduces to understanding when EI can still justify pulling a suboptimal arm. To formalize this, we decompose the regret into an underestimation term and a saturation term, depending on whether the optimal arm is currently underestimated when a suboptimal arm is pulled.

\paragraph{Underestimation term}
We first consider the regime in which the optimal arm is underestimated, and ask how long this situation can persist, namely how to control
\begin{equation}
\sum_t \mathbf{1}\!\left\{\hat{\mu}_1(t) < \mu_1 - \varepsilon\right\}.
\end{equation}
Even when the optimal arm is underestimated, it still retains a nontrivial posterior chance of improving beyond the threshold $\mu_1-\varepsilon$, which yields a lower bound $E \asymp \exp(-n D(\mu_1 - \varepsilon \Vert \hat{\mu}_1(t) ))$ up to polynomial factors on its expected improvement. Based on this quantity, we define a set $\mathcal{S}$ consisting of arms that have been pulled sufficiently many times, and a subset $\mathcal{H} \subseteq \mathcal{S}$ consisting of those arms whose empirical means still exceed $\mu_1-\varepsilon$ despite being sufficiently sampled. Using these two sets, we introduce the event $\mathcal{Y}_{\mathrm{u}}$ to measure how much probability mass is assigned to sufficiently sampled arms (i.e., arms in $\mathcal{S}$), and the event $\mathcal{X}_{\mathrm{u}}$ to measure how much of that mass is further allocated to sufficiently sampled arms that still appear overly optimistic (i.e., arms in $\mathcal{H}$).

We first consider the case where a significant amount of probability mass is assigned to arms in $\mathcal{H}$. Since $\mathcal{X}_{\mathrm{u}}\cap\mathcal{Y}_{\mathrm{u}} \subseteq \mathcal{X}_{\mathrm{u}}$, it suffices to bound the contribution from event $\mathcal{X}_{\mathrm{u}}$. By definition, every arm in $\mathcal{H}$ is already sufficiently sampled, yet still has empirical mean exceeding $\mu_1-\varepsilon$. For any suboptimal arm, this can only happen if its empirical mean exhibits a substantial upward deviation from its true mean. Such events are rare by concentration, and a Chernoff bound shows that
\begin{align}
\mathbb{E}\left[\sum_t \mathbf{1}\left\{ \hat{\mu}_1(t) \le \mu_1 - \varepsilon, \mathcal{X}_{\mathrm{u}}\right\}\right]
= O\left(\frac{K\log K}{\varepsilon^2}\right).
\end{align}
We next consider the complementary case where either the sufficiently sampled arms or the optimistic sufficiently sampled arms do not carry enough probability mass, namely when $\mathcal{X}_{\mathrm{u}}^c \cup \mathcal{Y}_{\mathrm{u}}^c$ holds. If $\mathcal{Y}_{\mathrm{u}}^c$ occurs, then the total probability mass assigned to $\mathcal{S}$ is itself too small. If instead $\mathcal{X}_{\mathrm{u}}^c$ occurs while $\mathcal{Y}_{\mathrm{u}}$ still holds, then most of the mass within $\mathcal{S}$ must be assigned to arms that have already been sampled sufficiently often but whose empirical means are not large. For such arms, the pairwise improvements are too small to account for the lower bound $E$ on the expected improvement of the optimal arm, even though the optimal arm is currently underestimated. In either case, the expected-improvement condition forces the algorithm to place a nontrivial amount of probability mass on arms outside $\mathcal{S}$. This in turn forces exploration outside $\mathcal{S}$ and quickly drives those arms into $\mathcal{S}$, after which the event $N_1(t)=n$ can no longer persist. Therefore,
\begin{align}
\mathbb{E}\left[\sum_t \mathbf{1}\left\{ \hat{\mu}_1(t) \le \mu_1 - \varepsilon, \mathcal{X}_{\mathrm{u}}^c(t)\right\}\right]
= \tilde{O}\left(\frac{K^{1/3}T^{2/3}}{\varepsilon^3}\right).
\end{align}
Under the natural condition $T\ge K$, which is anyway required since each arm is pulled once initially, the extra $O\left(\frac{K\log K}{\varepsilon^2}\right)$ term is lower-order in the regime of interest and we suppress it for readability.
\paragraph{Saturation term}
We now turn to the complementary regime where the optimal arm is not underestimated. 
This term is also challenging because, unlike the analysis of TS, it requires a lower bound on $N_1(t)$ (i.e., pull count of the best arm).
Fix any suboptimal arm $j \neq 1$, and consider
\begin{equation}
\sum_{t=1}^T \mathbf{1}\!\left\{\hat{\mu}_1(t) \ge \mu_1 - \varepsilon,\ I(t) = j\right\}.
\end{equation}
We first exclude the rounds in which arm $j$ itself is overestimated, since these can be controlled directly by concentration. In particular,
\begin{align}
\mathbb{E}\left[\sum_{t=1}^T \mathbf{1}\!\left\{I(t) = j,\ \hat{\mu}_1(t) \ge \mu_1 - \varepsilon,\ \hat{\mu}_j(t) \ge \mu_j + \varepsilon\right\}\right]
= O\!\left(\frac{1}{\varepsilon^2}\right).
\end{align}
We next analyze the remaining rounds, where arm $j$ is not overestimated. For a fixed suboptimal arm $j$, define the saturation threshold $T_j = O\!\left(\frac{\log T}{D(\mu_j+\varepsilon \,\|\, \mu_1-\varepsilon/2)}\right)$. We further introduce the events
$\mathcal{X}_{\mathrm{s}}(t):=\left\{\hat{\mu}_1(t)\ge \mu_1-\varepsilon,\ \hat{\mu}_j(t)\le \mu_j+\varepsilon\right\}$, $\mathcal{Y}_{\mathrm{s}}(t):=\left\{N_j(t)\ge 3T_j\right\}$, and 
$\mathcal{Z}_{\mathrm{s}}(t):=\left\{N_1(t)\ge 3T_j\right\}$.
We bound the number of pulls of arm $j$ after $\mathcal{Y}_{\mathrm{s}}(t)$ (i.e., saturation of arm $j$) occurs. 
We further divide this regime according to whether the optimal arm is also saturated.

If both the optimal arm and arm $j$ are sufficiently sampled, then under the concentration event the expected-improvement comparison strongly favors the optimal arm. As a result, the probability of drawing arm $j$ is of order $T^{-1}$, and the total contribution of this regime is only $O(1)$. More precisely, once both arm $1$ and arm $j$ are saturated,
\begin{align}
\mathbb{E}\left[\sum_{t=1}^T \mathbf{1}\!\left\{I(t)=j,\ \mathcal{X}_{\mathrm{s}}(t),\ \mathcal{Y}_{\mathrm{s}}(t),\ \mathcal{Z}_{\mathrm{s}}(t)\right\}\right]
= O(1).
\end{align}

The only remaining case is when arm $j$ is already saturated while the optimal arm is not. On this event, a one-step EI comparison shows that arm $j$ is no more likely to be selected than arm $1$. A direct counting argument then bounds the total contribution of this regime by $O(T_j)$, namely
\begin{align}
\mathbb{E}\left[\sum_{t=1}^T \mathbf{1}\!\left\{I(t)=j,\ \mathcal{X}_{\mathrm{s}}(t),\ \mathcal{Y}_{\mathrm{s}}(t),\ \mathcal{Z}_{\mathrm{s}}^c(t)\right\}\right]
= O\left(\frac{\log T}{D(\mu_j+\varepsilon \,\|\, \mu_1-\varepsilon/2)}\right).
\end{align}
The bounds above hold for each fixed suboptimal arm $j$ and the regret is the summation of this term over $j$. The full proof of Theorem~\ref{thm:main} is provided in Appendix~\ref{app:full_proof}.

\subsection{Discussion on regret bound}\label{subsec_threearminstance}
The regret bound in Theorem~\ref{thm:main} is likely not tight, and its looseness appears to come mainly from the underestimation term rather than the saturation term. 
Nevertheless, the saturation analysis is also more delicate than in Thompson sampling: after a suboptimal arm $j$ is saturated, our proof still requires a lower bound on $N_1(t)$ so that the expected-improvement comparison between arm $1$ and arm $j$ becomes effective. This additional step is not needed in the standard Thompson-sampling analysis, where bounding the underestimation of the posterior suffices (e.g., \cite{hondats2014,agrawal2017near}).

That said, similarly to the standard bandit analyses of popular bandit algorithms such as Thompson Sampling, the more challenging term is the underestimation term.
The core issue in the case of underestimation of the optimal arm is to show that the algorithm returns to a normal estimation sufficiently quickly by drawing the optimal arm. For ReMax, this part of the argument is more subtle.

A useful instance in understanding the challenge in ReMax is the following three-arm example. Suppose that $\mu_1 > \mu_2 = \mu_3$, $\hat{\mu}_2 = \hat{\mu}_3 = \mu_2$, and the optimal arm is underestimated, i.e., $\hat{\mu}_1 < \mu_2$. Then the expected improvement (EI) of arm $2$ (and similarly arm $3$) includes the term $\pi_3 \, \mathbb{E}[(\theta_2 - \theta_3)_+]$, where $\mathbb{E}[(\theta_2-\theta_3)_+] = O\!\left(\max\left\{\frac{1}{\sqrt{N_2}}, \frac{1}{\sqrt{N_3}}\right\}\right)$. If $N_2$ and $N_3$ are both of order $t$, this term decays as $t^{-1/2}$. Balancing it with the EI of the underestimated optimal arm gives the relation $t^{-1/2} \approx \exp\!\left(-N_1 D(\hat{\mu}_1,\mu_1)\right)$, and hence $t \approx \exp\!\left(2N_1 D(\hat{\mu}_1,\mu_1)\right)$. This yields an extra factor of $2$ in the exponent, and therefore a longer recovery time than desired. 

This example also explains why ReMax can be more exploitative than Thompson sampling.
When the optimal arm is underestimated, mutual comparisons among suboptimal arms can yield larger posterior EI, so ReMax may delay returning to the optimal arm, unlike TS which samples by the posterior probability of being best.
This effect is reflected in the $\tilde{O}(T^{2/3})$ term in our regret bound.
It also suggests variance inflation as a mitigation strategy: larger Gaussian posterior variance increases upper-tail probability and positive-part EI, helping the underestimated optimal arm re-enter the support.
In Appendix~\ref{app:variance_inflation}, we give an informal discussion suggesting that suitable variance inflation could substantially reduce the underestimation contribution, with a heuristic calculation that shows the $\tilde{O}(T^{2/3})$ can be reduced to $\tilde{O}(1)$; experiments further indicate that this is a promising direction.

\section{Experiments}\label{sec:experiment}

We empirically test the predictions of our analysis of ReMax in the
\emph{frequentist} stochastic-bandit setting targeted by the theory:
the arm means are deterministic, and we measure cumulative regret and
under\-estimation as the number of pulls $T$ grows.
The main subsections focus on $M=2$ ReMax,
which is the case covered by our analysis.

\subsection{Synthetic instances}
\label{sec:synthetic-bandits}

We evaluate ReMax on three fixed-mean Gaussian instances; the first two are taken from \citet{garivier2011kl}, recast here as Gaussian bandits.
The two-arm instance has arm means $\mu=(0.9,~0.8)$ with reward noise
$\sigma_\eta=0.15$, so the suboptimality gap is $\Delta_2=0.1$; the
three-arm instance has $\mu=(0.05,~0.02,~0.01)$ with $\sigma_\eta=0.02$
and $\Delta_2=0.03$, allowing us to evaluate the algorithm in smaller-reward cases.
To evaluate performance with more arms, we also consider a ten-arm
instance with $\mu=(0.1,~0.05,~0.05,~0.05,~0.02,~0.02,~0.01,~0.01,~0.01,~0.01)$,
reward noise $\sigma_\eta=0.05$, and $\Delta_2=0.05$.
This instance adds several nearly tied suboptimal arms to the lower-reward
setup, following \citep{Komiyama2016BanditThesis}.
We choose these instances because their gap structure stresses the freeze
regime studied in our underestimation analysis: the gap between the best
and second-best arms is small or comparable relative to the reward noise,
so an underestimated best arm can be difficult to distinguish from the
runner-up.

\textbf{Baselines and protocol.}
We compare ReMax ($M=2$) against TS and KL-UCB. 
For $M=2$, we can analytically compute the ReMax optima in Equation~\eqref{eq:remax-policy} as in Appendix~\ref{app:exp-details}.
All three methods use the same
improper-prior initialization, in which each arm is pulled once before
the selection rule is applied. 
For each instance, we set the horizon to
$T=20{,}000$ and average over $1{,}000$ independent reward-noise
realizations with the same fixed mean vector. We report the
across-run mean of each cumulative statistic together with a one
standard-error band.
Further details are provided in Appendix~\ref{app:exp-details}.

\textbf{Metrics.}
We track two quantities: (i) cumulative regret
$\mathrm{Reg}(T)=\sum_{t=1}^T (\mu_1 - \mu_{I(t)})$, and (ii) cumulative
under\-estimation $\sum_{t=1}^T \mathbf{1}\{\hat\mu_1(t) < \mu_2\}$, namely the
number of rounds in which the empirical mean of the best arm has been pushed below the second-best mean as the proxy for $\mathbf{1}\{\hat{\mu}_1 < \mu_1 - \epsilon\}$.

\paragraph{Results}
Figure~\ref{fig:freq_regret_main} shows cumulative regret on the three instances.
ReMax outperforms both baselines in all instances.
Figure~\ref{fig:freq_underestimation_main} reports cumulative underestimation.
In the two- and three-arm instances, ReMax has a similar number of underestimation rounds as TS.
In the more challenging ten-arm instance, however, ReMax suffers more underestimation than the baselines, and its regret advantage becomes smaller.
These results support our claim that ReMax benefits from its more
exploitative behavior when underestimation is mild, whereas its advantage
diminishes when underestimation becomes more likely.
A more detailed analysis is provided in
Appendix~\ref{app:separated-regret}.

\begin{figure}[t]
    \centering
    \includegraphics[width=1.0\linewidth]{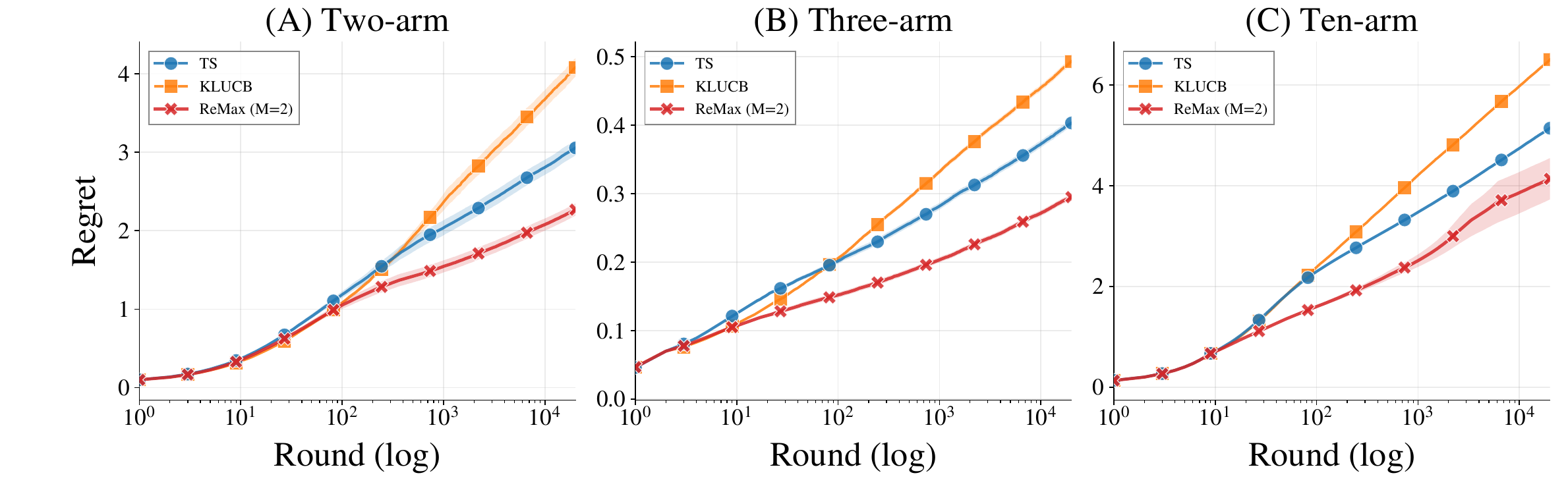}
    \caption{Cumulative regret on the three Gaussian bandit instances ($T=20{,}000$; $1{,}000$ reward-noise replications). (A) Two-arm. (B) Three-arm. (C) Ten-arm. All methods pull each arm once at the start; cumulative quantities include this initialization phase.}
    \label{fig:freq_regret_main}
\end{figure}

\begin{figure}[t]
    \centering
    \includegraphics[width=1.0\linewidth]{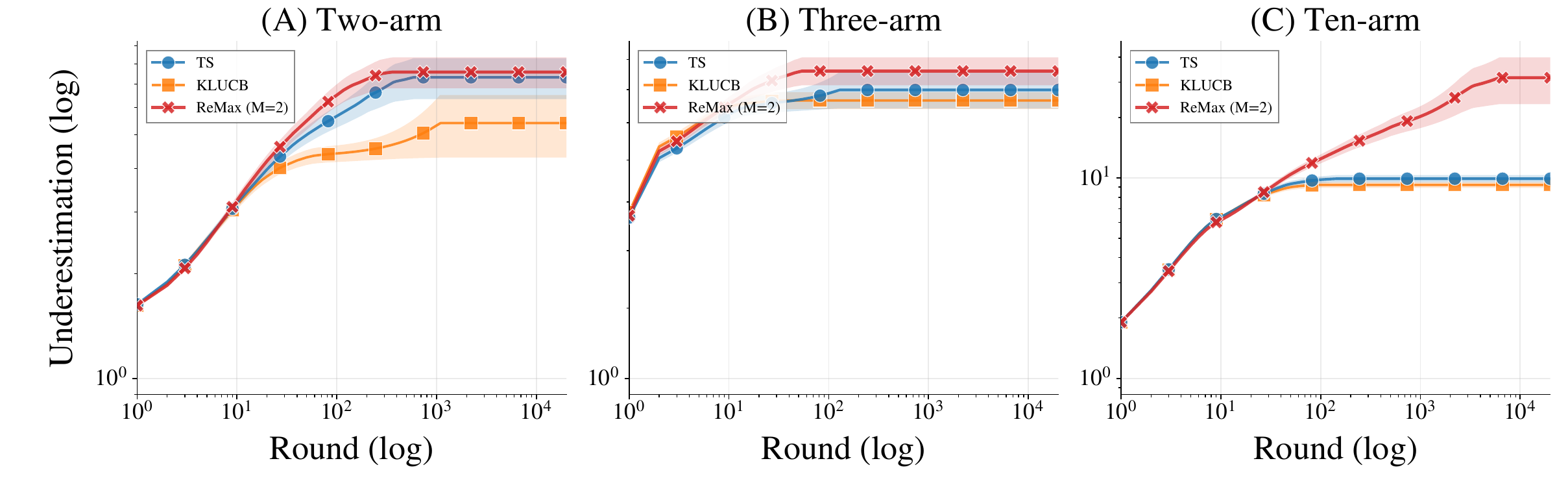}
    \caption{Cumulative under\-estimation $\sum_{t=1}^T \mathbf{1}\{\hat\mu_1(t) < \mu_2\}$ on the same instances as Figure~\ref{fig:freq_regret_main}. Plotted on log--log axes to expose the asymptotic rate.}
    \label{fig:freq_underestimation_main}
\end{figure}

\subsection{Real-world datasets}
\label{sec:real-world}

We further evaluate ReMax on bandit instances derived from real-world datasets.
Following \citet{komiyama2025rate}, we use the Open Bandit Dataset (OBD) \citep{saito2021open} and the MovieLens 1M dataset \citep{harper2015movielens}.
These datasets allow us to test whether the empirical advantage of ReMax extends beyond the synthetic instances in Section~\ref{sec:synthetic-bandits}.

\textbf{Open Bandit Dataset.}
OBD is a logged bandit dataset collected from a fashion e-commerce platform,
containing click feedback for multiple advertisements.
We treat each advertisement as an arm and use its empirical click-through
rate (CTR) as the arm's mean reward, yielding a bandit instance with
$K=80$ arms.
To obtain a Gaussian bandit instance, we normalize the CTRs by the average
standard deviation of click outcomes across arms.
Following \citet{komiyama2025rate}, we model one pull as an aggregate of
$10^3$ impressions, which scales the Gaussian mean by $\sqrt{10^3}$.
Thus, each arm becomes a unit-variance Gaussian arm with mean proportional
to its normalized CTR.
This transformation preserves the ordering and relative gaps of the
empirical CTRs while converting the logged click data into a standard
Gaussian bandit instance.

\textbf{MovieLens 1M.}
We also construct a Gaussian bandit instance from the MovieLens 1M dataset.
MovieLens 1M contains user ratings for movies.
Following \citet{komiyama2025rate}, we keep movies with sufficiently many ratings and regard each remaining movie as an arm.
The mean reward of each arm is defined as the average rating of the corresponding movie, normalized by the average standard deviation of ratings across movies.
This yields a Gaussian bandit instance with $K=31$ arms.
Compared with OBD, this benchmark provides a complementary real-world setting in which rewards are derived from user ratings rather than click feedback.

\textbf{Experimental protocol.}
We use the same baselines and the same optimization procedure for ReMax as in Section~\ref{sec:synthetic-bandits}.
For OBD, we set the horizon to $T=3{,}000$.
For MovieLens 1M, we set the horizon to $T=10{,}000$.
For each method and each dataset, we report the mean and standard error of cumulative regret over $100$ independent runs.
More detailed settings are provided in Appendix~\ref{app:exp-details}.

\begin{wrapfigure}{r}{0.55\textwidth}
    \centering
    \includegraphics[width=\linewidth]{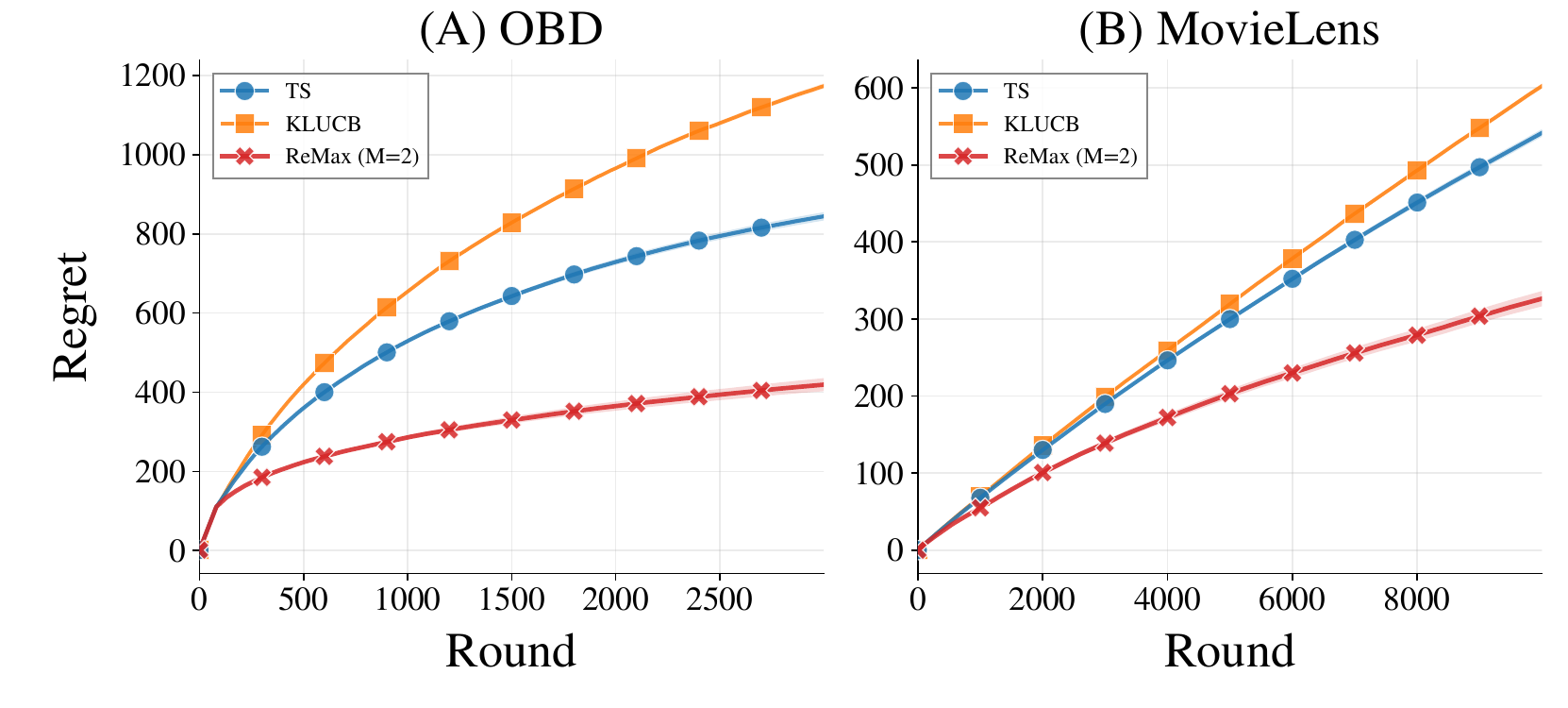}
    \caption{
    Cumulative regret on two real-world datasets:
    (A) Open Bandit Dataset
    (B) MovieLens 1M.
    }
    \label{fig:real_world_regret_main}
\end{wrapfigure}

\paragraph{Results}
Figure~\ref{fig:real_world_regret_main} shows cumulative regret on the two real-world datasets.
ReMax outperforms the baselines on both OBD and MovieLens 1M, indicating that its empirical advantage is not limited to the synthetic instances.
These results are especially relevant in real-data-derived bandit instances, where the gaps between competitive arms can be small and accurate exploration is important.
The OBD results reflect online advertising scenarios with small CTRs and many similar advertisements, while the MovieLens results show that the same pattern appears in a distinct user-rating domain.
More detailed results, including regret decomposition, are provided in Appendix~\ref{app:separated-regret}.

\paragraph{Remark on the results}
The frequentist experiments support the theoretical picture: when underestimation is not severe, ReMax performs well because of its highly exploitative nature.
Its favorable results on real-world data-derived instances further suggest that this behavior may be practically beneficial.
However, as discussed in Section~\ref{subsec_threearminstance}, there should exist a failure mode in which underestimation is highly likely and ReMax consequently incurs larger regret, as indicated by the $T^{2/3}$ bound.
In Appendix~\ref{app:variance_inflation}, we investigate such a failure mode and show that variance inflation can mitigate it. 
The main experiments cover $M=2$, which is the case analyzed in this paper.
The ReMax objective
$J_M(\pi)$
is well-defined for any $M\geq 2$, and larger $M$ is expected to induce
more exploratory optimal policies. Although computing the exact optimal
policy is difficult for $M>2$, it can be approximated by gradient ascent following \citep{nishimori2026emergence};
we call this approximation \textbf{ReMaxGrad} and report its results in
Appendix~\ref{app:effect-of-M}.

\section{Discussion}
\label{sec:discussion}

\paragraph{Limitations}
Our regret analysis currently covers only the $M=2$ case, where the underestimation term scales as $\tilde{O}(T^{2/3})$, which is looser than the $\tilde{O}(1)$ underestimation regret of popular algorithms like UCB and Thompson sampling. 
This highlights the main bottleneck of our ReMax analysis: when the optimal arm is underestimated, we could not exclude a substantially longer period to recover, which we illustrate with an example in Section~\ref{subsec_threearminstance}. However, our experiments in Section~\ref{sec:experiment} show that the regret due to underestimation grows very slowly in $T$, suggesting that the $\tilde{O}(T^{2/3})$ term can be significantly reduced by refining the analysis. We also note that the underestimation term can be further empirically reduced by choosing $M > 2$; see Appendix~\ref{app:effect-of-M}.

\paragraph{Conclusion}
We have studied the ReMax bandit algorithms through the lens of retry-aware exploration. 
In simulations, ReMax often outperforms existing asymptotically optimal algorithms such as KL-UCB and Thompson sampling.
The main structural insight is an expected-improvement balance condition that balances the marginal improvement by the $M$-th arm. The main theoretical result is a first sublinear regret guarantee for the Gaussian $M=2$ setting. More broadly, the analysis suggests that best-of-$k$ objectives can induce exploration for principled reasons.
Pass@$k$ and max@$k$ objectives, which are widely used in the field of large language models, can be justified in view of exploration and exploitation balance \citep{lairobbins1985}.
Despite its strong performance, the regret bound we have derived is looser than that of asymptotically optimal algorithms, such as KL-UCB \citep{cappe2013}, DMED \citep{DBLP:conf/colt/HondaT10}, and TS \citep{DBLP:conf/alt/KaufmannKM12,agrawalgoyalaistats2013}. 
Since the ReMax algorithm exploits more aggressively than TS, its analysis is more challenging (Section~\ref{sec:theory}), and refining the regret bound is an interesting direction of future work. 

\begin{ack}
J.~Komiyama was supported by the MBZUAI Start-up Fund [BF0121].
\end{ack}

\section*{Author Contributions}

\textbf{Paavo Parmas:} Proposed the algorithm, created the initial implementation, and performed preliminary experiments on Gaussian bandit tasks. Derived the initial Expected Improvement balance condition and initiated the theoretical collaboration with Junpei Komiyama. Oversaw the project, contributed to discussions on the algorithm's efficacy, proposed variance inflation to mitigate the underestimation issue, and reviewed the manuscript.

\textbf{Soichiro Nishimori:} Led the final implementation and experimental evaluation. Contributed to discussions regarding the algorithm's mechanics and reviewed the manuscript.

\textbf{Junpei Komiyama:} Conducted the initial theoretical analysis and supervised the subsequent theoretical work.

\textbf{Bingkui Tong:} Completed the theoretical analysis and proofs based on the initial framework. Authored the manuscript.

\bibliographystyle{plainnat}
\bibliography{refs}

\appendix
\clearpage
\section{Proof of Proposition~\ref{prop:ei-balance}}
\label{app:ei-proof}

\begin{proof}[Proof of Proposition~\ref{prop:ei-balance}]
Fix a round $t$ and abbreviate
\[
J(\pi) := J_{\mathrm{ReMax}}^{(M)}(\pi;\Pi_{t-1}).
\]
For a fixed posterior sample $\theta$, define
\[
v(\pi \mid \theta) := \mathbb{E}_{A_{1:M}\sim \pi}\left[\max_{m\in[M]} \theta_{A_m}\right].
\]

Expanding over all realizations of $a_1,\dots,a_M\in[K]$, we have
\[
v(\pi \mid \theta) = \sum_{a_1=1}^K \cdots \sum_{a_M=1}^K \left(\prod_{m=1}^M \pi_{a_m}\right)\max\{\theta_{a_1},\dots,\theta_{a_M}\}.
\]

We differentiate $v(\pi \mid \theta)$ with respect to $\pi_i$. By the product rule,
\[
\frac{\partial}{\partial \pi_i}\left(\prod_{m=1}^M \pi_{a_m}\right)
= \sum_{k=1}^M \mathbf{1}\{a_k=i\}\prod_{m\neq k}\pi_{a_m}.
\]
Substituting this into the expansion gives
\begin{align}
\frac{\partial v(\pi \mid \theta)}{\partial \pi_i}
&= \sum_{a_1,\dots,a_M}\left(\sum_{k=1}^M \mathbf{1}\{a_k=i\}\prod_{m\neq k}\pi_{a_m}\right)\max\{\theta_{a_1},\dots,\theta_{a_M}\} \\
&= \sum_{k=1}^M \sum_{a_1,\dots,a_M} \mathbf{1}\{a_k=i\}\left(\prod_{m\neq k}\pi_{a_m}\right)\max\{\theta_{a_1},\dots,\theta_{a_M}\}.
\end{align}

By exchangeability of the $M$ i.i.d. draws, each term in the outer sum is identical. For any fixed position $k$, the constraint $a_k=i$ fixes the $k$-th sample to arm $i$, while the remaining $M-1$ indices are distributed according to $\pi$. Hence each term equals
\[
\mathbb{E}_{A_{1:M-1}\sim \pi}\left[\max\{\theta_i,W_{M-1}\}\right],
\]
where
\[
W_{M-1}:=\max_{m\in[M-1]}\theta_{A_m}.
\]
Therefore,
\[
\frac{\partial v(\pi \mid \theta)}{\partial \pi_i}
= M\,\mathbb{E}_{A_{1:M-1}\sim \pi}\left[\max\{\theta_i,W_{M-1}\}\right].
\]

Using the identity $\max(a,b)=(a-b)_+ + b$, we obtain
\[
\max\{\theta_i,W_{M-1}\} = (\theta_i-W_{M-1})_+ + W_{M-1},
\]
and hence
\begin{align}
\frac{\partial v(\pi \mid \theta)}{\partial \pi_i}
&= M\left(\mathbb{E}_{A_{1:M-1}\sim \pi}[(\theta_i-W_{M-1})_+] + \mathbb{E}_{A_{1:M-1}\sim \pi}[W_{M-1}]\right) \\
&= M\left(s_{t,i}(\pi,\theta) + \mathbb{E}_{A_{1:M-1}\sim \pi}[W_{M-1}]\right).
\end{align}

Averaging over $\theta\sim\Pi_{t-1}$, we get
\begin{align}
\frac{\partial J(\pi)}{\partial \pi_i}
&= \mathbb{E}_{\theta\sim\Pi_{t-1}}\left[\frac{\partial v(\pi \mid \theta)}{\partial \pi_i}\right] \\
&= M\left(\mathbb{E}_{\theta\sim\Pi_{t-1}}[s_{t,i}(\pi,\theta)] + \underbrace{\mathbb{E}_{\theta\sim\Pi_{t-1}}\left[\mathbb{E}_{A_{1:M-1}\sim \pi}[W_{M-1}]\right]}_{=:C_t(\pi)}\right).
\end{align}

Evaluating this at the optimizer $\pi_t$ gives
\[
\frac{\partial J(\pi_t)}{\partial \pi_i}
= M\left(\bar{s}_{t,i} + C_t(\pi_t)\right),
\]
where
\[
\bar{s}_{t,i} := \mathbb{E}_{\theta\sim\Pi_{t-1}}[s_{t,i}(\pi_t,\theta)].
\]

The KKT conditions on the simplex imply that
\[
\frac{\partial J(\pi_t)}{\partial \pi_i} = \frac{\partial J(\pi_t)}{\partial \pi_j}
\quad \text{for } i,j\in\mathrm{supp}(\pi_t),
\]
and
\[
\frac{\partial J(\pi_t)}{\partial \pi_j} \ge \frac{\partial J(\pi_t)}{\partial \pi_k}
\quad \text{for } j\in\mathrm{supp}(\pi_t),\ k\notin\mathrm{supp}(\pi_t).
\]
Since $C_t(\pi_t)$ depends only on $\pi_t$ and $M$, but not on the arm index, this proves the proposition.
\end{proof}

\section{Full proof of Theorem~\ref{thm:main}}
\label{app:full_proof}
We prove Theorem~\ref{thm:main} by decomposing the expected regret into two parts:
\begin{align}
\mathbb{E}[\mathrm{Reg}(T)] &= \sum_{i=2}^K \Delta_i \left[\sum_{t=1}^T \mathbb{P}\left[\hat{\mu}_1(t) \le \mu_1 - \varepsilon,I(t)=i\right]+\mathbb{P}\left[\hat{\mu}_1(t) > \mu_1 - \varepsilon,I(t)=i\right]\right]\nonumber\\
&\le \Delta_{\max} \sum_{t=1}^T\mathbb{P}\left[\hat{\mu}_1(t) \le \mu_1 - \varepsilon\right] + \sum_{i=2}^K \Delta_i \sum_{t=1}^T \mathbb{P}\left[\hat{\mu}_1(t) > \mu_1 - \varepsilon,I(t)=i\right]\nonumber
\end{align}
The first term corresponds to the underestimation regime, while the second term corresponds to the regime in which the optimal arm is properly estimated.
We bound these two terms separately in the following subsections.

\subsection{Bounding the underestimation term}
We now bound the contribution from rounds in which the optimal arm is underestimated.  
Let $\hat{\mu}_{i,n}$ denote the empirical mean of arm $i$ after $n$ pulls.  
For any $\hat{\mu}_{1,n}\le \mu_1-\varepsilon$, define
\begin{equation}
E := \frac{1}{4}\,\mathbb{E}_{\theta_1 \sim \mathcal{N}(\hat{\mu}_{1,n},\,1/n)}
\left[\max\left(0,\theta_1-(\mu_1-\varepsilon)\right)\right].
\end{equation}
Intuitively, $E$ will serve as a lower bound on $\bar{s}_{t,1}$. Note that $E$ depends only on $n$ and $\hat{\mu}_{1,n}$, and is conditionally independent of the other quantities given these.

To describe the structure of the posterior sampling distribution under underestimation, we define
\begin{equation}
\mathcal{S}(t) := \left\{i\in[K]: N_i(t)\ge \max\left(\frac{4}{\pi E^2},\,n+1\right)\right\},
\end{equation}
\begin{equation}
\mathcal{H}(t) := \left\{i\in \mathcal{S}(t): i\neq 1,\ \hat{\mu}_i(t)>\mu_1-\varepsilon\right\},
\end{equation}
and
\begin{equation}
\mathcal{B}(t) := \mathcal{S}(t)\setminus \mathcal{H}(t).
\end{equation}
We also write
\begin{equation}
\pi_t^{\mathcal{S}} := \sum_{i\in \mathcal{S}(t)} \pi_{t,i},
\qquad
\pi_t^{\mathcal{H}} := \sum_{i\in \mathcal{H}(t)} \pi_{t,i},
\end{equation}
and define the events
\begin{equation}
\mathcal{X}_u(t) := \left\{\pi_t^{\mathcal{H}} \ge \frac{1}{4}\right\},
\qquad
\mathcal{Y}_u(t) := \left\{\pi_t^{\mathcal{S}} \ge \frac{3}{4}\right\}.
\end{equation}

Based on these events, we decompose the underestimation rounds as
\begin{equation}
\sum_t \mathbf{1}\{\hat{\mu}_1(t)\le \mu_1-\varepsilon\}
= \underbrace{\sum_t \mathbf{1}\{\hat{\mu}_1(t)\le \mu_1-\varepsilon,\ \mathcal{X}_u(t)\}}_{\mathrm{(L_{u,1})}}
+ \underbrace{\sum_t \mathbf{1}\{\hat{\mu}_1(t)\le \mu_1-\varepsilon,\ \mathcal{X}_u^c(t)\}}_{\mathrm{(L_{u,2})}}.
\label{eq:underestimation-decomposition}
\end{equation}
Here, $\mathrm{(L_{u,1})}$ corresponds to the case where a nontrivial amount of probability mass is assigned to $\mathcal{H}(t)$, that is, to arms that are already sufficiently sampled but still appear overly optimistic. The term $\mathrm{(L_{u,2})}$ is the complementary case where sufficiently sampled arms do not carry enough probability mass.

We begin by lower-bounding the auxiliary quantity $E$.

\begin{lemma}[Lower Bound on $E$]\label{lem_bound_e}
For any $\hat{\mu}_{1,n}\le \mu_1-\varepsilon$, we have
\begin{equation}
E \ge \frac{1}{4\sqrt{2\pi n}}\frac{1}{3+n(\mu_1-\varepsilon-\hat{\mu}_{1,n})^2}\exp\left(-nD(\mu_1-\varepsilon\|\hat{\mu}_{1,n})\right).
\end{equation}
\end{lemma}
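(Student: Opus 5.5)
We reduce the statement to a Mills-ratio bound for the standard Gaussian. Write $a := \mu_1-\varepsilon-\hat{\mu}_{1,n}\ge 0$ and $\sigma := 1/\sqrt{n}$. Then $\theta_1 - (\mu_1-\varepsilon)\sim \mathcal{N}(-a,\sigma^2)$. By the closed form in Equation~\eqref{eq:gij-closed-form} (applied with mean $\delta=-a$ and standard deviation $\sigma$), and with $\gamma := a\sqrt{n}\ge 0$,
\[
4E = \sigma\phi(\gamma) - a\Phi(-\gamma) = \sigma\bigl(\phi(\gamma)-\gamma(1-\Phi(\gamma))\bigr).
\]
For unit-variance Gaussians, $D(\mu_1-\varepsilon\|\hat{\mu}_{1,n}) = a^2/2$. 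Hence $\exp(-nD)=\exp(-\gamma^2/2)=\sqrt{2\pi}\,\phi(\gamma)$, and the target inequality becomes
\[
\frac{1}{\sqrt n}\bigl(\phi(\gamma)-\gamma\bar\Phi(\gamma)\bigr)\ \ge\ \frac{1}{\sqrt{2\pi n}}\cdot\frac{\sqrt{2\pi}\,\phi(\gamma)}{3+\gamma^2},
\]
where $\bar\Phi := 1-\Phi$. This is equivalent to the dimensionless inequality
\[
\psi(\gamma):=\phi(\gamma)-\gamma\bar\Phi(\gamma)\ \ge\ \frac{\phi(\gamma)}{3+\gamma^2}\qquad(\gamma\ge 0).
\]

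The key step is an upper bound on the Gaussian tail sharp enough to leave a $\phi(\gamma)/(\gamma^2+3)$ remainder. The classical bound $\bar\Phi(\gamma)\le \phi(\gamma)/\gamma$ only gives $\psi\ge 0$, so one more term of the asymptotic expansion is needed. We aim for
\[
\bar\Phi(\gamma)\le \phi(\gamma)\,\frac{\gamma^2+2}{\gamma(\gamma^2+3)},
\]
which gives exactly
\[
\psi(\gamma)\ge\phi(\gamma)\Bigl(1-\frac{\gamma^2+2}{\gamma^2+3}\Bigr)=\frac{\phi(\gamma)}{\gamma^2+3}.
\]
To prove this tail bound, define $h(\gamma):=\phi(\gamma)\frac{\gamma^2+2}{\gamma(\gamma^2+3)}-\bar\Phi(\gamma)$. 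It satisfies $h(\gamma)\to 0$ as $\gamma\to\infty$, so it suffices to show $h'(\gamma)\le 0$ for $\gamma>0$. Using $\phi'=-\gamma\phi$ and $\bar\Phi'=-\phi$, the sign of $h'$ reduces to a polynomial inequality in $\gamma$. Compute $r(\gamma)=(\gamma^2+2)/(\gamma^3+3\gamma)$; the condition $h'\le0$ becomes $r'-\gamma r+1\le 0$. Multiplying through by $(\gamma^3+3\gamma)^2$ turns this into checking a polynomial identity, which comes out to $-6\le 0$. This is the standard Shenton/Laplace continued-fraction bound.

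The case $\gamma=0$ needs separate treatment, since the tail bound has $\gamma$ in the denominator. There $\psi(0)=\phi(0)\ge\phi(0)/3$ directly, and alternatively one can argue by continuity.

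The main obstacle is getting the right rational correction in the Mills-ratio bound and verifying the derivative sign cleanly. If the polynomial check fails with the constant $3$, we fall back to the alternative route $\psi(\gamma)=\int_\gamma^\infty \bar\Phi(u)\,du$. Integrating the lower tail bound $\bar\Phi(u)\ge \phi(u)\,u/(u^2+1)$ does not directly give the result, so in that case we would instead verify monotonicity of $(\gamma^2+3)\psi(\gamma)/\phi(\gamma)$, whose limit as $\gamma\to\infty$ is $1$.
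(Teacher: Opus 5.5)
Your proposal is correct and follows essentially the paper's route: write $E=\frac{1}{4\sqrt n}\bigl(\phi(\gamma)-\gamma\bar\Phi(\gamma)\bigr)$ via the Gaussian positive-part closed form, then apply the Mills-ratio bound $\bar\Phi(\gamma)\le \phi(\gamma)\frac{\gamma^2+2}{\gamma(\gamma^2+3)}$. The paper simply cites that bound from Mukherjee et al.; you prove it yourself, and your computation $r'-\gamma r+1=-6/(\gamma^3+3\gamma)^2$ is right, so the fallback route is unnecessary, while your separate $\gamma=0$ check also covers the boundary case $\hat\mu_{1,n}=\mu_1-\varepsilon$, which the paper tacitly excludes by taking $\delta>0$.
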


\begin{proof}[Proof of Lemma~\ref{lem_bound_e}]
Let $\phi$ and $\Phi$ denote the PDF and CDF of $\mathcal{N}(0,1)$, and let $\delta := \mu_1-\varepsilon-\hat{\mu}_{1,n}>0$. Then
\begin{align}
E
&= \frac{1}{4}\,\mathbb{E}_{\theta_1 \sim \mathcal{N}(\hat{\mu}_{1,n},\,1/n)}
\left[\max\left(0,\theta_1-(\mu_1-\varepsilon)\right)\right] \\
&= \frac{1}{4\sqrt{n}}\left(\phi(\delta\sqrt{n})-\delta\sqrt{n}\,\Phi^c(\delta\sqrt{n})\right) \\
&\ge \frac{1}{4\sqrt{n}}\frac{1}{3+\delta^2 n}\phi(\delta\sqrt{n})
\tag{by $\Phi^c(x)\le \frac{1}{x}\cdot\frac{2+x^2}{3+x^2}\cdot\phi(x)$ \citep{mukherjee2016mills}} \\
&= \frac{1}{4\sqrt{2\pi n}}\frac{1}{3+\delta^2 n}\exp\left(-nD(\mu_1-\varepsilon\|\hat{\mu}_{1,n})\right).
\end{align}
\end{proof}

Lemma~\ref{lem_bound_e} shows that even when the optimal arm is underestimated, it still retains a positive and quantifiable expected improvement.

We next upper-bound pairwise expected improvements between sufficiently sampled arms.

\begin{lemma}[Bound on Expected Improvement with Large $n$]\label{lem_diff_ei}
Let $i,j\in[K]$ satisfy $N_i(t),N_j(t)\ge n$ and $\hat{\mu}_i(t)\ge \hat{\mu}_j(t)$. Then
\begin{equation}
G_{j,i} \le \frac{1}{\sqrt{\pi n}}.
\end{equation}
\end{lemma}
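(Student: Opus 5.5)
The plan is to use the closed form~\eqref{eq:gij-closed-form} for $G_{ji}(t)=\mathbb{E}[(\theta_j-\theta_i)_+]$ with $\theta_j-\theta_i\sim\N(\delta_{ji},\sigma_{ji}^2)$, where $\delta_{ji}=\hat\mu_j(t)-\hat\mu_i(t)\le 0$ and $\sigma_{ji}^2=1/N_i(t)+1/N_j(t)\le 2/n$. The bound then follows from two monotonicity facts about $g(\delta,\sigma):=\mathbb{E}[(X)_+]$ for $X\sim\N(\delta,\sigma^2)$, after which the remaining case $\delta=0$ is computed exactly.

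First, monotonicity in the mean. The map $x\mapsto (x)_+$ is nondecreasing, so $g(\delta,\sigma)$ is nondecreasing in $\delta$. This is immediate by coupling $X=\delta+\sigma Z$, or from $\partial_\delta g=\Phi(\delta/\sigma)\ge 0$. Since $\delta_{ji}\le 0$, we get $G_{ji}\le g(0,\sigma_{ji})$.

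Second, monotonicity in the scale at $\delta=0$. We have $g(0,\sigma)=\sigma\,\mathbb{E}[Z_+]=\sigma\phi(0)=\sigma/\sqrt{2\pi}$, which is increasing in $\sigma$. With $\sigma_{ji}\le\sqrt{2/n}$ this gives
\[
G_{ji}\le \frac{1}{\sqrt{2\pi}}\sqrt{\frac{2}{n}}=\frac{1}{\sqrt{\pi n}},
\]
which is the claim.

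There is no serious obstacle. The only points to check are that the improper-prior posterior variances are exactly $1/N_i(t)$ and $1/N_j(t)$, and that $\theta_i$ and $\theta_j$ are independent, both of which are part of the definition of $G_{ij}$. Note that one should not try to use the non-centered closed form $\sigma\phi(\gamma)+\delta\Phi(\gamma)$ term by term, since bounding $\phi(\gamma)\le\phi(0)$ and dropping the nonpositive term $\delta\Phi(\gamma)$ already suffices. The mean-monotonicity step is simply this observation.
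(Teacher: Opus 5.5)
Your proposal is correct and matches the paper's proof: both first shift the mean of $\theta_j-\theta_i$ from $\hat\mu_j(t)-\hat\mu_i(t)\le 0$ up to $0$, then enlarge the variance to $2/n$, and finally evaluate $\sqrt{2/n}\,\mathbb{E}[Z_+]=1/\sqrt{\pi n}$. Your closing remark is oddly worded but harmless, since the term-by-term bound $\sigma\phi(\gamma)+\delta\Phi(\gamma)\le\sigma\phi(0)$ for $\delta\le 0$ gives the same result.
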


\begin{proof}[Proof of Lemma~\ref{lem_diff_ei}]
\begin{align}
\mathbb{E}_{X \sim \mathcal{N}\left(\hat{\mu}_j(t)-\hat{\mu}_i(t),\, \frac{1}{N_i(t)}+\frac{1}{N_j(t)}\right)}[\max\{0,X\}]
&\le \mathbb{E}_{X \sim \mathcal{N}\left(0,\, \frac{1}{N_i(t)}+\frac{1}{N_j(t)}\right)}[\max\{0,X\}] \\
&\le \mathbb{E}_{X \sim \mathcal{N}\left(0,\, \frac{2}{n}\right)}[\max\{0,X\}] \\
&= \sqrt{\frac{2}{n}}\,\mathbb{E}_{Z\sim\mathcal{N}(0,1)}[\max\{0,Z\}]
= \frac{1}{\sqrt{\pi n}}.
\end{align}
\end{proof}

Applying Lemma~\ref{lem_diff_ei} to arms in $\mathcal{S}(t)$ gives the following corollary.

\begin{corollary}[Further Bound for Sufficiently Sampled Arms]\label{cor_diff_ei_sets}
Let $i,l(t)\in \mathcal{S}(t)$, where
\begin{equation}
l(t) := \argmin_{i\in \mathcal{S}(t),\,\pi_{t,i}\neq 0}\hat{\mu}_i(t).
\end{equation}
Then
\begin{equation}
G_{l(t),i} \le \frac{E}{2}.
\end{equation}
\end{corollary}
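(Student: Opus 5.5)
This is a direct application of Lemma~\ref{lem_diff_ei} with the threshold built into the definition of $\mathcal{S}(t)$. The plan is to verify the two hypotheses of that lemma for the pair $(j,i)=(l(t),i)$, take the sample-count parameter to be $n' := \max\left(\frac{4}{\pi E^2},\,n+1\right)$, and then convert the resulting bound $1/\sqrt{\pi n'}$ into $E/2$.

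First, the ordering hypothesis. By definition, $l(t)$ minimizes $\hat\mu$ over arms of $\mathcal{S}(t)$ in the support of $\pi_t$. For any $i \in \mathcal{S}(t)$ with $\pi_{t,i}>0$ we therefore have $\hat\mu_i(t)\ge \hat\mu_{l(t)}(t)$. If the statement is meant for every $i\in\mathcal{S}(t)$, including arms outside the support, I would restrict to support arms, since those are the only ones entering $\bar s_{t,l(t)}=\sum_j \pi_{t,j}G_{l(t),j}$ via Equation~\eqref{eq:EI_weighted_sum}. For $i=l(t)$ the ordering is trivial. This is the only point needing care, and I would state the corollary as applying to $i\in\mathcal{S}(t)\cap\supp(\pi_t)$.

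Second, the count hypothesis. Both $i$ and $l(t)$ lie in $\mathcal{S}(t)$, so $N_i(t),N_{l(t)}(t)\ge n'$. Lemma~\ref{lem_diff_ei} only uses the counts through the bound $\frac{1}{N_i}+\frac{1}{N_j}\le \frac{2}{n'}$, so it holds for real $n'$ as well. It gives
\[
G_{l(t),i}\le \frac{1}{\sqrt{\pi n'}}\le \frac{1}{\sqrt{\pi\cdot 4/(\pi E^2)}}=\frac{E}{2},
\]
using $n'\ge 4/(\pi E^2)$ and $E>0$. The positivity of $E$ follows from Lemma~\ref{lem_bound_e}, or simply from the fact that a nondegenerate Gaussian has positive expected positive part.
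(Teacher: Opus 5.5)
Your proof is correct and matches the paper's argument: both combine the threshold $4/(\pi E^2)$ built into $\mathcal{S}(t)$ with Lemma~\ref{lem_diff_ei} to obtain $G_{l(t),i}\le 1/\sqrt{\pi\cdot 4/(\pi E^2)}=E/2$. Restricting to $i\in\mathcal{S}(t)\cap\supp(\pi_t)$ is actually necessary, and it costs nothing. The paper's proof asserts that $l(t)$ has the smallest empirical mean in all of $\mathcal{S}(t)$, which fails for zero-weight arms, where $G_{l(t),i}\ge \hat\mu_{l(t)}(t)-\hat\mu_i(t)$ can exceed $E/2$. Only support arms contribute to $\sum_{i\in\mathcal{S}(t)}\pi_{t,i}G_{l(t),i}$ in the Freeze Lemma, so the restricted version is all that is used.
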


\begin{proof}[Proof of Corollary~\ref{cor_diff_ei_sets}]
By the definition of $\mathcal{S}(t)$, both arms $i$ and $l(t)$ satisfy $N_i(t),N_{l(t)}(t)\ge 4/(\pi E^2)$. Since $l(t)$ has the smallest empirical mean in $\mathcal{S}(t)$, we have $\hat{\mu}_i(t)\ge \hat{\mu}_{l(t)}(t)$. Applying Lemma~\ref{lem_diff_ei} with $n=4/(\pi E^2)$ yields
\begin{equation}
G_{l(t),i} \le \frac{1}{\sqrt{\pi n}} = \frac{E}{2}.
\end{equation}
\end{proof}

We will also need a crude range-based upper bound.

\begin{lemma}[Bound on Expected Improvement with Bounded Range]\label{lem_range_ei}
Let $\hat{\mu}_i,\hat{\mu}_j\in[\mu_{\min},\mu_{\max}]$. Then
\begin{equation}
G_{j,i} \le \mu_{\max}-\mu_{\min}+\frac{1}{\sqrt{\pi}}.
\end{equation}
\end{lemma}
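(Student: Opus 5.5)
We bound $G_{j,i}=\mathbb{E}[(\theta_j-\theta_i)_+]$ by reducing to a centered Gaussian and then splitting off the mean gap. The variable $X:=\theta_j-\theta_i$ is Gaussian with mean $\delta_{ji}=\hat{\mu}_j-\hat{\mu}_i$ and variance $\sigma_{ji}^2=\frac{1}{N_i}+\frac{1}{N_j}$. Write $X=\delta_{ji}+\sigma_{ji}Z$ with $Z\sim\mathcal{N}(0,1)$. The positive part is subadditive, $(a+b)_+\le a_++b_+$, which gives
\begin{equation*}
(X)_+\le (\delta_{ji})_+ + \sigma_{ji}(Z)_+ .
\end{equation*}
Taking expectations yields $G_{j,i}\le(\delta_{ji})_+ + \sigma_{ji}\,\mathbb{E}[Z_+]$.

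The first term is controlled by the assumed range of the empirical means. Both $\hat{\mu}_i$ and $\hat{\mu}_j$ lie in $[\mu_{\min},\mu_{\max}]$, so $(\delta_{ji})_+\le \mu_{\max}-\mu_{\min}$.

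For the second term, we use $\mathbb{E}[Z_+]=1/\sqrt{2\pi}$, as in the proof of Lemma~\ref{lem_diff_ei}. Each arm has been pulled at least once because of the initial pull, so $N_i,N_j\ge1$. Hence $\sigma_{ji}\le\sqrt{2}$, and $\sigma_{ji}\mathbb{E}[Z_+]\le \sqrt{2}/\sqrt{2\pi}=1/\sqrt{\pi}$. This is exactly Lemma~\ref{lem_diff_ei} with $n=1$, and it can be cited in place of the direct computation.

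Adding the two bounds gives $G_{j,i}\le \mu_{\max}-\mu_{\min}+1/\sqrt{\pi}$. An alternative route goes through the closed form~\eqref{eq:gij-closed-form}, $G=\sigma\phi(\gamma)+\delta\Phi(\gamma)$, using $\phi\le 1/\sqrt{2\pi}$ and $\delta\Phi(\gamma)\le\delta_+$.

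The only real point to check is that $N_i,N_j\ge1$, which holds by the initial-pull convention. Everything else is elementary.
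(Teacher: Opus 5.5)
Your proof is correct and follows the paper's argument essentially line for line: subadditivity of the positive part splits off $(\hat\mu_j-\hat\mu_i)_+\le\mu_{\max}-\mu_{\min}$, and then $\sigma_{ji}\le\sqrt{2}$ together with $\mathbb{E}[Z_+]=1/\sqrt{2\pi}$ gives the $1/\sqrt{\pi}$ term. Your explicit appeal to the initial pulls for $N_i,N_j\ge 1$ states a step the paper uses silently. One small caveat: Lemma~\ref{lem_diff_ei} as stated assumes $\hat\mu_i\ge\hat\mu_j$, so you are reusing the centered-Gaussian computation in its proof rather than the lemma itself.
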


\begin{proof}[Proof of Lemma~\ref{lem_range_ei}]
Let $X\sim \mathcal{N}(0,1)$. By definition and the subadditivity $\max(a+b,0)\le \max(a,0)+\max(b,0)$,
\begin{align}
G_{j,i}
&= \mathbb{E}\left[\max\left(\hat{\mu}_j-\hat{\mu}_i+\sqrt{\frac{1}{N_j}+\frac{1}{N_i}}\,X,\,0\right)\right] \\
&\le \max(\hat{\mu}_j-\hat{\mu}_i,0) + \sqrt{\frac{1}{N_j}+\frac{1}{N_i}}\,\mathbb{E}[\max(X,0)] \\
&\le (\mu_{\max}-\mu_{\min}) + \sqrt{2}\cdot \frac{1}{\sqrt{2\pi}}
\le \mu_{\max}-\mu_{\min}+\frac{1}{\sqrt{\pi}}.
\end{align}
\end{proof}

We now show that $E$ indeed lower-bounds the expected improvement of arm $1$.

\begin{lemma}[Lower Bound on Expected Improvement of Arm $1$]\label{lem_eione_lowerbound}
Let
\begin{equation}
G_{i,j} := \mathbb{E}_{\theta\sim \Pi_{t-1}}\big[\max(\theta_i-\theta_j,0)\big].
\end{equation}
For $M=2$, the expected improvement of arm $j$ can be written as
\begin{equation}\label{eq:equiv-ei}
\bar{s}_{t,j} = \sum_{i\neq j}\pi_{t,i}G_{j,i}.
\end{equation}
Moreover, under the event $\{\hat{\mu}_1(t)<\mu_1-\varepsilon\}\cap \mathcal{X}_u^c(t)\cap \mathcal{Y}_u(t)$, we have
\begin{equation}
\bar{s}_{t,1}\ge E.
\end{equation}
\end{lemma}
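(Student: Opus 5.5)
We prove the two claims in turn: the representation \eqref{eq:equiv-ei} is a specialization of Proposition~\ref{prop:ei-balance}, and the lower bound comes from a mass-counting argument over $\mathcal{B}(t)$ combined with a Jensen step. Throughout, $n$ denotes the current pull count of arm $1$, i.e.\ $N_1(t)=n$ and $\hat{\mu}_1(t)=\hat{\mu}_{1,n}$, which is how $E$ is indexed.

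\textbf{Step 1: the representation \eqref{eq:equiv-ei}.} For $M=2$ we have $W_{M-1}=\theta_{A_1}$ with a single draw $A_1\sim\pi_t$. Hence
\[
s_{t,j}(\pi_t,\bm\theta)=\sum_{i}\pi_{t,i}(\theta_j-\theta_i)_+ .
\]
The $i=j$ term vanishes because $(\theta_j-\theta_j)_+=0$; this uses the paper's convention that a repeated arm reuses the same posterior sample. Taking the expectation over $\bm\theta\sim\Pi_{t-1}$ and using linearity gives $\bar s_{t,j}=\sum_{i\neq j}\pi_{t,i}G_{j,i}$.

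\textbf{Step 2: locating the mass.} On $\mathcal{Y}_u(t)$ we have $\pi_t^{\mathcal S}\ge 3/4$, and on $\mathcal{X}_u^c(t)$ we have $\pi_t^{\mathcal H}<1/4$. Therefore
\[
\sum_{i\in\mathcal{B}(t)}\pi_{t,i}=\pi_t^{\mathcal S}-\pi_t^{\mathcal H}>\tfrac12 .
\]
Every arm in $\mathcal{S}(t)$ satisfies $N_i(t)\ge n+1>N_1(t)$, so $1\notin\mathcal{S}(t)$. Consequently every $i\in\mathcal{B}(t)$ is a suboptimal arm with $i\ne 1$ and $\hat{\mu}_i(t)\le \mu_1-\varepsilon$.

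\textbf{Step 3: lower-bounding $G_{1,i}$ for $i\in\mathcal{B}(t)$.} Condition on $\theta_1$ and use the independence of $\theta_1$ and $\theta_i$ under the posterior. The map $x\mapsto(\theta_1-x)_+$ is convex, so Jensen's inequality in $\theta_i$ gives
\[
\mathbb{E}_{\theta_i}\bigl[(\theta_1-\theta_i)_+\bigr]\ge(\theta_1-\hat{\mu}_i(t))_+ .
\]
Since the map is also nonincreasing in $x$ and $\hat\mu_i(t)\le\mu_1-\varepsilon$, the right-hand side is at least $(\theta_1-(\mu_1-\varepsilon))_+$. Now take the expectation over $\theta_1\sim\mathcal{N}(\hat\mu_{1,n},1/n)$. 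This yields
\[
G_{1,i}\ge \mathbb{E}\bigl[(\theta_1-(\mu_1-\varepsilon))_+\bigr]=4E .
\]

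\textbf{Step 4: combining.} All terms in \eqref{eq:equiv-ei} are nonnegative, so we may drop every arm outside $\mathcal{B}(t)$:
\[
\bar s_{t,1}\ge\sum_{i\in\mathcal B(t)}\pi_{t,i}G_{1,i}\ge \tfrac12\cdot 4E=2E\ge E .
\]
None of these steps is a serious obstacle. The points that need care are the following.
\begin{itemize}
\item Arm $1$ must be excluded from $\mathcal{B}(t)$, which is exactly why the threshold $n+1$ appears in the definition of $\mathcal S(t)$.
\item The Jensen step must respect its direction: because $(\theta_1-x)_+$ is convex in $x$, averaging over the randomness of $\theta_i$ can only increase the expected improvement.
\item The non-strict inequality $\hat{\mu}_i(t)\le\mu_1-\varepsilon$ for $i\in\mathcal{B}(t)$ is exactly what monotonicity requires.
\end{itemize}
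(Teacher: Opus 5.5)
Your proof is correct and has the same structure as the paper's: $\mathcal{Y}_u(t)\cap\mathcal{X}_u^c(t)$ puts mass at least $1/2$ on $\mathcal{B}(t)$, you then lower-bound $G_{1,i}$ for each $i\in\mathcal{B}(t)$ using $\hat\mu_i(t)\le\mu_1-\varepsilon$, and finally you drop the remaining nonnegative terms. The one local difference is that the paper bounds $G_{1,i}$ by restricting to the event $\{\theta_i\le\mu_1-\varepsilon\}$ (probability at least $1/2$ by Gaussian symmetry) and gets $2E$, whereas your Jensen step gives the sharper $4E$; your explicit check that $1\notin\mathcal{S}(t)$, via the $n+1$ threshold, also supplies a point the paper leaves implicit when it drops to the sum over $\mathcal{B}(t)$.
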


\begin{proof}[Proof of Lemma~\ref{lem_eione_lowerbound}]
Since $\mathcal{X}_u^c(t)$ and $\mathcal{Y}_u(t)$ imply $\pi_t^{\mathcal{B}}=\pi_t^{\mathcal{S}}-\pi_t^{\mathcal{H}}\ge 3/4-1/4=1/2$, the set $\mathcal{B}(t)$ carries probability mass at least $1/2$. For any arm $i\in \mathcal{B}(t)$, we have $\theta_i\sim \mathcal{N}(\hat{\mu}_i(t),1/N_i(t))$ and $\hat{\mu}_i(t)\le \mu_1-\varepsilon$. By symmetry of the Gaussian distribution,
\begin{equation}
\mathbb{P}\big[\theta_i\le \mu_1-\varepsilon\big]\ge \frac{1}{2}.
\end{equation}
Therefore,
\begin{align}
G_{1,i}
&= \mathbb{E}\left[\max(\theta_1-\theta_i,0)\right] \\
&\ge \mathbb{E}\left[\max(\theta_1-\theta_i,0)\mathbf{1}\{\theta_i\le \mu_1-\varepsilon\}\right] \\
&\ge \mathbb{E}\left[\max\left(\theta_1-(\mu_1-\varepsilon),0\right)\mathbf{1}\{\theta_i\le \mu_1-\varepsilon\}\right] \\
&= \mathbb{E}\left[\max\left(\theta_1-(\mu_1-\varepsilon),0\right)\right]\mathbb{P}\big[\theta_i\le \mu_1-\varepsilon\big] \\
&\ge \frac{1}{2}\,\mathbb{E}\left[\max\left(\theta_1-(\mu_1-\varepsilon),0\right)\right]
= 2E.
\end{align}
Hence,
\begin{align}
\bar{s}_{t,1}
&= \sum_{i\neq 1}\pi_{t,i}G_{1,i}
\ge \sum_{i\in \mathcal{B}(t)}\pi_{t,i}G_{1,i}
\ge 2E\sum_{i\in \mathcal{B}(t)}\pi_{t,i}
\ge E.
\end{align}
\end{proof}

The previous lemma implies that, even during underestimation, arm $1$ still carries a nontrivial expected improvement. We next control the empirical-mean range uniformly over time. The following lemma provides a high-probability upper bound on $L$, which will be used in the Freeze Lemma and the subsequent main-term argument.

\begin{lemma}[Empirical Mean Range Bound]\label{lem_bound_L}
Define the empirical-mean range
\begin{equation}
L:=\max_{i\in[K]}\hat{\mu}_i(t)-\min_{i\in[K]}\hat{\mu}_i(t),
\end{equation}
and the good event
\begin{equation}
\mathcal{G}_T
:=
\left\{
\forall i\in[K],\ \forall n\in[T],\
\left|\hat{\mu}_{i,n}-\mu_i\right|
\le
\sqrt{\frac{2\log(2KT^2)}{n}}
\right\}.
\end{equation}
Then
\begin{equation}
\mathbb{P}(\mathcal{G}_T)\ge 1-\frac{1}{T}.
\end{equation}
Moreover, on the event $\mathcal{G}_T$, for all $t\in[T]$,
\begin{equation}
L
\le
\max_{i\in[K]}\mu_i-\min_{i\in[K]}\mu_i+2\sqrt{2\log(2KT^2)}.
\end{equation}
\end{lemma}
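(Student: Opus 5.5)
The plan is to apply a Gaussian tail bound to each fixed pair $(i,n)$ and then take a union bound over all arms and all sample counts. The key observation is that $\hat{\mu}_{i,n}$ is the average of the first $n$ rewards from arm $i$. In the stack-of-rewards model, these rewards are i.i.d. $\mathcal{N}(\mu_i,1)$ regardless of how the adaptive algorithm interleaves pulls. Hence $\hat{\mu}_{i,n}-\mu_i\sim\mathcal{N}(0,1/n)$ exactly for each fixed $i$ and $n$. This sidesteps any martingale or optional-stopping subtlety, because we index by the pull count $n$ rather than by the round $t$.

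For a fixed pair $(i,n)$, the standard Gaussian tail $\mathbb{P}(|Z|\ge x)\le 2e^{-x^2/2}$ with $x=\sqrt{2\log(2KT^2)}$ gives
\[
\mathbb{P}\left(|\hat{\mu}_{i,n}-\mu_i|>\sqrt{\tfrac{2\log(2KT^2)}{n}}\right)\le 2\exp\left(-\log(2KT^2)\right)=\frac{1}{KT^2}.
\]
A union bound over the $K$ arms and the $T$ values of $n$ bounds the failure probability by $KT\cdot\frac{1}{KT^2}=\frac{1}{T}$, so $\mathbb{P}(\mathcal{G}_T)\ge 1-1/T$.

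For the range bound, fix $t\in[T]$ and work on $\mathcal{G}_T$. Each arm has $N_i(t)\in[1,T]$ because of the initial pull, and $\hat{\mu}_i(t)=\hat{\mu}_{i,N_i(t)}$. Therefore $|\hat{\mu}_i(t)-\mu_i|\le\sqrt{2\log(2KT^2)/N_i(t)}\le\sqrt{2\log(2KT^2)}$, using $N_i(t)\ge 1$. Applying this to the arm attaining the maximum empirical mean and to the arm attaining the minimum gives
\[
\max_i\hat{\mu}_i(t)\le\max_i\mu_i+\sqrt{2\log(2KT^2)},\qquad \min_i\hat{\mu}_i(t)\ge\min_i\mu_i-\sqrt{2\log(2KT^2)},
\]
and subtracting yields the claimed bound on $L$.

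There is no real obstacle here. The only point needing care is the one noted at the start: justifying that $\hat{\mu}_{i,n}$ has its exact Gaussian law despite adaptive sampling, which the reward-table argument handles.
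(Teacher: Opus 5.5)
Your proposal is correct and follows essentially the same route as the paper: a per-$(i,n)$ Gaussian tail bound with threshold $\sqrt{2\log(2KT^2)/n}$, a union bound over the $K$ arms and $n\in[T]$, and then bounding the extreme empirical means via $\hat{\mu}_i(t)=\hat{\mu}_{i,N_i(t)}$ with $1\le N_i(t)\le T$. Your explicit reward-table justification of the exact law $\mathcal{N}(0,1/n)$ under adaptive sampling is a point the paper leaves implicit.
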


\begin{proof}[Proof of Lemma~\ref{lem_bound_L}]
Fix any arm $i\in[K]$ and any $n\in[T]$. Since the rewards are Gaussian with unit variance, the empirical mean satisfies $\hat{\mu}_{i,n}-\mu_i \sim \mathcal{N}\!\left(0,\frac{1}{n}\right)$. Therefore, for any $a>0$,
\begin{equation}
\mathbb{P}\!\left(\left|\hat{\mu}_{i,n}-\mu_i\right|>a\right)
\le
2\exp\!\left(-\frac{na^2}{2}\right).
\end{equation}
Now choose $a:=\sqrt{\frac{2\log(2KT^2)}{n}}.$ Substituting this into the Gaussian tail bound gives
\begin{equation}
\mathbb{P}\!\left(
\left|\hat{\mu}_{i,n}-\mu_i\right|
>
\sqrt{\frac{2\log(2KT^2)}{n}}
\right)
\le
\frac{1}{KT^2}.
\end{equation}

We now apply a union bound over all $K$ arms and all $n\in[T]$. This yields
\begin{align}
\mathbb{P}(\mathcal{G}_T^c)
&\le
\sum_{i=1}^{K}\sum_{n=1}^{T}
\mathbb{P}\!\left(
\left|\hat{\mu}_{i,n}-\mu_i\right|
>
\sqrt{\frac{2\log(2KT^2)}{n}}
\right) \\
&\le
KT\cdot \frac{1}{KT^2}
=
\frac{1}{T}.
\end{align}
Hence,
\begin{equation}
\mathbb{P}(\mathcal{G}_T)\ge 1-\frac{1}{T}.
\end{equation}

Now assume that the event $\mathcal{G}_T$ holds. Fix any $t\in[T]$, and let
\begin{equation}
i_t^+\in\arg\max_{i\in[K]}\hat{\mu}_i(t),
\qquad
i_t^-\in\arg\min_{i\in[K]}\hat{\mu}_i(t).
\end{equation}
Since $\hat{\mu}_i(t)=\hat{\mu}_{i,N_i(t)}$ and $N_i(t)\in[T]$, the definition of $\mathcal{G}_T$ implies
\begin{equation}
\hat{\mu}_{i_t^+}(t)\le \mu_{i_t^+}+\sqrt{2\log(2KT^2)},
\qquad
\hat{\mu}_{i_t^-}(t)\ge \mu_{i_t^-}-\sqrt{2\log(2KT^2)}.
\end{equation}
Therefore,
\begin{align}
L
&=
\hat{\mu}_{i_t^+}(t)-\hat{\mu}_{i_t^-}(t) \\
&\le
\left(\mu_{i_t^+}+\sqrt{2\log(2KT^2)}\right)
-
\left(\mu_{i_t^-}-\sqrt{2\log(2KT^2)}\right) \\
&=
(\mu_{i_t^+}-\mu_{i_t^-})+2\sqrt{2\log(2KT^2)} \\
&\le
\max_{i\in[K]}\mu_i-\min_{i\in[K]}\mu_i+2\sqrt{2\log(2KT^2)}.
\end{align}
\end{proof}
On the event $\mathcal{G}_T$, we may take $L=\max_{i\in[K]}\mu_i-\min_{i\in[K]}\mu_i+2\sqrt{2\log(2KT^2)}$ according to Lemma~\ref{lem_bound_L}.

The next lemma uses this fact to show that if sufficiently sampled arms do not absorb enough probability mass, then the algorithm must keep assigning noticeable mass to arms outside $\mathcal{S}(t)$.

\begin{lemma}[Freeze Lemma]\label{lem_freeze}
Assume that $\mathcal{X}_u^c(t)$ holds at round $t$.
Then
\begin{equation}\label{eq:freeze-main}
\sum_{j\notin \mathcal{S}(t)} \pi_{t,j} \ge \min\left(\frac{1}{4},\,\frac{E}{2(L+1)}\right).
\end{equation}
\end{lemma}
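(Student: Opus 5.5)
The plan is to split according to whether $\mathcal{Y}_u(t)$ holds, and in the nontrivial case to play the expected-improvement balance of Proposition~\ref{prop:ei-balance} for the least-promising sufficiently sampled arm $l(t)$ against the lower bound $\bar s_{t,1}\ge E$ of Lemma~\ref{lem_eione_lowerbound}. Throughout I work in the setting in which the lemma is used. First, the optimal arm is underestimated: $\hat\mu_1(t)=\hat\mu_{1,n}\le\mu_1-\varepsilon$ with $N_1(t)=n$. Second, $L$ is the range quantity from Lemma~\ref{lem_bound_L}, which upper-bounds every gap $|\hat\mu_i(t)-\hat\mu_j(t)|$. In particular, since $\mathcal{S}(t)$ requires $N_i(t)\ge n+1$, arm $1\notin\mathcal{S}(t)$.

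\emph{Case $\mathcal{Y}_u^c(t)$.} Here $\pi_t^{\mathcal{S}}<3/4$, so $\sum_{j\notin\mathcal{S}(t)}\pi_{t,j}=1-\pi_t^{\mathcal{S}}>1/4$, which already gives \eqref{eq:freeze-main}.

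\emph{Case $\mathcal{Y}_u(t)$.} Since $\pi_t^{\mathcal{S}}\ge 3/4>0$, some arm of $\mathcal{S}(t)$ lies in the support, so $l(t)$ of Corollary~\ref{cor_diff_ei_sets} is well defined and $l(t)\in\supp(\pi_t)$. The steps are as follows.
\begin{enumerate}[leftmargin=1.5em]
\item Apply Lemma~\ref{lem_eione_lowerbound}, using $\mathcal{X}_u^c(t)\cap\mathcal{Y}_u(t)$ and the underestimation, to get $\bar s_{t,1}\ge E$.
\item Apply Proposition~\ref{prop:ei-balance} to get $\bar s_{t,l(t)}\ge \bar s_{t,1}$. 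If arm $1$ is in the support, this follows from part 1 (with equality); if not, it follows from part 2. Hence $\bar s_{t,l(t)}\ge E$.
\item Upper-bound $\bar s_{t,l(t)}$ via \eqref{eq:equiv-ei} by splitting the sum over $i\ne l(t)$ into $i\in\mathcal{S}(t)$ and $i\notin\mathcal{S}(t)$. For $i\in\mathcal{S}(t)$ with $\pi_{t,i}>0$, Corollary~\ref{cor_diff_ei_sets} gives $G_{l(t),i}\le E/2$. Arms with $\pi_{t,i}=0$ contribute nothing, and those with $\pi_{t,i}>0$ have $\hat\mu_i(t)\ge\hat\mu_{l(t)}(t)$ by minimality of $l(t)$. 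For $i\notin\mathcal{S}(t)$, Lemma~\ref{lem_range_ei} gives $G_{l(t),i}\le L+1/\sqrt\pi\le L+1$. Therefore
\[
E\le \bar s_{t,l(t)}\le \frac{E}{2}\,\pi_t^{\mathcal{S}}+(L+1)\sum_{j\notin\mathcal{S}(t)}\pi_{t,j}\le \frac{E}{2}+(L+1)\sum_{j\notin\mathcal{S}(t)}\pi_{t,j}.
\]
\item Rearrange to obtain $\sum_{j\notin\mathcal{S}(t)}\pi_{t,j}\ge E/(2(L+1))$.
\end{enumerate}
Combining the two cases yields the minimum in \eqref{eq:freeze-main}.

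The main obstacle I expect is bookkeeping of hypotheses rather than any computation. Lemma~\ref{lem_eione_lowerbound} needs the underestimation event, which the statement leaves implicit, and I must confirm that arm $1\notin\mathcal{S}(t)$ so that its term is correctly placed in the ``outside'' part. The other point is the Corollary's restriction to support arms: $l(t)$ is defined as a minimizer only over support arms, which is exactly why zero-weight arms of $\mathcal{S}(t)$ cause no trouble.
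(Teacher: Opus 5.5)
Your proposal is correct and follows the paper's proof essentially step for step. It uses the same case split on $\mathcal{Y}_u(t)$ and the same chain $E\le\bar s_{t,1}\le\bar s_{t,l(t)}$, bounding the in-$\mathcal{S}(t)$ part by $E/2$ via Corollary~\ref{cor_diff_ei_sets} and the out-of-$\mathcal{S}(t)$ part by $L+1$ via Lemma~\ref{lem_range_ei}; your extra bookkeeping (the implicit underestimation hypothesis with $N_1(t)=n$, the observation $1\notin\mathcal{S}(t)$, and both cases $1\in\supp(\pi_t)$ and $1\notin\supp(\pi_t)$ of the balance condition) only makes explicit what the paper leaves tacit.
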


\begin{proof}[Proof of Lemma~\ref{lem_freeze}]
If $\mathcal{Y}_u^c(t)$ holds, then $1-\pi_t^{\mathcal{S}}\ge 1/4$, so Equation~\eqref{eq:freeze-main} follows immediately. It therefore remains to consider the case $\mathcal{X}_u^c(t)\cap \mathcal{Y}_u(t)$, that is,
\begin{equation}\label{eq:freeze-assumption}
\pi_t^{\mathcal{H}}\le \frac{1}{4},
\qquad
\pi_t^{\mathcal{S}}\ge \frac{3}{4}.
\end{equation}
Let
\begin{equation}
l(t) := \argmin_{i\in \mathcal{S}(t),\,\pi_{t,i}\neq 0}\hat{\mu}_i(t),
\end{equation}
which is well-defined under Equation~\eqref{eq:freeze-assumption}. Then
\begin{align}
E
&\le \bar{s}_{t,1}
\tag{by Lemma~\ref{lem_eione_lowerbound}} \\
&\le \bar{s}_{t,l(t)}
\tag{by the expected-improvement condition, since $l(t)\in \operatorname{supp}(\pi_t)$} \\
&= \sum_{j\notin \mathcal{S}(t)}\pi_{t,j}G_{l(t),j} + \sum_{i\in \mathcal{S}(t)}\pi_{t,i}G_{l(t),i}
\tag{by Equation~\eqref{eq:equiv-ei}} \\
&\le \sum_{j\notin \mathcal{S}(t)}\pi_{t,j}G_{l(t),j} + \frac{E}{2}
\tag{by Corollary~\ref{cor_diff_ei_sets}}.
\end{align}
Hence,
\begin{equation}
\sum_{j\notin \mathcal{S}(t)}\pi_{t,j}G_{l(t),j} \ge \frac{E}{2}.
\end{equation}
By Lemma~\ref{lem_range_ei},
\begin{equation}
G_{l(t),j} \le \mu_{\max}-\mu_{\min}+\frac{1}{\sqrt{\pi}} \le L+1.
\end{equation}
Therefore,
\begin{equation}
\sum_{j\notin \mathcal{S}(t)}\pi_{t,j} \ge \frac{E}{2(L+1)}.
\end{equation}
Combining the two cases proves Equation~\eqref{eq:freeze-main}.
\end{proof}

We next convert the Freeze Lemma into a bound on how long the process can remain in the regime $N_1(t)=n$ while $\mathcal{X}_u^c(t)$ persists.

\begin{lemma}[Underestimation Main Term Bound]\label{lem_main_term}
Assume that the event $\mathcal{G}_T$ holds. Then there exists a constant $C_N>0$ such that
\begin{equation}\label{eq:main-term}
\sum_{t=1}^T \mathbb{P}\left(\hat{\mu}_{1,n}\le \mu_1-\varepsilon,\ N_1(t)=n,\ \mathcal{X}_u^c(t)\,\middle|\,\hat{\mu}_{1,n}\right)
\le C_N K(L+1)E^{-3}.
\end{equation}
\end{lemma}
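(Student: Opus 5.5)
The plan is to condition on the value $\hat{\mu}_{1,n}$; this fixes $E$ and hence the threshold $n_0:=\max(4/(\pi E^2),n+1)$ that defines $\mathcal{S}(t)$. We then count the rounds in which $N_1(t)=n$ and $\mathcal{X}_u^c(t)$ both hold. Note that arm $1$ is never in $\mathcal{S}(t)$ while $N_1(t)=n$, since $n_0\ge n+1$. On every such round, the Freeze Lemma (Lemma~\ref{lem_freeze}) gives
\[
\sum_{j\notin\mathcal{S}(t)}\pi_{t,j}\ge p:=\min\left(\tfrac14,\tfrac{E}{2(L+1)}\right),
\]
where $L$ is the deterministic bound available on $\mathcal{G}_T$ via Lemma~\ref{lem_bound_L}. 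So on each such round, with conditional probability at least $p$ given the past, the algorithm pulls some arm outside $\mathcal{S}(t)$.

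The key counting step is to observe that each pull of an arm $j\notin\mathcal{S}(t)$ is of one of two kinds. Either it is a pull of arm $1$, which ends the regime $N_1(t)=n$; this can happen at most once. Or it is a pull of a suboptimal arm $j$ with $N_j(t)<n_0$, which increments $N_j$. Since the counts only grow and membership in $\mathcal{S}(t)$ is monotone in $N_j(t)$, the total number of such pulls is at most $(K-1)n_0$. Let $\tau$ be the number of rounds with $N_1(t)=n$ and $\mathcal{X}_u^c(t)$. Each of these rounds yields an ``outside'' pull with probability at least $p$, and the number of outside pulls is at most $Kn_0+1$. A standard optional-stopping or geometric-trials argument then gives $\mathbb{E}[\tau]\le (Kn_0+1)/p$. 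Concretely, $M_s=\sum_{u\le s}(\mathbf{1}\{\text{outside pull}\}-p)$, summed over the qualifying rounds, is a submartingale, and the outside-pull count is bounded.

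Plugging in, and using the fact that $E\le \frac14\cdot\frac{1}{\sqrt{2\pi n}}$ so that $4/(\pi E^2)\ge n+1$ in the relevant regime (otherwise $n_0=n+1\le O(E^{-2})$ anyway), we get $n_0=O(E^{-2})$. We also have $1/p\le 4+2(L+1)/E=O((L+1)/E)$, since $E\le 1$. Hence
\[
\mathbb{E}[\tau\mid\hat{\mu}_{1,n}]\le C_N K (L+1)E^{-3}.
\]
This is exactly the left-hand side of \eqref{eq:main-term}, because the sum of the probabilities equals the expected count.

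The main obstacle is making the conditioning rigorous. $E$ depends on $\hat{\mu}_{1,n}$, which is revealed only after the $n$-th pull of arm $1$, while the rounds counted all come after that pull and before the next one. I would handle this with the standard reward-table (stack) construction: fix the i.i.d. reward sequences per arm in advance. Then $\hat{\mu}_{1,n}$ is a function of arm $1$'s first $n$ rewards, and the filtration argument above runs conditionally on that table entry. We need the fact that $\mathcal{G}_T$ is assumed to hold, so that $L$ is a constant rather than random. The submartingale step also has to tolerate that the lower bound $p$ is only guaranteed on rounds where $\mathcal{X}_u^c(t)$ holds. This is fine because we count only those rounds, and the budget of outside pulls is global.
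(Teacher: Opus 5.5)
Your proposal is correct and follows essentially the same route as the paper. The Freeze Lemma gives mass at least $E/(2(L+1))$ outside $\mathcal{S}(t)$, and each of the $K$ arms needs at most $\max(4/(\pi E^2),n+1)=O(E^{-2})$ further pulls to enter $\mathcal{S}(t)$; the paper likewise uses $E\le 1/(4\sqrt{2\pi n})$ to get $4/(\pi E^2)\ge 128n$. Dividing this pull budget by the per-round mass yields $O(K(L+1)E^{-3})$. Your expected-count (submartingale) formulation, the observation that a pull of arm $1$ immediately ends the $N_1(t)=n$ regime, and the reward-table treatment of conditioning on $\hat{\mu}_{1,n}$ make rigorous steps that the paper states only informally.
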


\begin{proof}[Proof of Lemma~\ref{lem_main_term}]
By definition, every arm outside $\mathcal{S}(t)$ requires at most
$\max\!\left(4/(\pi E^2),\,n+1\right)$ additional draws to enter
$\mathcal{S}(t)$. Whenever $N_1(t)=n$ and $\mathcal{X}_u^c(t)$ holds,
Lemma~\ref{lem_freeze} guarantees that the total probability mass assigned
to arms outside $\mathcal{S}(t)$ is at least $E/(2(L+1))$. Therefore, the
expected time needed for such an arm to enter $\mathcal{S}(t)$ is at most
$$
\frac{\max\left(4/(\pi E^2),\,n+1\right)}{E/(2(L+1))}
$$.

While $\hat{\mu}_{1,n}\le\mu_1-\varepsilon$, let
$\delta=\mu_1-\varepsilon-\hat\mu_{1,n}\ge0$ and
$\theta_1=\hat\mu_{1,n}+Z/\sqrt{n}$, where $Z\sim N(0,1)$. Then
\begin{equation}
(\theta_1-(\mu_1-\varepsilon))_+=\left(\frac{Z}{\sqrt{n}}-\delta\right)_+\le\left(\frac{Z}{\sqrt{n}}\right)_+ .
\end{equation}
Therefore,
\begin{align}
E&=\frac{1}{4}\,\mathbb{E}_{\theta_1 \sim \mathcal{N}(\hat{\mu}_{1,n},\,1/n)}
\left[(\theta_1-(\mu_1-\varepsilon))_+\right] \\
&\le \frac{1}{4}\frac{1}{\sqrt{n}}\mathbb{E}[Z_+] \\
&= \frac{1}{4\sqrt{2\pi n}} .
\end{align}
Hence,
\begin{equation}
\frac{4}{\pi E^2}
\ge
128n
\gg n+1 .
\end{equation}
Thus, the first term inside the maximum dominates $n+1$. Therefore, the
expected time needed for all such arms to enter $\mathcal{S}(t)$ is at most
\begin{equation}
\frac{\max\left(4/(\pi E^2),\,n+1\right)}{E/(2(L+1))}
\times K
\le
\frac{8(L+1)K}{\pi E^3}.
\end{equation}
Once all arms enter $\mathcal{S}(t)$, the event $N_1(t)=n$ can no longer
persist, since arm $1$ must eventually be drawn again and we then have
$N_1(t)\ge n+1$. This proves the claim.
\end{proof}

We are now ready to bound $\mathrm{(L_{u,1})}$ and $\mathrm{(L_{u,2})}$ separately.

\begin{proposition}[Non-Overestimation of Arms in $\mathcal{X}_u(t)$]
\label{prop_nonover_nonopt}
\begin{equation}
\mathbb{E}[(L_{u,1})]
=
\sum_t
\mathbb{P}\left(
\hat{\mu}_1(t)\le \mu_1-\varepsilon,\ \mathcal{X}_u(t)
\right)
=
O\left(
\frac{K\log(K/\varepsilon)}{\varepsilon^2}
\right).
\end{equation}
\end{proposition}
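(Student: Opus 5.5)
The plan is to reduce the event $\{\hat{\mu}_1(t)\le\mu_1-\varepsilon\}\cap\mathcal{X}_u(t)$ to upward deviations of suboptimal arms that have already been sampled many times, and then to sum the resulting Gaussian tail probabilities.

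\textbf{Step 1: reduce to a union over suboptimal arms.} On $\mathcal{X}_u(t)$ we have $\pi_t^{\mathcal{H}}\ge 1/4>0$, so $\mathcal{H}(t)\neq\emptyset$. Hence some suboptimal arm $i\neq 1$ satisfies $\hat{\mu}_i(t)>\mu_1-\varepsilon\ge\mu_i+\Delta_i-\varepsilon\ge\mu_i+4\varepsilon$, using $\varepsilon\le\Delta_{\min}/5$. The same arm also has $N_i(t)\ge\max(4/(\pi E^2),n+1)$ with $n=N_1(t)$; in particular $N_i(t)\ge n+1$. This gives
\[
\mathrm{(L_{u,1})}\le\sum_{i\ne1}\sum_t \mathbf{1}\{i\in\mathcal{H}(t),\ \hat{\mu}_1(t)\le\mu_1-\varepsilon\}.
\]

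\textbf{Step 2: turn rounds into pulls.} Merely having $\hat{\mu}_i(t)$ large does not bound the number of rounds, since arm $i$ may sit in $\mathcal{H}(t)$ for many rounds without being pulled. I would therefore use the mass: $\pi_t^{\mathcal{H}}\ge1/4$ means each such round pulls some arm of $\mathcal{H}(t)$ with probability at least $1/4$. Writing $\mathcal{F}_{t-1}$ for the history before round $t$, this gives
\[
\mathbb{E}\Big[\sum_t\mathbf{1}\{\mathcal{X}_u(t),\cdots\}\Big]\le 4\,\mathbb{E}\Big[\sum_t\sum_{i\ne1}\mathbf{1}\{I(t)=i,\ i\in\mathcal{H}(t),\ \cdots\}\Big],
\]
via $\mathbf{1}\{\mathcal{X}_u(t)\}\le 4\,\mathbb{P}(I(t)\in\mathcal{H}(t)\mid\mathcal{F}_{t-1})$ and the tower property. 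Each pull of arm $i$ increments $N_i$, so for each arm $i$ the right-hand side is at most $\sum_{m\ge m_0}\mathbb{P}(\hat{\mu}_{i,m}>\mu_i+4\varepsilon)$, where $m_0$ is the sample-size threshold in the definition of $\mathcal{S}$.

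\textbf{Step 3: sum the tails.} Without using $m_0$, the geometric series gives $\sum_{m\ge1}e^{-8m\varepsilon^2}\le 1/(8\varepsilon^2)$ per arm, for a total of $O(K/\varepsilon^2)$. The stated bound $O(K\log(K/\varepsilon)/\varepsilon^2)$ is weaker, so this simpler route already suffices.

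\textbf{Main obstacle.} The hard step is Step 2. The subtlety is that $\mathcal{H}(t)$ is random and coupled with the randomization in $\pi_t$. The argument only needs $\pi_t$ and $\mathcal{H}(t)$ to be $\mathcal{F}_{t-1}$-measurable, which they are, so that the conditional pull probability is bounded correctly.

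The remaining point is that the sum $\sum_m$ must be indexed by the pull counts of arm $i$ rather than by rounds. A fixed $m$ corresponds to at most one pull event, namely the pull that raises $N_i$ from $m$ to $m+1$. This is the standard counting step used in the Thompson sampling analyses cited earlier.

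If the authors' $\log(K/\varepsilon)$ factor comes from a union bound with a deviation threshold, I would instead reach it by truncating at $m\ge\log(K/\varepsilon)/\varepsilon^2$ and bounding the earlier rounds crudely. The geometric-series route above avoids that and is the one I would attempt first.
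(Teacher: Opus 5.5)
Your proof is correct, and it takes a different and sharper route than the paper's.

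\textbf{The paper's route.} The paper buckets rounds by the counter $N_{\mathcal{H}}(t)=\sum_{s\le t}\mathbf{1}\{I(s)\in\mathcal{H}(s)\}$. It uses $\pi_t^{\mathcal{H}}\ge 1/4$ to argue that each level $N_{\mathcal{H}}(t)=n$ lasts at most $4$ rounds in expectation. It then bounds the probability of ever reaching level $n$ by a pigeonhole step: some suboptimal arm must have been overestimated at $\lfloor n/K\rfloor$ distinct sample sizes, so $\max_i N_i^{\mathrm{last}}\ge\lfloor n/K\rfloor$, where $N_i^{\mathrm{last}}=\sup\{m:\hat\mu_{i,m}\ge\mu_1-\varepsilon\}$. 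The union bound over arms in $\mathbb{P}(\max_i N_i^{\mathrm{last}}\ge m)$, combined with the $n/K$ rescaling, forces a truncation at $b\approx \log C_\varepsilon/a$ with $a=\varepsilon^2/(2K)$. That is exactly where the $K\log(K/\varepsilon)/\varepsilon^2$ comes from, so your guess about the log factor is essentially right.

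\textbf{Your route.} Set $A_t=\{\hat\mu_1(t)\le\mu_1-\varepsilon\}\cap\mathcal{X}_u(t)$. Your inequality $\mathbf{1}\{A_t\}\le 4\,\mathbb{E}[\mathbf{1}\{A_t,\ I(t)\in\mathcal{H}(t)\}\mid\mathcal{F}_{t-1}]$ replaces both the waiting-time step and the pigeonhole step at once. Rounds are charged directly to pulls of overestimated suboptimal arms, and the per-arm count is the same argument as Lemma~\ref{lem_armj_overestimated}. This gives at most $(K-1)/(2\varepsilon^2)=O(K/\varepsilon^2)$, which implies the stated bound in the small-$\varepsilon$ regime the paper works in.

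\textbf{What each buys.} Besides removing the log factor, your argument needs only two ingredients:
\begin{itemize}
\item $\mathcal{F}_{t-1}$-measurability of $A_t$, $\mathcal{H}(t)$ and $\pi_t$;
\item the pathwise inequality $\sum_t\mathbf{1}\{I(t)=i,\ \hat\mu_i(t)>x\}\le\sum_{m}\mathbf{1}\{\hat\mu_{i,m}>x\}$ under the usual reward-table construction.
\end{itemize}
The paper's step $\sum_t\mathbb{P}(\cdots,N_{\mathcal{H}}(t)=n,\ \max_i N_i^{\mathrm{last}}\ge\lfloor n/K\rfloor)\le 4\,\mathbb{P}(\max_i N_i^{\mathrm{last}}\ge\lfloor n/K\rfloor)$ is more delicate. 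It implicitly applies the waiting-time bound conditionally on an event about future rewards, which requires an extra (if standard) argument that the sampling randomization is independent of the reward table.

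\textbf{Minor points.} Step 1 is not used in your final bound. The threshold $m_0$ in Step 2 is random: it depends on $E$, hence on $N_1(t)$ and $\hat\mu_{1,N_1(t)}$. So $\sum_{m\ge m_0}\mathbb{P}(\cdot)$ is not meaningful as written, but since you discard $m_0$ in Step 3, nothing breaks.
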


\begin{proof}[Proof of Proposition~\ref{prop_nonover_nonopt}]
Let
\begin{equation}
N_{\mathcal{H}}(t)
:=
\sum_{s=1}^t
\mathbf{1}\{I(s)\in \mathcal{H}(s),\ I(s)\neq 1\}
\end{equation}
be the number of times a non-optimal arm in $\mathcal{H}(s)$ has been drawn up to time $t$.
Then
\begin{align}
\mathbf{1}\{\hat{\mu}_1(t)\le \mu_1-\varepsilon,\ \mathcal{X}_u(t)\}
&\le
\sum_{n=1}^{\infty}
\mathbf{1}\{
\hat{\mu}_1(t)\le \mu_1-\varepsilon,\ \mathcal{X}_u(t),\
N_{\mathcal{H}}(t)=n
\} \\
&=
\sum_{n=1}^{\infty}
\mathbf{1}\{
\hat{\mu}_1(t)\le \mu_1-\varepsilon,\ \pi_t^{\mathcal{H}}\ge 1/4,\
N_{\mathcal{H}}(t)=n
\}.
\label{eq:nhsum}
\end{align}
Under $\mathcal{X}_u(t)$, the total probability mass assigned to $\mathcal{H}(t)$ is at least $1/4$,
so the expected waiting time until $N_{\mathcal{H}}(t)$ increases is at most $4$.
Hence, for each fixed $n$,
\begin{equation}
\sum_{t=1}^{\infty}
\mathbb{P}\{
\hat{\mu}_1(t)\le \mu_1-\varepsilon,\ \pi_t^{\mathcal{H}}\ge 1/4,\
N_{\mathcal{H}}(t)=n
\}
\le 4.
\end{equation}

Moreover, the event
$\{\hat{\mu}_1(t)\le \mu_1-\varepsilon,\ \pi_t^{\mathcal{H}}\ge 1/4\}$
implies that there exists at least one arm $i\neq 1$ such that
$\hat{\mu}_i(t)\ge \mu_1-\varepsilon$.
Let
\begin{equation}
N_i^{\mathrm{last}}
:=
\sup\{n:\hat{\mu}_{i,n}\ge \mu_1-\varepsilon\}.
\end{equation}
Since $\mu_i\le \mu_1-2\varepsilon$, we obtain
\begin{equation}
\label{eq:mkunif}
\mathbb{P}(N_i^{\mathrm{last}}\ge m)
\le
\sum_{n=m}^{\infty}
\mathbb{P}[\hat{\mu}_{i,n}\ge \mu_1-\varepsilon]
\le
\frac{e^{-m\varepsilon^2/2}}{1-e^{-\varepsilon^2/2}}.
\end{equation}
Therefore,
\begin{equation}
\mathbb{P}\left(
\max_i N_i^{\mathrm{last}}\ge m
\right)
\le
\sum_i
\mathbb{P}(N_i^{\mathrm{last}}\ge m)
\le
K\frac{e^{-m\varepsilon^2/2}}{1-e^{-\varepsilon^2/2}}.
\end{equation}

We now combine these bounds:
\begin{align}
\mathbb{E}[(L_{u,1})]
&\le
\sum_{n=1}^{\infty}
\sum_{t=1}^{\infty}
\mathbb{P}\{
\hat{\mu}_1(t)\le \mu_1-\varepsilon,\ \pi_t^{\mathcal{H}}\ge 1/4,\
N_{\mathcal{H}}(t)=n
\}
\tag{by Equation~\eqref{eq:nhsum}} \\
&=
\sum_{n=1}^{\infty}
\sum_{t=1}^{\infty}
\mathbb{P}\left\{
\hat{\mu}_1(t)\le \mu_1-\varepsilon,\ \pi_t^{\mathcal{H}}\ge 1/4,\
N_{\mathcal{H}}(t)=n,\
\max_i N_i^{\mathrm{last}}\ge \lfloor n/K\rfloor
\right\}.
\label{eq:nlastappear}
\end{align}
The last step holds because by the time $N_{\mathcal{H}}(t)=n$, at least one non-optimal arm
must have satisfied $\hat{\mu}_{i,n'}\ge \mu_1-\varepsilon$ for at least
$\lfloor n/K\rfloor$ different values of $n'$, which implies
$N_i^{\mathrm{last}}\ge \lfloor n/K\rfloor$.
Using Equation~\eqref{eq:mkunif},
\begin{align}
\eqref{eq:nlastappear}
&\le
4\sum_{n=1}^{\infty}
\min\left(
K\frac{e^{-\lfloor n/K\rfloor\varepsilon^2/2}}
{1-e^{-\varepsilon^2/2}},
1
\right) \\
&\le
4\left(
K+
\sum_{n=1}^{\infty}
\min\left(
\frac{K e^{\varepsilon^2/2}}
{1-e^{-\varepsilon^2/2}}
e^{-n\varepsilon^2/(2K)},
1
\right)
\right),
\tag{$\lfloor n/K\rfloor\ge n/K-1$}
\end{align}
Let
\begin{equation}
a:=\frac{\varepsilon^2}{2K},
\qquad
C_\varepsilon
:=
\frac{K e^{\varepsilon^2/2}}
{1-e^{-\varepsilon^2/2}}.
\end{equation}
We split the summation at the threshold $b$, where the term inside the minimum drops below $1$:
\begin{equation}
b
:=
\max\left(
0,\left\lceil
\frac{\log C_\varepsilon}{a}
\right\rceil
\right).
\end{equation}
Then
\begin{equation}
\mathbb{E}[(L_{u,1})]
\le
4\left(
K+b+\sum_{n=b+1}^{\infty}C_\varepsilon e^{-na}
\right).
\end{equation}
By the definition of $b$, we have $C_\varepsilon e^{-ba}\le 1$.
Thus,
\begin{align}
\sum_{n=b+1}^{\infty}C_\varepsilon e^{-na}
&=
C_\varepsilon
\frac{e^{-(b+1)a}}{1-e^{-a}} \\
&\le
\frac{C_\varepsilon e^{-ba}}{1-e^{-a}} \\
&\le
\frac{1}{1-e^{-a}}
=
O\left(\frac{1}{a}\right)
=
O\left(\frac{K}{\varepsilon^2}\right).
\end{align}
For the threshold $b$, noting that
$1-e^{-\varepsilon^2/2}\ge \varepsilon^2/2$ for small $\varepsilon$,
we have
\begin{equation}
\log C_\varepsilon
=
O\left(\log(K/\varepsilon^2)\right).
\end{equation}
Therefore,
\begin{equation}
b
=
O\left(
\frac{\log C_\varepsilon}{a}
\right)
=
O\left(
\frac{K\log(K/\varepsilon)}{\varepsilon^2}
\right).
\end{equation}
Thus,
\begin{equation}
\mathbb{E}[(L_{u,1})]
=
O\left(
K+
\frac{K\log(K/\varepsilon)}{\varepsilon^2}
+
\frac{K}{\varepsilon^2}
\right)
=
O\left(
\frac{K\log(K/\varepsilon)}{\varepsilon^2}
\right).
\end{equation}
\end{proof}

We now turn to the complementary term $\mathrm{(L_{u,2})}$, which is the main technical part of the underestimation analysis.

\begin{proposition}[Underestimation Main Bound]\label{prop_underest_main}
\begin{equation}
\mathbb{E}[\mathrm{(L_{u,2})}]
=
\sum_t \mathbb{P}\big[\hat{\mu}_1(t)\le \mu_1-\varepsilon,\ \mathcal{X}_u^c(t)\big]
=
\tilde{O}\left(\frac{K^{1/3}T^{2/3}}{\varepsilon^3}\right).
\end{equation}
\end{proposition}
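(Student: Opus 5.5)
We bound $\mathbb{E}[(L_{u,2})]$ by decomposing the underestimation rounds according to the current pull count of arm~1 and the empirical mean $\hat{\mu}_{1,n}$ after $n$ pulls. Then we trade off the per-$n$ bound of Lemma~\ref{lem_main_term} against the probability that $\hat{\mu}_{1,n}$ is badly underestimated. First we restrict to the good event $\mathcal{G}_T$ of Lemma~\ref{lem_bound_L}. Its complement contributes at most $T\cdot\mathbb{P}(\mathcal{G}_T^c)\le 1$, and on $\mathcal{G}_T$ we have $L+1=\tilde{O}(1)$, since it is $O(\max_i\mu_i-\min_i\mu_i+\sqrt{\log(KT)})$. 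Next we write
\[
(L_{u,2})=\sum_{n\ge 1}\sum_t \mathbf{1}\{\hat{\mu}_{1,n}\le\mu_1-\varepsilon,\ N_1(t)=n,\ \mathcal{X}_u^c(t)\}.
\]
For each $n$ we also use the trivial bound $T$ and the bound $C_NK(L+1)E^{-3}$ from Lemma~\ref{lem_main_term}, applied conditionally on $\hat{\mu}_{1,n}$. This gives
\[
\mathbb{E}[(L_{u,2})]\le 1+\sum_{n}\mathbb{E}\Bigl[\min\bigl(T,\ C_NK(L+1)E(n,\hat{\mu}_{1,n})^{-3}\bigr)\mathbf{1}\{\hat{\mu}_{1,n}\le\mu_1-\varepsilon\}\Bigr].
\]
One subtlety is that the counts over different $n$ sum to at most $T$ in total. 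We keep the $\min(T,\cdot)$ per $n$ and, where needed, use that only $n\le T$ matter.

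Second, we make $E^{-3}$ explicit using Lemma~\ref{lem_bound_e}. Write $\delta=\mu_1-\varepsilon-\hat{\mu}_{1,n}$, so that $E^{-1}\lesssim \sqrt{n}\,(3+n\delta^2)\,e^{n\delta^2/2}$ and hence $E^{-3}\lesssim n^{3/2}(3+n\delta^2)^3 e^{3n\delta^2/2}$. Since $\hat{\mu}_{1,n}\sim\mathcal{N}(\mu_1,1/n)$, the deviation $x:=\mu_1-\hat{\mu}_{1,n}=\varepsilon+\delta$ has density $\propto\sqrt{n}\,e^{-nx^2/2}$. The resulting expectation is the integral of $\min\bigl(T,\ \tilde{C}K n^{3/2}\mathrm{poly}(n\delta^2)e^{3n\delta^2/2}\bigr)$ against a density of order $\sqrt{n}\,e^{-n(\varepsilon+\delta)^2/2}$. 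The key difficulty is that the exponent $3/2$ on $e^{n\delta^2}$ beats the Gaussian decay $e^{-n\delta^2/2}$, so the untruncated expectation diverges. This is exactly why the truncation at $T$ is needed and where the polynomial rate comes from.

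Third, for each $n$ we split at a threshold $\delta^*$ defined by $K e^{3n\delta^{*2}/2}\approx T$ (up to the polynomial factors). For $\delta\le\delta^*$ the integrand is at most $\tilde{O}(K n^{3/2})e^{3n\delta^2/2}e^{-n(\varepsilon+\delta)^2/2}$. For $\delta>\delta^*$ it is at most $T\,\mathbb{P}(\delta>\delta^*)\le T e^{-n(\varepsilon+\delta^*)^2/2}$. In both pieces the cross term $e^{-n\varepsilon\delta}$ and the factor $e^{-n\varepsilon^2/2}$ supply decay in $n$. Summing over $n$, this decay is what should produce the $\varepsilon^{-3}$ (roughly $\sum_n n^{3/2}e^{-cn\varepsilon^2}$-type sums give powers of $1/\varepsilon$). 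Balancing $T e^{-n\delta^{*2}/2}$ against $K e^{n\delta^{*2}}$ gives $e^{n\delta^{*2}}\approx (T/K)^{2/3}$. Each of the two pieces is therefore of order $K^{1/3}T^{2/3}$ up to logs, which matches the claimed rate $\tilde{O}(K^{1/3}T^{2/3}/\varepsilon^3)$.

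The main obstacle will be this final balancing: we need to control the polynomial factors $n^{3/2}(3+n\delta^2)^3$ uniformly in $n$ and to ensure the sum over $n$ converges to $\mathrm{poly}(1/\varepsilon)$ without an extra factor of $T$. I would first handle large $n$, say $n\gtrsim \log T/\varepsilon^2$, by Chernoff alone, since $\mathbb{P}(\hat{\mu}_{1,n}\le\mu_1-\varepsilon)\le e^{-n\varepsilon^2/2}$ makes those terms $O(1)$ even after multiplying by $T$. This leaves only $\tilde{O}(1/\varepsilon^2)$ values of $n$, on which the polynomial factors are $\tilde{O}(1)$ and the truncated Gaussian computation above applies. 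All logarithmic factors are absorbed into $\tilde{O}$.
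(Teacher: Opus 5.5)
Your skeleton matches the paper's proof. You restrict to $\mathcal{G}_T$, slice by $N_1(t)=n$, and bound each slice by $\min(T, C_NK(L+1)E^{-3})$ conditionally on $\hat\mu_{1,n}$. You then integrate against the $\mathcal{N}(\mu_1,1/n)$ density, and you correctly see that truncation at $T$ is forced because $E^{-3}$ grows like $e^{3n\delta^2/2}$. Your balance $e^{n\delta^{*2}}\approx (T/K)^{2/3}$ is also exactly what produces $K^{1/3}T^{2/3}$.

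The gap is in the last step, which as written does not give $\varepsilon^{-3}$. You place the threshold at $Ke^{3n\delta^{*2}/2}\approx T$ while ignoring polynomial factors, and then bound $\min(T,b)$ by $b$ on $\delta\le\delta^*$. That keeps the full factor $n^{3/2}(3+n\delta^2)^3$ near $\delta^*$, where in fact $b$ exceeds $T$ by exactly this factor. Bounding that piece in the natural way gives roughly $K^{1/3}T^{2/3}n^{3/2}e^{-n\varepsilon^2/2}$ per slice, up to logarithms. Summing over $n$ yields $\varepsilon^{-5}$, as your own remark about $\sum_n n^{3/2}e^{-cn\varepsilon^2}$ already suggests.

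The proposed rescue also fails. After cutting at $n\lesssim\log T/\varepsilon^2$, the factor $n^{3/2}$ can still be as large as $(\log T)^{3/2}\varepsilon^{-3}$. That is not $\tilde{O}(1)$, because $\tilde O$ does not hide powers of $1/\varepsilon$. Multiplying ``$\tilde O(\varepsilon^{-2})$ values of $n$'' by a per-slice bound is too crude in any case: even the correct per-slice bound contains a term of order $1/(n\varepsilon^3)$, and the $1/n$ must be summed, not bounded uniformly.

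The missing idea is to put the polynomial factors inside the balance. Apply $\min(T,b)\le T^{2/3}b^{1/3}$ pointwise in $\hat\mu_{1,n}$ with $b=C_NK(L+1)E^{-3}$; equivalently, split exactly where $b=T$. Then only $E^{-1}\le 4\sqrt{2\pi n}\,(3+n\delta^2)e^{n\delta^2/2}$ appears (with your $\delta$), and its exponential is absorbed by the density:
\[
\sqrt{\tfrac{n}{2\pi}}\int_0^\infty E^{-1}e^{-n(\varepsilon+\delta)^2/2}\,d\delta \le 4n\,e^{-n\varepsilon^2/2}\int_0^\infty (3+n\delta^2)e^{-n\varepsilon\delta}\,d\delta = 4e^{-n\varepsilon^2/2}\Bigl(\frac{3}{\varepsilon}+\frac{2}{n\varepsilon^3}\Bigr).
\]
Summing over $n$ gives $O(\varepsilon^{-3})+O(\varepsilon^{-3}\log(1/\varepsilon))$, the second term via $\sum_n e^{-n\varepsilon^2/2}/n=-\ln(1-e^{-\varepsilon^2/2})$. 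Multiplied by $T^{2/3}(C_NK(L+1))^{1/3}$, this is the paper's Region~B computation and yields the stated rate. With this pointwise bound in place, your large-$n$ Chernoff cut (the analogue of the paper's Region~A) is harmless but unnecessary.
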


\begin{proof}[Proof of Proposition~\ref{prop_underest_main}]
We first decompose the event according to the value of $N_1(t)$:
\begin{equation}\label{eq:L2-sum-over-n}
\sum_t \mathbb{P}\big[\hat{\mu}_1(t)\le \mu_1-\varepsilon,\ \mathcal{X}_u^c(t)\big]
=
\sum_{n=1}^T \sum_t \mathbb{P}\big[\hat{\mu}_1(t)\le \mu_1-\varepsilon,\ \mathcal{X}_u^c(t),\ N_1(t)=n\big].
\end{equation}
For each fixed $n$,
\begin{align}
&\sum_t \mathbb{P}\big[\hat{\mu}_{1,n}\le \mu_1-\varepsilon,\ \mathcal{X}_u^c(t),\ N_1(t)=n\big] \nonumber\\
&\le \sqrt{\frac{n}{2\pi}}
\int_{-\infty}^{\mu_1-\varepsilon}
\min\left(T,\ C_N K(L+1)E^{-3}\right)
\exp\left(-\frac{n(\hat{\mu}_{1,n}-\mu_1)^2}{2}\right)\,d\hat{\mu}_{1,n}
=: (C_n),
\end{align}
where we used Lemma~\ref{lem_main_term}. Summing over $n$ gives
\begin{equation}
\mathbb{E}[\mathrm{(L_{u,2})}] \le \sum_{n=1}^T (C_n).
\end{equation}

Let $\delta := \mu_1-\hat{\mu}_{1,n}>0$. Since the integration domain is $\hat{\mu}_{1,n}\le \mu_1-\varepsilon$, we have $\delta\ge \varepsilon$. We split the domain into
\begin{equation}
\text{Region A: } \delta \ge \sqrt{\frac{4\log T}{n}},
\qquad
\text{Region B: } \varepsilon \le \delta < \sqrt{\frac{4\log T}{n}}.
\end{equation}
Accordingly, we write $(C_n)=(C_A)+(C_B)$, and hence
\begin{equation}
\mathbb{E}[\mathrm{(L_{u,2})}] \le \sum_{n=1}^T (C_A) + \sum_{n=1}^T (C_B).
\end{equation}

For Region A, since $\delta \ge \sqrt{4\log T/n}$, we have $n\delta^2/4\ge \log T$, and therefore
\begin{equation}
T\exp\left(-\frac{n\delta^2}{2}\right)\le \exp\left(-\frac{n\delta^2}{4}\right).
\end{equation}
Hence,
\begin{align}
\sum_{n=1}^T (C_A)
&\le \sum_{n=1}^T \sqrt{\frac{n}{2\pi}}
\int_{\sqrt{4\log T/n}}^\infty
\exp\left(-\frac{n\delta^2}{4}\right)\,d\delta \\
&= \sum_{n=1}^T \frac{1}{\sqrt{2\pi}}
\int_{\sqrt{4\log T}}^\infty
\exp\left(-\frac{u^2}{4}\right)\,du
\tag{with $u=\sqrt{n}\delta$} \\
&\le \sum_{n=1}^T \frac{2}{\sqrt{2\pi}}T^{-1}
= O(1).
\end{align}

For Region B, using $\min(a,b)\le a^{2/3}b^{1/3}$ and Lemma~\ref{lem_bound_e}, we obtain
\begin{align}
\sum_{n=1}^T (C_B)
&\le \sum_{n=1}^T \sqrt{\frac{n}{2\pi}}
\int_{\varepsilon}^{\sqrt{4\log T/n}}
T^{2/3}C_N^{1/3}K^{1/3}(L+1)^{1/3}E^{-1}
\exp\left(-\frac{n\delta^2}{2}\right)\,d\delta \\
&\le C (L+1)^{1/3} T^{2/3}K^{1/3}
\sum_{n=1}^T n\int_{\varepsilon}^{\infty}
\left(3+n(\delta-\varepsilon)^2\right)
\exp\left(\frac{n(\delta-\varepsilon)^2}{2}\right)
\exp\left(-\frac{n\delta^2}{2}\right)\,d\delta \\
&= C (L+1)^{1/3} T^{2/3}K^{1/3}
\sum_{n=1}^T n\int_{\varepsilon}^{\infty}
\left(3+n(\delta-\varepsilon)^2\right)
\exp\left(-\frac{n(2\delta-\varepsilon)\varepsilon}{2}\right)\,d\delta \\
&\le C (L+1)^{1/3} T^{2/3}K^{1/3}
\sum_{n=1}^T n\exp\left(-\frac{n\varepsilon^2}{2}\right)
\left[
\int_{\varepsilon}^{\infty}3e^{-n(\delta-\varepsilon)\varepsilon}\,d\delta
+
\int_{\varepsilon}^{\infty}n(\delta-\varepsilon)^2e^{-n(\delta-\varepsilon)\varepsilon}\,d\delta
\right] \\
&= C (L+1)^{1/3} T^{2/3}K^{1/3}
\sum_{n=1}^T n\exp\left(-\frac{n\varepsilon^2}{2}\right)
\left[
\int_{0}^{\infty}3e^{-ny\varepsilon}\,dy
+
\int_{0}^{\infty}ny^2e^{-ny\varepsilon}\,dy
\right] \\
&= C (L+1)^{1/3} T^{2/3}K^{1/3}
\sum_{n=1}^T n\exp\left(-\frac{n\varepsilon^2}{2}\right)
\left(\frac{3}{n\varepsilon}+\frac{2}{n^2\varepsilon^3}\right) \\
&= C (L+1)^{1/3} T^{2/3}K^{1/3}
\left(
\frac{3}{\varepsilon}\underbrace{\sum_{n=1}^T e^{-n\varepsilon^2/2}}_{S_1}
+
\frac{2}{\varepsilon^3}\underbrace{\sum_{n=1}^T \frac{1}{n}e^{-n\varepsilon^2/2}}_{S_2}
\right).
\end{align}
Now
\begin{equation}
S_1 \le \int_0^\infty e^{-x\varepsilon^2/2}dx
=\frac{2}{\varepsilon^2}
=O\left(\frac{1}{\varepsilon^2}\right)
\end{equation}
and
\begin{align}
S_2&=\sum_{n=1}^T\frac{1}{n}e^{-n\varepsilon^2/2}\\
&\le\sum_{n=1}^\infty\frac{1}{n}\left(e^{-\varepsilon^2/2}\right)^n\\
&=-\ln\left(1-e^{-\varepsilon^2/2}\right) \tag{$-\ln(1-x)=\sum_{n=1}^\infty\frac{x^n}{n}$ for $x\in(0,1)$}\\
&=O\left(\log\left(\frac{1}{\varepsilon}\right)\right) \tag{since $1-e^{-x}=\Theta(x)$ as $x\rightarrow 0^+$}
\end{align}

Therefore,
\begin{equation}
\sum_{n=1}^T (C_B)
= O\left(\frac{K^{1/3}T^{2/3}(L+1)^{1/3}\log(1/\varepsilon)}{\varepsilon^3}\right).
\end{equation}
By Lemma~\ref{lem_bound_L}, on $\mathcal{G}_T$,
\begin{equation}
(L+1)^{1/3}=O\left((\log(KT)^{1/6})\right)
\end{equation}

Combining the bounds for Regions A and B yields
\begin{equation}
\mathbb{E}[\mathrm{(L_{u,2})}]
\le \sum_{n=1}^T (C_A) + \sum_{n=1}^T (C_B)
= O\left(\frac{K^{1/3}T^{2/3}\left(\log(KT)\right)^{1/6}\log(1/\varepsilon)}{\varepsilon^3}\right)=\tilde{O}\left(\frac{K^{1/3}T^{2/3}}{\varepsilon^3}\right).
\end{equation}
\end{proof}

Above, we work on the event $\mathcal{G}_T$. On the complementary event $\mathcal{G}_T^c$, the contribution is at most
\[
T \cdot \mathbb{P}(\mathcal{G}_T^c) \le T \cdot \frac{1}{T} = O(1),
\]
which can be absorbed into the bound for $\mathrm{(L_{u,2})}$. Therefore, combining Propositions~\ref{prop_nonover_nonopt} and \ref{prop_underest_main}, we obtain
\begin{equation}
\mathbb{E}\left[\sum_t \mathbf{1}\{\hat{\mu}_1(t)\le \mu_1-\varepsilon\}\right]
=
\sum_t \mathbb{P}[\hat{\mu}_1(t)\le \mu_1-\varepsilon]
=
O\left(\frac{K\log K}{\varepsilon^2}\right)+\tilde{O}\left(\frac{K^{1/3}T^{2/3}}{\varepsilon^3}\right).
\end{equation}

\subsection{Bounding the saturation term}

In this section, we bound the contribution from rounds in which the optimal arm is not underestimated. Fix any suboptimal arm $j\neq 1$, and consider
\begin{equation}
\sum_{t=1}^T \mathbf{1}[\hat{\mu}_1(t)\ge \mu_1-\varepsilon,\ I(t)=j].
\end{equation}
We begin by isolating the rounds in which arm $j$ itself is overestimated.

\begin{lemma}[Arm $j$ Overestimation Bound]\label{lem_armj_overestimated}
We have
\begin{equation}
\mathbb{E}\left[\sum_{t=1}^T \mathbf{1}\{I(t)=j,\ \hat{\mu}_1(t)\ge \mu_1-\varepsilon,\ \hat{\mu}_j(t)\ge \mu_j+\varepsilon\}\right]
= O\left(\frac{1}{\varepsilon^2}\right).
\end{equation}
\end{lemma}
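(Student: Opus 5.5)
The plan is the standard counting argument used for Thompson sampling and UCB. The conditions on arm $1$ can be dropped: the indicator is at most $\mathbf{1}\{I(t)=j,\ \hat{\mu}_j(t)\ge \mu_j+\varepsilon\}$, so it suffices to bound the expected number of pulls of arm $j$ made while its empirical mean exceeds $\mu_j+\varepsilon$. We never use any property of the ReMax sampling rule. We only use that $\hat{\mu}_j(t)$ changes only when arm $j$ is pulled.

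First reindex by the pull count of arm $j$. Each round with $I(t)=j$ increments $N_j$, so for a fixed value $N_j(t)=n$ there is at most one round $t$ with $I(t)=j$ and $N_j(t)=n$. On that round $\hat{\mu}_j(t)=\hat{\mu}_{j,n}$, the mean of the first $n$ rewards of arm $j$. This gives the pathwise inequality
\begin{equation*}
\sum_{t=1}^T \mathbf{1}\{I(t)=j,\ \hat{\mu}_j(t)\ge \mu_j+\varepsilon\}
\le \sum_{n=1}^{T} \mathbf{1}\{\hat{\mu}_{j,n}\ge \mu_j+\varepsilon\}.
\end{equation*}
The right-hand side depends only on the i.i.d.\ reward sequence of arm $j$. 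This holds in the usual stack-of-rewards model, where each arm's rewards are drawn in advance and revealed in order, which is equivalent in law to the bandit process.

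Next take expectations. Rewards are unit-variance Gaussian, so $\hat{\mu}_{j,n}-\mu_j\sim\mathcal{N}(0,1/n)$, and the Chernoff bound already used in Lemma~\ref{lem_bound_L} gives $\mathbb{P}[\hat{\mu}_{j,n}\ge \mu_j+\varepsilon]\le e^{-n\varepsilon^2/2}$. Summing the geometric series,
\begin{equation*}
\sum_{n=1}^\infty e^{-n\varepsilon^2/2}=\frac{1}{e^{\varepsilon^2/2}-1}\le \frac{2}{\varepsilon^2},
\end{equation*}
which is $O(1/\varepsilon^2)$. The initial forced pull is covered by the $n=1$ term, so it needs no separate treatment.

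There is no substantial obstacle. The only delicate point is justifying the reindexing step, which needs the fixed reward-stack coupling; that coupling also handles the randomness of the ReMax distribution $\pi_t$, since every realization satisfies the pathwise inequality.
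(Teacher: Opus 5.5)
Your proposal is correct and follows the paper's proof essentially step for step: you drop the arm-$1$ condition, reindex the pulls of arm $j$ by its pull count to obtain $\sum_{n}\mathbf{1}\{\hat{\mu}_{j,n}-\mu_j\ge\varepsilon\}$, and sum the Gaussian Chernoff bounds $e^{-n\varepsilon^2/2}$. The only differences are cosmetic: you bound the geometric series via $e^{x}-1\ge x$ where the paper uses an integral comparison, and you state explicitly the reward-stack coupling behind the reindexing, which the paper leaves implicit.
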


\begin{proof}[Proof of Lemma~\ref{lem_armj_overestimated}]
By dropping the condition on arm $1$, we obtain
\begin{align}
\sum_{t=1}^T \mathbf{1}\{I(t)=j,\ \hat{\mu}_1(t)\ge \mu_1-\varepsilon,\ \hat{\mu}_j(t)\ge \mu_j+\varepsilon\}
&\le \sum_{t=1}^T \mathbf{1}\{I(t)=j,\ \hat{\mu}_j(t)\ge \mu_j+\varepsilon\} \nonumber\\
&\le \sum_{n=1}^\infty \mathbf{1}\{\hat{\mu}_{j,n}-\mu_j\ge \varepsilon\}.
\end{align}
Taking expectations and applying the Gaussian Chernoff bound $\mathbb{P}(\hat{\mu}_{j,n}-\mu_j\ge \varepsilon)\le \exp(-n\varepsilon^2/2)$, we get
\begin{align}
\mathbb{E}\left[\sum_{t=1}^T \mathbf{1}\{I(t)=j,\ \hat{\mu}_1(t)\ge \mu_1-\varepsilon,\ \hat{\mu}_j(t)\ge \mu_j+\varepsilon\}\right]
&\le \sum_{n=1}^\infty \mathbb{P}(\hat{\mu}_{j,n}-\mu_j\ge \varepsilon) \nonumber\\
&\le \sum_{n=1}^\infty \exp\left(-\frac{n\varepsilon^2}{2}\right).
\end{align}
Since this is a convergent geometric series,
\begin{equation}
\sum_{n=1}^\infty \exp\left(-\frac{n\varepsilon^2}{2}\right)
\le \int_0^\infty \exp\left(-\frac{x\varepsilon^2}{2}\right)\,dx
= \frac{2}{\varepsilon^2}.
\end{equation}
This proves the claim.
\end{proof}

Using Lemma~\ref{lem_armj_overestimated}, we can write
\begin{align}
\sum_{t=1}^T \mathbf{1}[I(t)=j,\ \hat{\mu}_1(t)\ge \mu_1-\varepsilon]
&= \underbrace{\sum_{t=1}^T \mathbf{1}[I(t)=j,\ \hat{\mu}_1(t)\ge \mu_1-\varepsilon,\ \hat{\mu}_j(t)<\mu_j+\varepsilon]}_{\text{(L)}} + \underbrace{O\!\left(\frac{1}{\varepsilon^2}\right)}_{\text{Lemma~\ref{lem_armj_overestimated}}}.
\label{ineq_leading_main}
\end{align}
It therefore remains to bound the expected value of term (L).

Let
\begin{equation}
T_j := \frac{\log T}{D(\mu_j+\varepsilon \parallel \mu_1-\varepsilon/2)}
\end{equation}
be the saturation threshold. We also define the events
\begin{align}
\mathcal{X}_s(t) &:= \{\hat{\mu}_1(t)\ge \mu_1-\varepsilon,\ \hat{\mu}_j(t)\le \mu_j+\varepsilon\},\\
\mathcal{Y}_s(t) &:= \{N_j(t)\ge 3T_j\},\\
\mathcal{Z}_s(t) &:= \{N_1(t)\ge 3T_j\}.
\end{align}
We then split (L) according to whether arm $j$ and arm $1$ have reached the threshold:
\begin{align}
\text{(L)}
&= \sum_{t=1}^T \mathbf{1}[I(t)=j,\ \mathcal{X}_s(t)] \nonumber\\
&\le 3T_j + \sum_{t=1}^T \mathbf{1}[I(t)=j,\ \mathcal{X}_s(t),\ \mathcal{Y}_s(t)] \nonumber\\
&= 3T_j
+ \underbrace{\sum_{t=1}^T \mathbf{1}[I(t)=j,\ \mathcal{X}_s(t),\ \mathcal{Y}_s(t),\ \mathcal{Z}_s(t)]}_{\mathrm{(L_{s,1})}}
+ \underbrace{\sum_{t=1}^T \mathbf{1}[I(t)=j,\ \mathcal{X}_s(t),\ \mathcal{Y}_s(t),\ \mathcal{Z}_s^c(t)]}_{\mathrm{(L_{s,2})}}.
\end{align}

We first treat the regime $\mathrm{(L_{s,2})}$, where arm $j$ is already saturated but arm $1$ is not. The key point is that, on $\mathcal{X}_s(t)\cap\mathcal{Y}_s(t)\cap\mathcal{Z}_s^c(t)$, the EI comparison implies that arm $j$ is no more likely to be drawn than arm $1$. This allows us to charge each expected pull of arm $j$ in this regime to an expected pull of arm $1$ before arm $1$ reaches the saturation threshold.

\begin{lemma}[Bound for $\mathrm{L}_{s,2}$]\label{lem_armone_saturation}
For any fixed suboptimal arm $j\neq 1$,
\begin{equation}
\mathbb{E}\left[
\sum_{t=1}^T
\mathbf{1}\{I(t)=j,\ \mathcal{X}_s(t),\ \mathcal{Y}_s(t),\ \mathcal{Z}_s^c(t)\}
\right]
\le 
O\left(\frac{\log T}{D(\mu_j+\varepsilon \parallel \mu_1-\varepsilon/2)}\right).
\end{equation}
\end{lemma}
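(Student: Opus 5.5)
The plan has two steps. First, a one-step EI comparison will show that on $\mathcal{X}_s(t)\cap\mathcal{Y}_s(t)\cap\mathcal{Z}_s^c(t)$ arm $j$ is drawn no more often than arm $1$, i.e.\ $\pi_{t,j}\le \pi_{t,1}$. Second, a counting argument will charge each expected pull of arm $j$ to an expected pull of arm $1$ made while $N_1(t)<3T_j$; there can be at most $3T_j$ of those.

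\textbf{Step 1 (comparison).} If $\pi_{t,j}=0$ there is nothing to prove. Otherwise $j\in\supp(\pi_t)$, and Proposition~\ref{prop:ei-balance} gives $\bar s_{t,j}\ge \bar s_{t,1}$, with equality if $1\in\supp(\pi_t)$. By \eqref{eq:equiv-ei},
\[
\bar s_{t,j}-\bar s_{t,1}=\pi_{t,1}G_{j,1}-\pi_{t,j}G_{1,j}+\sum_{i\neq 1,j}\pi_{t,i}\bigl(G_{j,i}-G_{1,i}\bigr).
\]
The sum is nonpositive term by term. Indeed, $x\mapsto \mathbb{E}[(x+\sigma Z-\theta_i)_+]$ is increasing in the mean $x$. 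On $\mathcal{X}_s$ we have $\hat\mu_j\le \mu_j+\varepsilon<\mu_1-\varepsilon\le\hat\mu_1$. The complication is that the posterior variances differ: under $\mathcal{Y}_s\cap\mathcal{Z}_s^c$ arm $1$ has the larger variance, since $N_1<3T_j\le N_j$. Because $y\mapsto\mathbb{E}[(\mu+y Z)_+]$ is also increasing in $y$, we get $G_{1,i}\ge G_{j,i}$, as arm $1$ dominates in both mean and spread.

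It follows that $0\le \bar s_{t,j}-\bar s_{t,1}\le \pi_{t,1}G_{j,1}-\pi_{t,j}G_{1,j}$. By symmetry of the pair, $\theta_1-\theta_j$ has mean $\hat\mu_1-\hat\mu_j>0$, so $G_{1,j}=G_{j,1}+(\hat\mu_1-\hat\mu_j)> G_{j,1}$. Hence $\pi_{t,j}\le \pi_{t,1}G_{j,1}/G_{1,j}\le \pi_{t,1}$. This also covers $\pi_{t,1}=0$: it would force $\pi_{t,j}=0$, which is a contradiction.

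\textbf{Step 2 (counting).} Let $A_t=\mathcal{X}_s(t)\cap\mathcal{Y}_s(t)\cap\mathcal{Z}_s^c(t)$, which is $\mathcal{F}_{t-1}$-measurable. Then
\[
\mathbb{E}\Bigl[\sum_t \mathbf{1}\{I(t)=j,A_t\}\Bigr]=\mathbb{E}\Bigl[\sum_t \pi_{t,j}\mathbf{1}\{A_t\}\Bigr]\le \mathbb{E}\Bigl[\sum_t \pi_{t,1}\mathbf{1}\{A_t\}\Bigr]=\mathbb{E}\Bigl[\sum_t\mathbf{1}\{I(t)=1,A_t\}\Bigr].
\]
The last sum counts pulls of arm $1$ made while $N_1(t)<3T_j$, so it is at most $3T_j$ deterministically. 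This yields the bound $3T_j=O\bigl(\log T/D(\mu_j+\varepsilon\|\mu_1-\varepsilon/2)\bigr)$.

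\textbf{Main obstacle.} The delicate point is Step 1, and specifically the cross terms $G_{j,i}$ versus $G_{1,i}$ for third arms $i$. For those terms I need arm $1$ to dominate arm $j$ both in posterior mean and in posterior standard deviation. Mean dominance comes from $\mathcal{X}_s$ together with $\varepsilon\le\Delta_{\min}/5$. Variance dominance comes from $\mathcal{Y}_s\cap\mathcal{Z}_s^c$, i.e.\ $N_j\ge 3T_j>N_1$. I would verify the monotonicity of the Gaussian positive-part mean in both mean and variance via the closed form \eqref{eq:gij-closed-form}: its derivative in $\sigma$ is $\phi(\gamma)\ge0$, and its derivative in $\delta$ is $\Phi(\gamma)\ge0$.
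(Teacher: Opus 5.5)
Your proposal is correct and follows the paper's proof essentially step for step. It uses the same EI-balance comparison, with the cross terms $G_{j,i}-G_{1,i}$ controlled by the same monotonicity of the Gaussian positive-part mean in both mean and standard deviation (the paper's Dominance lemma), the same identity $G_{1,j}-G_{j,1}=\hat\mu_1-\hat\mu_j>0$ to get $\pi_{t,j}\le\pi_{t,1}$, and the same conditional-expectation charging of arm-$j$ pulls to arm-$1$ pulls made while $N_1(t)<3T_j$. The only cosmetic difference is that you fold the case $\pi_{t,1}=0$ into the single inequality $\bar s_{t,j}\ge\bar s_{t,1}$, which holds by both parts of Proposition~\ref{prop:ei-balance}, whereas the paper handles it with a separate contradiction argument.
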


\begin{proof}[Proof of Lemma~\ref{lem_armone_saturation}]
Let
\[
\mathcal{A}(t):=\mathcal{X}_s(t)\cap\mathcal{Y}_s(t)\cap\mathcal{Z}_s^c(t).
\]
We assume $\varepsilon<\Delta_j/2$. On $\mathcal{A}(t)$, we have
\[
N_1(t)<3T_j\le N_j(t),
\]
and hence $N_1(t)<N_j(t)$. Moreover,
\[
\hat\mu_1(t)-\hat\mu_j(t)
\ge (\mu_1-\varepsilon)-(\mu_j+\varepsilon)
=\Delta_j-2\varepsilon>0.
\]

Therefore, by the EI dominance comparison in Lemma~\ref{lem_dominance},
\[
R_t:=
\sum_{k\notin\{1,j\}}\pi_{t,k}\bigl(G_{1k}(t)-G_{jk}(t)\bigr)\ge 0.
\]
We now show that this implies $\pi_{t,j}\le \pi_{t,1}$. If $\pi_{t,j}=0$, the claim is immediate. Suppose $\pi_{t,j}>0$. First, we claim that $\pi_{t,1}>0$. If not, then arm $j$ is active and arm $1$ is inactive, so the EI KKT condition gives $\bar{s}_{t,j}\ge \bar{s}_{t,1}$. However,
\begin{align}
\bar{s}_{t,1}-\bar{s}_{t,j}
&=
\pi_{t,j}G_{1j}(t)-\pi_{t,1}G_{j1}(t)
+
\sum_{k\notin\{1,j\}}\pi_{t,k}\bigl(G_{1k}(t)-G_{jk}(t)\bigr) \\
&=
\pi_{t,j}G_{1j}(t)+R_t
>0,
\end{align}
which is a contradiction. Hence $\pi_{t,1}>0$.

Since both arms $1$ and $j$ are active, the EI KKT condition gives $\bar{s}_{t,1}=\bar{s}_{t,j}$. Thus
\begin{align}
0
&=
\bar{s}_{t,1}-\bar{s}_{t,j} \\
&=
\pi_{t,j}G_{1j}(t)-\pi_{t,1}G_{j1}(t)+R_t \\
&\ge
\pi_{t,j}G_{1j}(t)-\pi_{t,1}G_{j1}(t).
\end{align}
Moreover,
\[
G_{1j}(t)-G_{j1}(t)
=
\mathbb{E}[\theta_1-\theta_j]
=
\hat\mu_1(t)-\hat\mu_j(t)
>0.
\]
Therefore,
\[
\pi_{t,j}
\le
\pi_{t,1}\frac{G_{j1}(t)}{G_{1j}(t)}
<
\pi_{t,1}.
\]

Since $\mathcal{A}(t)$ is $\mathcal{F}_{t-1}$-measurable and $I(t)$ is sampled from $\pi_t$ conditional on $\mathcal{F}_{t-1}$, we have
\begin{align}
\mathbb{E}\!\left[\mathbf{1}\{\mathcal{A}(t),I(t)=j\}\mid \mathcal{F}_{t-1}\right]
&=
\mathbf{1}\{\mathcal{A}(t)\}\pi_{t,j} \\
&\le
\mathbf{1}\{\mathcal{A}(t)\}\pi_{t,1} \\
&=
\mathbb{E}\!\left[\mathbf{1}\{\mathcal{A}(t),I(t)=1\}\mid \mathcal{F}_{t-1}\right].
\end{align}
Taking total expectations and summing over $t\le T$ gives
\begin{align}
\mathbb{E}\left[
\sum_{t=1}^T
\mathbf{1}\{\mathcal{A}(t),I(t)=j\}
\right]
&\le
\mathbb{E}\left[
\sum_{t=1}^T
\mathbf{1}\{\mathcal{A}(t),I(t)=1\}
\right].
\end{align}

The sum on the right is deterministically at most $\lceil 3T_j\rceil$, because every counted term pulls arm $1$ while $\mathcal{Z}_s^c(t)$ holds. Once arm $1$ has been pulled enough times so that $N_1(t)\ge 3T_j$, the event $\mathcal{Z}_s^c(t)$ can no longer hold. Hence
\[
\mathbb{E}\left[
\sum_{t=1}^T
\mathbf{1}\{I(t)=j,\ \mathcal{X}_s(t),\ \mathcal{Y}_s(t),\ \mathcal{Z}_s^c(t)\}
\right]
\le 3T_j + 1.
\]
Substituting
\[
T_j=\frac{\log T}{D(\mu_j+\varepsilon \parallel \mu_1-\varepsilon/2)}
\]
yields the claimed order.
\end{proof}

We next turn to the fully saturated regime $\mathrm{(L_{s,1})}$, where both arm $j$ and arm $1$ have already reached the threshold. We first record several auxiliary comparisons used to control $\pi_{t,j}$.

\begin{lemma}[Lower Bound on $G_{1,j}$]\label{lem_lowerbound_gonej}
Assume that $\hat{\mu}_1(t)\ge \mu_1-\varepsilon$, $\hat{\mu}_j(t)\le \mu_j+\varepsilon$, and $\varepsilon<\Delta_j/2$. Then
\begin{equation}
G_{1,j} \ge \frac{1}{4}\Delta_j.
\end{equation}
\end{lemma}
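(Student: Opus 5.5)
The bound follows from the closed form of $G_{1,j}$ given in Equation~\eqref{eq:gij-closed-form}, combined with the fact that $G_{1,j}$ dominates the positive part of its mean. Under the Gaussian posteriors,
\[
G_{1,j} = \sigma_{1j}(t)\phi(\gamma_{1j}(t)) + \delta_{1j}(t)\Phi(\gamma_{1j}(t)),
\qquad
\delta_{1j}(t)=\hat\mu_1(t)-\hat\mu_j(t).
\]
I would not compute this expression directly. It is simpler to use Jensen's inequality: the map $x\mapsto (x)_+$ is convex, so
\[
G_{1,j}=\mathbb{E}[(\theta_1-\theta_j)_+]\ge \bigl(\mathbb{E}[\theta_1-\theta_j]\bigr)_+=\bigl(\hat\mu_1(t)-\hat\mu_j(t)\bigr)_+ .
\]
This is the same identity used in the proof of Lemma~\ref{lem_armone_saturation}, namely $G_{1j}-G_{j1}=\hat\mu_1-\hat\mu_j$ together with $G_{j1}\ge 0$. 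An equivalent route avoids Jensen: both terms $\sigma\phi(\gamma)$ and $\delta\Phi(\gamma)$ are nonnegative when $\delta\ge0$, and $\Phi(\gamma)\ge 1/2$.

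Next I plug in the assumptions on the empirical means. From $\hat\mu_1(t)\ge\mu_1-\varepsilon$ and $\hat\mu_j(t)\le\mu_j+\varepsilon$,
\[
\hat\mu_1(t)-\hat\mu_j(t)\ge \Delta_j-2\varepsilon .
\]
The hypothesis $\varepsilon<\Delta_j/2$ makes this difference strictly positive, so the positive part can be dropped, giving $G_{1,j}\ge \Delta_j-2\varepsilon$.

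The one real obstacle is the constant. The bound $\Delta_j-2\varepsilon\ge \Delta_j/4$ needs $\varepsilon\le 3\Delta_j/8$, and the stated hypothesis $\varepsilon<\Delta_j/2$ alone does not give this. The theorem, however, restricts to $\varepsilon\le\Delta_{\min}/5\le\Delta_j/5$, and then $\Delta_j-2\varepsilon\ge 3\Delta_j/5\ge\Delta_j/4$. I would invoke that standing restriction explicitly.

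If one insists on using only $\varepsilon<\Delta_j/2$, I would fall back on the halved form $\delta\Phi(\gamma)\ge\delta/2$ and try to recover the missing constant from the variance term $\sigma\phi(\gamma)$. That term, however, can be arbitrarily small when both arms are heavily sampled. So the standing assumption $\varepsilon\le\Delta_{\min}/5$ is the natural way to close the argument.
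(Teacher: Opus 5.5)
Your proof is correct, and it differs from the paper's in one step that matters.

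\textbf{Comparison of the two routes.} The paper drops the $\sigma\phi(\gamma)$ term of the closed form and uses $\Phi(\gamma)\ge 1/2$, which gives $G_{1,j}\ge\tfrac12(\Delta_j-2\varepsilon)$. You use Jensen (equivalently $G_{1j}-G_{j1}=\hat\mu_1-\hat\mu_j$ with $G_{j1}\ge 0$) and get $G_{1,j}\ge\Delta_j-2\varepsilon$, which is better by a factor of two. So your route needs only $\varepsilon\le 3\Delta_j/8$. The paper's last inequality $\tfrac12(\Delta_j-2\varepsilon)\ge\Delta_j/4$ silently needs the stronger $\varepsilon\le\Delta_j/4$.

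\textbf{The hypothesis issue.} Neither condition follows from the stated hypothesis $\varepsilon<\Delta_j/2$. You are right that the theorem's standing restriction $\varepsilon\le\Delta_{\min}/5$ must be invoked; it is in force wherever the lemma is used. In fact the lemma is false as literally stated. Take $\varepsilon\in(3\Delta_j/8,\Delta_j/2)$, $\hat\mu_1=\mu_1-\varepsilon$, $\hat\mu_j=\mu_j+\varepsilon$, and $N_1,N_j$ large. Then $G_{1,j}\le(\Delta_j-2\varepsilon)+\sigma_{1j}/\sqrt{2\pi}$, which is $<\Delta_j/4$. Hence your threshold $3\Delta_j/8$ is exactly sharp.

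\textbf{Two small corrections.} First, the ``equivalent route'' via $\Phi(\gamma)\ge 1/2$ is not equivalent. It is the paper's argument and loses the factor of two. It still suffices under $\varepsilon\le\Delta_{\min}/5$, since $\tfrac12\cdot\tfrac35\Delta_j\ge\Delta_j/4$. Second, the fallback in your last paragraph runs the wrong way, because the halved form is weaker than the Jensen bound you already have. The correct conclusion there is simply that no argument can succeed under $\varepsilon<\Delta_j/2$ alone.
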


\begin{proof}[Proof of Lemma~\ref{lem_lowerbound_gonej}]
Let
\begin{equation}
\mu_Z := \hat{\mu}_1-\hat{\mu}_j,\qquad \sigma_Z^2 := \frac{1}{N_1}+\frac{1}{N_j}.
\end{equation}
Then $Z:=\theta_1-\theta_j \sim \mathcal{N}(\mu_Z,\sigma_Z^2)$, and
\begin{equation}
G_{1,j} = \mathbb{E}_{\theta\sim\Pi_t}[\max(\theta_1-\theta_j,0)] = \mathbb{E}_{Z}[\max(Z,0)].
\end{equation}
Using the standard Gaussian identity,
\begin{equation}
G_{1,j}
= (\hat{\mu}_1-\hat{\mu}_j)\Phi\left(\frac{\hat{\mu}_1-\hat{\mu}_j}{\sqrt{\frac{1}{N_1}+\frac{1}{N_j}}}\right)
+ \sqrt{\frac{1}{N_1}+\frac{1}{N_j}}\,
\phi\left(\frac{\hat{\mu}_1-\hat{\mu}_j}{\sqrt{\frac{1}{N_1}+\frac{1}{N_j}}}\right).
\label{eq:G1j}
\end{equation}
Under the assumptions,
\begin{equation}
\hat{\mu}_1-\hat{\mu}_j \ge (\mu_1-\varepsilon)-(\mu_j+\varepsilon) = \Delta_j-2\varepsilon > 0.
\end{equation}
Dropping the nonnegative $\phi$ term and using $\Phi(\cdot)>1/2$ for positive arguments, we obtain
\begin{align}
G_{1,j}
&\ge (\hat{\mu}_1-\hat{\mu}_j)\Phi\left(\frac{\hat{\mu}_1-\hat{\mu}_j}{\sqrt{\frac{1}{N_1}+\frac{1}{N_j}}}\right) \\
&\ge (\Delta_j-2\varepsilon)\cdot \frac{1}{2}
\ge \frac{1}{4}\Delta_j.
\end{align}
\end{proof}

\begin{lemma}[Upper Bound on $G_{j,1}$]\label{lem_upperbound_gjone}
Assume that $\hat{\mu}_1(t)\ge \mu_1-\varepsilon$ and $\hat{\mu}_j(t)\le \mu_j+\varepsilon$. Then
\begin{equation}
G_{j,1} \le C \exp\left(-\min_{x\in[\hat{\mu}_j,\ \hat{\mu}_1]} 
\big(N_1(t)D(\hat{\mu}_1(t)\|x)+N_j(t)D(\hat{\mu}_j(t)\|x)\big)\right)
\end{equation}
for some constant $C>0$.
\end{lemma}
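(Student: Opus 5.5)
Write $\delta:=\hat\mu_1(t)-\hat\mu_j(t)\ge \Delta_j-2\varepsilon>0$ and $\sigma^2:=\frac{1}{N_1(t)}+\frac{1}{N_j(t)}$. By the closed form~\eqref{eq:gij-closed-form} applied to $\theta_j-\theta_1\sim\N(-\delta,\sigma^2)$,
\[
G_{j,1}=\sigma\phi(\delta/\sigma)-\delta\,\Phi^c(\delta/\sigma)\le \sigma\phi(\delta/\sigma)\le \frac{\sigma}{\sqrt{2\pi}}\exp\!\left(-\frac{\delta^2}{2\sigma^2}\right).
\]
Since $N_1,N_j\ge 1$ after the initial pulls, $\sigma\le\sqrt2$. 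The prefactor is therefore bounded by the absolute constant $C=1/\sqrt{\pi}$, and it remains to identify the exponent.

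For unit-variance Gaussians $D(a\|x)=(a-x)^2/2$. We minimize $f(x):=\frac{N_1}{2}(\hat\mu_1-x)^2+\frac{N_j}{2}(\hat\mu_j-x)^2$ over $x\in[\hat\mu_j,\hat\mu_1]$. The function $f$ is a convex quadratic whose unconstrained minimizer is the precision-weighted average $x^\star=(N_1\hat\mu_1+N_j\hat\mu_j)/(N_1+N_j)$. This point lies in the interval, so the constrained minimum equals the unconstrained one. Substituting gives $\hat\mu_1-x^\star=N_j\delta/(N_1+N_j)$ and $x^\star-\hat\mu_j=N_1\delta/(N_1+N_j)$. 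Hence
\[
\min_{x}f(x)=\frac{\delta^2}{2}\cdot\frac{N_1N_j^2+N_jN_1^2}{(N_1+N_j)^2}=\frac{\delta^2}{2}\cdot\frac{N_1N_j}{N_1+N_j}=\frac{\delta^2}{2\sigma^2}.
\]
This is exactly the exponent in the Gaussian bound above, so combining the two displays proves the lemma.

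The only point needing care is the sign and orientation of the Gaussian identity: $G_{j,1}$ is the positive part of a variable with negative mean $-\delta$, so the $\Phi^c$ term enters with a minus sign and may be dropped. The hypotheses $\hat\mu_1\ge\mu_1-\varepsilon$ and $\hat\mu_j\le\mu_j+\varepsilon$ are used only to guarantee $\delta>0$, so that the interval $[\hat\mu_j,\hat\mu_1]$ is well defined and nonempty. If a slightly sharper constant is wanted, we can keep the factor $\sigma$ explicitly, since it decays like $\min(N_1,N_j)^{-1/2}$. This is presumably the polynomial factor used later for $\mathrm{(L_{s,1})}$.
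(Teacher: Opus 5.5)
Your proof is correct and follows the same route as the paper: bound $G_{j,1}$ by the Gaussian tail $C\exp\bigl(-\delta^2/(2\sigma^2)\bigr)$, then identify this exponent as the minimum of $f$, attained at the precision-weighted mean $x^\star\in[\hat\mu_j,\hat\mu_1]$. You also derive the tail step explicitly from the closed form, giving $C=1/\sqrt{\pi}$, where the paper only asserts it; like the paper, you implicitly use $\varepsilon<\Delta_j/2$ (guaranteed by $\varepsilon\le\Delta_{\min}/5$) to get $\delta>0$.
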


\begin{proof}[Proof of Lemma~\ref{lem_upperbound_gjone}]
Let $Z=\theta_j-\theta_1$. Since $\theta_1$ and $\theta_j$ are independent Gaussians, we have $Z\sim \mathcal{N}(\mu_Z,\sigma_Z^2)$ with
\begin{equation}
\mu_Z=\hat{\mu}_j-\hat{\mu}_1,\qquad \sigma_Z^2=\frac{1}{N_1}+\frac{1}{N_j}.
\end{equation}
Because $\hat{\mu}_1>\hat{\mu}_j$, we have $\mu_Z<0$. Therefore,
\begin{align}
G_{j,1}
&= \mathbb{E}_{\theta\sim\Pi_t}[\max(\theta_j-\theta_1,0)] \\
&\le C \exp\left(-\frac{\mu_Z^2}{2\sigma_Z^2}\right)
= C \exp\left(-\frac{(\hat{\mu}_1-\hat{\mu}_j)^2}{2\left(\frac{1}{N_1}+\frac{1}{N_j}\right)}\right).
\label{eq:gj1_tail}
\end{align}

Now define
\begin{equation}
f(x) := N_1 D(\hat{\mu}_1\|x) + N_j D(\hat{\mu}_j\|x)
= \frac{N_1(\hat{\mu}_1-x)^2}{2} + \frac{N_j(\hat{\mu}_j-x)^2}{2}.
\end{equation}
A direct calculation shows that $f(x)$ is minimized at
\begin{equation}
x^\star = \frac{N_1\hat{\mu}_1 + N_j\hat{\mu}_j}{N_1+N_j},
\end{equation}
and the minimum value is
\begin{equation}
f(x^\star) = \frac{(\hat{\mu}_1-\hat{\mu}_j)^2}{2}\frac{N_1N_j}{N_1+N_j}
= \frac{(\hat{\mu}_1-\hat{\mu}_j)^2}{2\left(\frac{1}{N_1}+\frac{1}{N_j}\right)}.
\label{eq:fx_min}
\end{equation}
Comparing Equations~\eqref{eq:gj1_tail} and~\eqref{eq:fx_min}, we see that the exponent is exactly $-f(x^\star)$. Since $x^\star$ is a convex combination of $\hat{\mu}_1$ and $\hat{\mu}_j$, under $\mathcal{X}_s(t)$ it lies in $[\hat\mu_j,\ \hat\mu_1]$. Hence
\begin{equation}
G_{j,1}
\le C \exp(-f(x^\star))
= C \exp\left(-\min_{x\in[\hat{\mu}_j,\ \hat{\mu}_1]} 
\big(N_1(t)D(\hat{\mu}_1(t)\|x)+N_j(t)D(\hat{\mu}_j(t)\|x)\big)\right).
\end{equation}
\end{proof}

\begin{lemma}[Expected Improvement Margin from Other Arms]\label{lem_margin_ei_otherarms}
Assume that $\hat{\mu}_1(t)\ge \mu_1-\varepsilon$ and $\hat{\mu}_j(t)\le \mu_j+\varepsilon$. Then
\begin{equation}
\sum_{k\notin\{1,j\}} \pi_{t,k}(G_{j,k}-G_{1,k})
\le C \exp\left(-\min_{x\in[\hat{\mu}_j,\ \hat{\mu}_1]} 
\big(N_1(t)D(\hat{\mu}_1(t)\|x)+N_j(t)D(\hat{\mu}_j(t)\|x)\big)\right).
\end{equation}
\end{lemma}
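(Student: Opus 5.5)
The plan is to reduce the left-hand side, term by term, to the quantity already controlled in Lemma~\ref{lem_upperbound_gjone}. Write $\theta_1,\theta_j,\theta_k$ for independent Gaussian posterior samples. For each $k\notin\{1,j\}$, use the elementary pointwise inequality
\[
(\theta_j-\theta_k)_+-(\theta_1-\theta_k)_+\le (\theta_j-\theta_1)_+ ,
\]
which holds because $x\mapsto (x-\theta_k)_+$ is nondecreasing and $1$-Lipschitz. If $\theta_j\le\theta_1$, the left side is $\le 0$. Otherwise, the left side is at most $\theta_j-\theta_1$.

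Taking expectations over the posterior, which is a product of independent Gaussians, gives
\[
G_{j,k}-G_{1,k}\le \mathbb{E}\bigl[(\theta_j-\theta_1)_+\bigr]=G_{j,1}
\]
for every $k$. This holds uniformly in $k$, with no dependence on $N_k(t)$ or $\hat\mu_k(t)$. This uniformity is the key point, since nothing is known about the other arms.

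Next, multiply by $\pi_{t,k}\ge 0$ and sum over $k\notin\{1,j\}$. Using $\sum_{k\notin\{1,j\}}\pi_{t,k}\le 1$, this yields
\[
\sum_{k\notin\{1,j\}}\pi_{t,k}(G_{j,k}-G_{1,k})\le G_{j,1}.
\]
Finally, invoke Lemma~\ref{lem_upperbound_gjone}, whose hypotheses $\hat\mu_1(t)\ge\mu_1-\varepsilon$ and $\hat\mu_j(t)\le\mu_j+\varepsilon$ are exactly those of the present statement. That lemma bounds $G_{j,1}$ by $C\exp(-\min_{x\in[\hat\mu_j,\hat\mu_1]}(N_1D(\hat\mu_1\|x)+N_jD(\hat\mu_j\|x)))$, which is the claimed bound with the same constant $C$.

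There is no real obstacle here. The only point needing care is checking that the Lipschitz inequality is applied pointwise before averaging, so that the coupling through the common $\theta_k$ is legitimate. It is, because $G_{j,k}$ and $G_{1,k}$ are both expectations under the same product posterior. One also uses, implicitly, that under these hypotheses $\hat\mu_1>\hat\mu_j$ (for $\varepsilon<\Delta_j/2$), as required in Lemma~\ref{lem_upperbound_gjone}.
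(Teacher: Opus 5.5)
Your proof is correct and follows the paper's route: the pointwise inequality $(\theta_j-\theta_k)_+-(\theta_1-\theta_k)_+\le(\theta_j-\theta_1)_+$ gives $G_{j,k}-G_{1,k}\le G_{j,1}$ uniformly in $k$, and summing with weights of total mass at most one and then applying Lemma~\ref{lem_upperbound_gjone} finishes the argument. Your remark that the interval $[\hat\mu_j,\hat\mu_1]$ needs $\hat\mu_1\ge\hat\mu_j$, which holds when $\varepsilon<\Delta_j/2$, makes explicit a condition the paper leaves implicit.
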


\begin{proof}[Proof of Lemma~\ref{lem_margin_ei_otherarms}]
By definition,
\begin{align}
G_{j,k}-G_{1,k}
&= \mathbb{E}_{\theta\sim\Pi_t}[\max(\theta_j-\theta_k,0)-\max(\theta_1-\theta_k,0)] \\
&\le \mathbb{E}_{\theta\sim\Pi_t}[\max((\theta_j-\theta_k)-(\theta_1-\theta_k),0)] \\
&= \mathbb{E}_{\theta\sim\Pi_t}[\max(\theta_j-\theta_1,0)]
= G_{j,1}.
\end{align}
Since $\pi_{t,k}\ge 0$ and $\sum_{k\notin\{1,j\}} \pi_{t,k}\le 1$, we have
\begin{equation}
\sum_{k\notin\{1,j\}} \pi_{t,k}(G_{j,k}-G_{1,k}) \le G_{j,1}.
\end{equation}
Applying Lemma~\ref{lem_upperbound_gjone} gives the claim.
\end{proof}

\begin{lemma}[Dominance]\label{lem_dominance}
Assume that $\mathcal{X}_s(t)$ holds and $N_1(t)\le N_j(t)$. Then
\begin{equation}
\sum_{k\notin\{1,j\}} \pi_{t,k}(G_{1,k}-G_{j,k}) \ge 0.
\end{equation}
\end{lemma}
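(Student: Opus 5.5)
The plan is to prove the inequality term by term: I will show $G_{1,k}\ge G_{j,k}$ for every $k\notin\{1,j\}$. Since $\pi_{t,k}\ge 0$, summing these term-wise inequalities gives the claim. The comparison is between two Gaussians: $\theta_1$ versus $\theta_j$, each measured against an independent, common $\theta_k$.

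First, I record the two orderings implied by the hypotheses.

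\emph{Means.} On $\mathcal{X}_s(t)$, and using $\varepsilon\le\Delta_{\min}/5<\Delta_j/2$,
\[
\hat\mu_1(t)-\hat\mu_j(t)\ge \Delta_j-2\varepsilon>0.
\]

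\emph{Variances.} Since $N_1(t)\le N_j(t)$, the posterior variances satisfy $s_1^2:=1/N_1(t)\ge s_j^2:=1/N_j(t)$.

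Second, I condition on $\theta_k=c$, which is legitimate because $\theta_k$ is independent of $\theta_1$ and $\theta_j$. Define
\[
g(m,s;c):=\mathbb{E}_{Z\sim\mathcal{N}(0,1)}\bigl[(m+sZ-c)_+\bigr],
\]
so that $G_{i,k}=\mathbb{E}_{\theta_k}[g(\hat\mu_i,s_i;\theta_k)]$. It therefore suffices to show that $g$ is nondecreasing in $m$ and nondecreasing in $s\ge 0$, for every fixed $c$. Then
\[
g(\hat\mu_1,s_1;c)\ge g(\hat\mu_j,s_1;c)\ge g(\hat\mu_j,s_j;c),
\]
and integrating over $c$ gives $G_{1,k}\ge G_{j,k}$.

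Monotonicity in $m$ is immediate, because $x\mapsto(x-c)_+$ is nondecreasing.

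Monotonicity in $s$ is the only step requiring an argument. For $s'\ge s\ge 0$, write $m+s'Z\overset{d}{=}m+sZ+\sqrt{s'^2-s^2}\,Z'$ with $Z'$ an independent standard normal. Conditioning on $Z$ and applying Jensen's inequality to the convex map $x\mapsto(x-c)_+$ gives
\[
\mathbb{E}\bigl[(m+s'Z-c)_+\bigr]\ge \mathbb{E}\bigl[(m+sZ-c)_+\bigr],
\]
since $\mathbb{E}[\sqrt{s'^2-s^2}\,Z'\mid Z]=0$. As an alternative, one could differentiate the closed form $s\,\phi\bigl(\tfrac{m-c}{s}\bigr)+(m-c)\,\Phi\bigl(\tfrac{m-c}{s}\bigr)$ in $s$, which gives $\phi\bigl(\tfrac{m-c}{s}\bigr)\ge 0$.

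The main point to be careful about is that both orderings go the right way: arm $1$ has both the larger posterior mean and the larger posterior variance. This is exactly why the hypothesis $N_1(t)\le N_j(t)$ is needed. If instead arm $1$ had the smaller variance, the variance effect could favour arm $j$ against a distant $\theta_k$, and the termwise comparison would fail.
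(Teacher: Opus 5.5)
Your proposal is correct and follows the paper's proof: a termwise comparison $G_{1,k}\ge G_{j,k}$ from monotonicity of the Gaussian positive-part expectation in the mean and in the standard deviation, followed by summation with the weights $\pi_{t,k}\ge 0$. The only difference is cosmetic. You condition on $\theta_k$ and obtain variance monotonicity from Jensen's inequality (a mean-preserving spread). The paper instead works with the closed form of $\theta_i-\theta_k\sim\mathcal{N}(\hat\mu_i-\hat\mu_k,\,1/N_i+1/N_k)$ and computes $\partial g/\partial\sigma=\phi(\Delta/\sigma)>0$, which is the alternative you also mention.
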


\begin{proof}[Proof of Lemma~\ref{lem_dominance}]
For any pair of arms $i$ and $c$, define
\begin{equation}\label{eq:G_ic_def}
G_{i,c}
= (\hat{\mu}_i-\hat{\mu}_c)\Phi\left(\frac{\hat{\mu}_i-\hat{\mu}_c}{\sqrt{\frac{1}{N_i}+\frac{1}{N_c}}}\right)
+ \sqrt{\frac{1}{N_i}+\frac{1}{N_c}}\,
\phi\left(\frac{\hat{\mu}_i-\hat{\mu}_c}{\sqrt{\frac{1}{N_i}+\frac{1}{N_c}}}\right).
\end{equation}
Introduce the auxiliary function
\begin{equation}
g(\Delta,\sigma) := \Delta \Phi\left(\frac{\Delta}{\sigma}\right) + \sigma \phi\left(\frac{\Delta}{\sigma}\right).
\end{equation}
Using $\phi'(x)=-x\phi(x)$, we compute
\begin{equation}
\frac{\partial g}{\partial \Delta} = \Phi\left(\frac{\Delta}{\sigma}\right) > 0,
\qquad
\frac{\partial g}{\partial \sigma} = \phi\left(\frac{\Delta}{\sigma}\right) > 0.
\end{equation}
Thus, $g(\Delta,\sigma)$ is increasing in both arguments.

Now compare $G_{1,k}$ and $G_{j,k}$ for any $k\notin\{1,j\}$. Let
\begin{equation}
\Delta_{i,k} := \hat{\mu}_i-\hat{\mu}_k,\qquad \sigma_{i,k} := \sqrt{\frac{1}{N_i}+\frac{1}{N_k}}.
\end{equation}
Under $\mathcal{X}_s(t)$, we have $\hat{\mu}_1\ge \hat{\mu}_j$, so $\Delta_{1,k}\ge \Delta_{j,k}$. Also, $N_1(t)\le N_j(t)$ implies $1/N_1(t)\ge 1/N_j(t)$, hence $\sigma_{1,k}\ge \sigma_{j,k}$. By monotonicity of $g$,
\begin{equation}
G_{1,k} = g(\Delta_{1,k},\sigma_{1,k}) \ge g(\Delta_{j,k},\sigma_{j,k}) = G_{j,k}.
\end{equation}
Since $\pi_{t,k}\ge 0$, summing over $k\notin\{1,j\}$ gives
\begin{equation}
\sum_{k\notin\{1,j\}} \pi_{t,k}(G_{1,k}-G_{j,k}) \ge 0.
\end{equation}
\end{proof}

\begin{lemma}[Upper Bound on $\pi_{t,j}$]\label{lem_key_pione_pij}
Assume that $\hat{\mu}_1(t)\ge \mu_1-\varepsilon$ and $\hat{\mu}_j(t)\le \mu_j+\varepsilon$. Then
\begin{align}
\pi_{t,j}\frac{\Delta_j}{4}
&\le \pi_{t,1} C \max_{x\in[\hat{\mu}_j,\ \hat{\mu}_1]} 
\exp\!\Big(-N_1(t)D(\hat{\mu}_1(t)\|x)-N_j(t)D(\hat{\mu}_j(t)\|x)\Big) \nonumber\\
&\qquad + C \max_{x\in[\hat{\mu}_j,\ \hat{\mu}_1]} 
\exp\!\Big(-N_1(t)D(\hat{\mu}_1(t)\|x)-N_j(t)D(\hat{\mu}_j(t)\|x)\Big).
\end{align}
Moreover, if $N_1(t)\le N_j(t)$, then
\begin{equation}
\pi_{t,j}\frac{\Delta_j}{4}
\le \pi_{t,1} C \max_{x\in[\hat{\mu}_j,\ \hat{\mu}_1]} 
\exp\!\Big(-N_1(t)D(\hat{\mu}_1(t)\|x)-N_j(t)D(\hat{\mu}_j(t)\|x)\Big).
\end{equation}
\end{lemma}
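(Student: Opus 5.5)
The argument is a two-arm comparison of expected improvements, using the weighted-sum representation of Lemma~\ref{lem_eione_lowerbound} (Equation~\eqref{eq:equiv-ei}). Write
\[
\bar s_{t,1}-\bar s_{t,j}=\pi_{t,j}G_{1,j}-\pi_{t,1}G_{j,1}+\sum_{k\notin\{1,j\}}\pi_{t,k}\bigl(G_{1,k}-G_{j,k}\bigr).
\]
If $\pi_{t,j}=0$ there is nothing to prove, so assume $j\in\supp(\pi_t)$.

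\textbf{Sign of the difference.} Proposition~\ref{prop:ei-balance} gives $\bar s_{t,j}\ge \bar s_{t,1}$ in both cases $1\in\supp(\pi_t)$ (equality) and $1\notin\supp(\pi_t)$ (inequality). Hence $\bar s_{t,1}-\bar s_{t,j}\le 0$. Rearranging the decomposition yields
\[
\pi_{t,j}G_{1,j}\le \pi_{t,1}G_{j,1}+\sum_{k\notin\{1,j\}}\pi_{t,k}\bigl(G_{j,k}-G_{1,k}\bigr).
\]

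\textbf{Plugging in the auxiliary bounds.} The left side is at least $\pi_{t,j}\Delta_j/4$ by Lemma~\ref{lem_lowerbound_gonej}, assuming $\varepsilon<\Delta_j/2$; this holds since $\varepsilon\le\Delta_{\min}/5$. On the right:
\begin{itemize}
\item $G_{j,1}$ is bounded by Lemma~\ref{lem_upperbound_gjone} by $C\exp(-\min_x f(x))$, which equals $C\max_{x\in[\hat\mu_j,\hat\mu_1]}\exp(-N_1D(\hat\mu_1\|x)-N_jD(\hat\mu_j\|x))$.
\item The remaining sum is bounded by the same quantity via Lemma~\ref{lem_margin_ei_otherarms}.
\end{itemize}
Together these give the first display.

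\textbf{Case $N_1(t)\le N_j(t)$.} The hypotheses of Lemma~\ref{lem_dominance} hold: $\mathcal{X}_s(t)$ is exactly the assumed pair of conditions. So $\sum_{k\notin\{1,j\}}\pi_{t,k}(G_{1,k}-G_{j,k})\ge 0$, and the cross-term can be dropped rather than bounded. Only $\pi_{t,1}G_{j,1}$ remains, and Lemma~\ref{lem_upperbound_gjone} yields the second display.

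\textbf{Main obstacle.} The delicate point is that Lemmas~\ref{lem_upperbound_gjone} and~\ref{lem_lowerbound_gonej} require $\hat\mu_1>\hat\mu_j$, so that $\mu_Z<0$ and the Gaussian tail bound $\mathbb{E}[(Z)_+]\le C e^{-\mu_Z^2/(2\sigma_Z^2)}$ applies. This follows from $\Delta_j-2\varepsilon>0$.

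The constant $C$ must also be uniform. The bound $\mathbb{E}[Z_+]\le \sigma_Z\phi(\mu_Z/\sigma_Z)$ for $\mu_Z\le 0$ gives $C=\sigma_Z/\sqrt{2\pi}\le 1/\sqrt{\pi}$, using $N_i\ge1$. So one absolute constant serves both displays.
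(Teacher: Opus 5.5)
Your proposal is correct and matches the paper's proof: from $\bar s_{t,1}\le \bar s_{t,j}$ when $\pi_{t,j}>0$ you obtain $\pi_{t,j}G_{1,j}\le \pi_{t,1}G_{j,1}+\sum_{k\notin\{1,j\}}\pi_{t,k}(G_{j,k}-G_{1,k})$, then substitute the lower bound on $G_{1,j}$, the tail bound on $G_{j,1}$ and the cross-term bound, dropping the cross term via the dominance lemma when $N_1(t)\le N_j(t)$; your explicit uniform constant $C\le 1/\sqrt{\pi}$ is a welcome addition. One small precision: $G_{1,j}\ge \Delta_j/4$ does not follow from $\varepsilon<\Delta_j/2$ alone, since the $\Phi\ge 1/2$ argument needs $\varepsilon\le\Delta_j/4$, but the condition $\varepsilon\le\Delta_{\min}/5$ you invoke covers this.
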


\begin{proof}[Proof of Lemma~\ref{lem_key_pione_pij}]
When $\pi_{t,j}=0$, the result is trivial, so assume $\pi_{t,j}>0$.

The expected improvement condition gives $\bar{s}_{t,1}\le \bar{s}_{t,j}$. Expanding this inequality, we get
\begin{equation}
\pi_{t,j}G_{1,j} + \sum_{k\notin\{1,j\}} \pi_{t,k}G_{1,k}
\le \pi_{t,1}G_{j,1} + \sum_{k\notin\{1,j\}} \pi_{t,k}G_{j,k},
\end{equation}
or equivalently
\begin{equation}
\pi_{t,j}G_{1,j} \le \pi_{t,1}G_{j,1} + \sum_{k\notin\{1,j\}} \pi_{t,k}(G_{j,k}-G_{1,k}).
\label{eq:balance_equation}
\end{equation}

Substituting the bounds from Lemma~\ref{lem_margin_ei_otherarms}, Lemma~\ref{lem_upperbound_gjone}, and Lemma~\ref{lem_lowerbound_gonej} into Equation~\eqref{eq:balance_equation}, we obtain
\begin{align}
\pi_{t,j}\frac{\Delta_j}{4}
&\le \pi_{t,1} C_1 \exp\left(-\min_{x\in[\hat{\mu}_j,\ \hat{\mu}_1]}   \big(N_1(t)D(\hat{\mu}_1(t)\|x)+N_j(t)D(\hat{\mu}_j(t)\|x)\big)\right) \nonumber\\
&\qquad + C_2 \exp\left(-\min_{x\in[\hat{\mu}_j,\ \hat{\mu}_1]}  \big(N_1(t)D(\hat{\mu}_1(t)\|x)+N_j(t)D(\hat{\mu}_j(t)\|x)\big)\right).
\end{align}

If $N_1(t)\le N_j(t)$, then Lemma~\ref{lem_dominance} implies that the difference term is non-positive, and hence
\begin{equation}
\pi_{t,j}\frac{\Delta_j}{4}
\le \pi_{t,1} C \max_{x\in[\hat{\mu}_j,\ \hat{\mu}_1]}  \exp\!\big(-N_1(t)D(\hat{\mu}_1(t)\|x)-N_j(t)D(\hat{\mu}_j(t)\|x)\big).
\end{equation}
\end{proof}

\begin{lemma}[Bound for $\mathrm{L_{s,1}}$]\label{lem_lone}
Assume that $\mathcal{X}_s(t)$, $\mathcal{Y}_s(t)$, and $\mathcal{Z}_s(t)$ hold. Then $\pi_{t,j}\le O(T^{-1})$. Consequently,
\begin{equation}
\mathbb{E}\left[\sum_{t=1}^T \mathbf{1}\{I(t)=j,\ \mathcal{X}_s(t),\ \mathcal{Y}_s(t),\ \mathcal{Z}_s(t)\}\right] = O(1).
\end{equation}
\end{lemma}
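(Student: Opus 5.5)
The plan is to feed the hypotheses $\mathcal{X}_s(t)$, $\mathcal{Y}_s(t)$, $\mathcal{Z}_s(t)$ into Lemma~\ref{lem_key_pione_pij} and show the exponential factor is at most $O(T^{-1})$. Since $\varepsilon<\Delta_j/2$ (indeed $\varepsilon\le\Delta_{\min}/5$), on $\mathcal{X}_s(t)$ we have $\hat\mu_1(t)-\hat\mu_j(t)\ge\Delta_j-2\varepsilon>0$. Lemma~\ref{lem_key_pione_pij} then gives, using $\pi_{t,1}\le 1$ (or dropping the second term if $N_1\le N_j$),
\[
\pi_{t,j}\le \frac{8C}{\Delta_j}\exp\Bigl(-\min_{x\in[\hat\mu_j,\hat\mu_1]}\bigl(N_1(t)D(\hat\mu_1(t)\|x)+N_j(t)D(\hat\mu_j(t)\|x)\bigr)\Bigr).
\]
It then suffices to show that the minimum in the exponent is at least $\log T$.

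To lower-bound this exponent, I use $N_1(t),N_j(t)\ge 3T_j$. The minimum is then at least $3T_j\min_x\bigl(D(\hat\mu_1\|x)+D(\hat\mu_j\|x)\bigr)$. For Gaussians this is $3T_j(\hat\mu_1-\hat\mu_j)^2/4$, attained at the midpoint. Every $x$ in the interval satisfies either $x\le(\hat\mu_1+\hat\mu_j)/2$, so that $D(\hat\mu_1\|x)\ge(\hat\mu_1-\hat\mu_j)^2/8$, or the symmetric bound. The cleaner route compares directly with the threshold $D(\mu_j+\varepsilon\|\mu_1-\varepsilon/2)=(\Delta_j-3\varepsilon/2)^2/2$. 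On $\mathcal{X}_s$, $(\hat\mu_1-\hat\mu_j)^2\ge(\Delta_j-2\varepsilon)^2$. With $\varepsilon\le\Delta_j/5$, we have $(\Delta_j-2\varepsilon)^2\ge \tfrac{9}{25}\Delta_j^2$ and $(\Delta_j-3\varepsilon/2)^2\le\Delta_j^2$. Hence
\[
\tfrac{3}{4}(\hat\mu_1-\hat\mu_j)^2 T_j\ge \tfrac{27}{100}\Delta_j^2\cdot\frac{2\log T}{(\Delta_j-3\varepsilon/2)^2}\ge \tfrac{54}{100}\log T.
\]
This falls short of $\log T$. The factor $3$ in the threshold may be too small for this crude comparison, so the main obstacle is getting the constant right. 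To fix it, I would use the sharper ratio $(\Delta_j-2\varepsilon)^2/(\Delta_j-3\varepsilon/2)^2\ge(3/5)^2/(7/10)^2=36/49$, valid for $\varepsilon\le\Delta_j/5$. This gives exponent $\ge 3\cdot\tfrac{36}{49}\cdot\tfrac{1}{4}\cdot 2\log T=\tfrac{54}{49}\log T\ge\log T$. The key is to use both $\varepsilon$-shifts consistently in the numerator and denominator rather than bounding each crudely. I would double-check this arithmetic carefully, and if it still falls short I would either absorb the slack into the $O(\cdot)$ by enlarging the constant in $T_j$ (the theorem only needs $T_j=O(\log T/D)$) or note that $T^{-c}$ with $c<1$ is not enough. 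So the constant must be made to reach at least $1$, which the sharper computation does.

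Concluding, $\pi_{t,j}\le 8C/(\Delta_j T)$ on the event. Since the event is $\mathcal{F}_{t-1}$-measurable, $\mathbb{E}[\mathbf{1}\{I(t)=j,\cdot\}\mid\mathcal{F}_{t-1}]=\mathbf{1}\{\cdot\}\pi_{t,j}$. Summing over $t\le T$ then gives at most $8C/\Delta_j=O(1)$, which proves Lemma~\ref{lem_lone}.
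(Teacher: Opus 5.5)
Your proposal is correct and follows essentially the same route as the paper. You plug the event into Lemma~\ref{lem_key_pione_pij} with $\pi_{t,1}\le 1$, use $N_1(t),N_j(t)\ge 3T_j$ to lower-bound the Gaussian exponent by $\tfrac34 T_j(\Delta_j-2\varepsilon)^2$, compare with $D(\mu_j+\varepsilon\|\mu_1-\varepsilon/2)=(\Delta_j-1.5\varepsilon)^2/2$ to get $c=1.5\bigl(\tfrac{\Delta_j-2\varepsilon}{\Delta_j-1.5\varepsilon}\bigr)^2$, and then sum using $\mathcal{F}_{t-1}$-measurability. Your value $c\ge 54/49$ for $\varepsilon\le\Delta_j/5$ is exactly what the paper's later condition on $\varepsilon$ guarantees, so the crude first comparison and the fallback of enlarging $T_j$ are unnecessary.
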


\begin{proof}[Proof of Lemma~\ref{lem_lone}]
Under $\mathcal{X}_s(t)=\{\hat{\mu}_1(t)\ge \mu_1-\varepsilon,\ \hat{\mu}_j(t)\le \mu_j+\varepsilon\}$, the empirical gap is lower bounded by
\begin{equation}
\hat{\mu}_1(t)-\hat{\mu}_j(t) \ge \Delta_j-2\varepsilon > 0.
\end{equation}
By Lemma~\ref{lem_key_pione_pij} and Lemma~\ref{lem_upperbound_gjone}, we obtain
\begin{align}
\pi_{t,j}\frac{\Delta_j}{4}
&\le \pi_{t,1} C \exp\left(-\frac{(\hat{\mu}_1(t)-\hat{\mu}_j(t))^2}{2(1/N_1(t)+1/N_j(t))}\right) + C \exp\left(-\frac{(\hat{\mu}_1(t)-\hat{\mu}_j(t))^2}{2(1/N_1(t)+1/N_j(t))}\right) \nonumber\\
&\le 2C \exp\left(-\frac{(\hat{\mu}_1(t)-\hat{\mu}_j(t))^2}{2(1/N_1(t)+1/N_j(t))}\right),
\end{align}
where we used $\pi_{t,1}\le 1$.

Since $N_1(t)\ge 3T_j$ and $N_j(t)\ge 3T_j$, the effective sample size satisfies
\begin{equation}
\frac{1}{1/N_1(t)+1/N_j(t)} = \frac{N_1(t)N_j(t)}{N_1(t)+N_j(t)}
\ge \frac{3T_j\cdot 3T_j}{3T_j+3T_j} = 1.5\,T_j.
\end{equation}
Substituting this lower bound and the empirical gap lower bound yields
\begin{equation}
\pi_{t,j} \le \frac{8C}{\Delta_j}\exp\left(-1.5\,T_j\frac{(\Delta_j-2\varepsilon)^2}{2}\right).
\end{equation}

Recalling that $T_j=\frac{\log T}{D(\mu_j+\varepsilon \parallel \mu_1-\varepsilon/2)}$, and using the Gaussian identity
\begin{equation}
D(\mu_j+\varepsilon \parallel \mu_1-\varepsilon/2) = \frac{(\Delta_j-1.5\varepsilon)^2}{2},
\end{equation}
we obtain an exponent of the form $-c\log T$, where
\begin{equation}
c = 1.5\left(\frac{\Delta_j-2\varepsilon}{\Delta_j-1.5\varepsilon}\right)^2.
\end{equation}
For sufficiently small $\varepsilon$, we have $c\ge 1$, and hence
\begin{equation}
\pi_{t,j} \le O(T^{-1}).
\end{equation}
Summing over $t$ gives
\begin{equation}
\mathbb{E}\left[\sum_{t=1}^T \mathbf{1}\{I(t)=j,\ \mathcal{X}_s(t),\ \mathcal{Y}_s(t),\ \mathcal{Z}_s(t)\}\right]
\le \sum_{t=1}^T \mathbf{1}\{\mathcal{X}_s(t),\ \mathcal{Y}_s(t),\ \mathcal{Z}_s(t)\}\pi_{t,j}
\le \sum_{t=1}^T O(T^{-1})
= O(1).
\end{equation}
\end{proof}

We can now combine the previous steps to conclude the bound for the saturation term. Using Equation~\eqref{ineq_leading_main} together with Lemmas~\ref{lem_armone_saturation} and \ref{lem_lone}, we obtain, for each fixed suboptimal arm $j$,
\begin{align}
\mathbb{E}\left[\sum_{t=1}^T \mathbf{1}\{\hat{\mu}_1(t)\ge \mu_1-\varepsilon,\ I(t)=j\}\right]
&\le O\!\left(\frac{1}{\varepsilon^2}\right) + 3T_j + \mathbb{E}[\mathrm{(L_{s,1})}] + \mathbb{E}[\mathrm{(L_{s,2})}] \nonumber\\
&\le O\!\left(\frac{1}{\varepsilon^2}\right) + 3T_j + O(1) + 3T_j \nonumber\\
&= O\!\left(\frac{1}{\varepsilon^2} + \frac{\log T}{D(\mu_j+\varepsilon \parallel \mu_1-\varepsilon/2)}\right).
\end{align}
In particular, the logarithmic contribution is governed by the explicit scale
\[
T_j=\frac{\log T}{D(\mu_j+\varepsilon \parallel \mu_1-\varepsilon/2)}.
\]

\subsection{Completion of the proof of Theorem~\ref{thm:main}}

We are now ready to complete the proof of Theorem~\ref{thm:main}. Recall that
\[
\mathrm{Reg}(T)=\sum_{j=2}^K \Delta_j N_j(T)=\sum_{j=2}^K \Delta_j \sum_{t=1}^T \mathbf{1}\{I(t)=j\}.
\]
We decompose the regret according to whether the optimal arm is underestimated:
\begin{align}
\mathbb{E}[\mathrm{Reg}(T)]
&= \sum_{j=2}^K \Delta_j \sum_{t=1}^T \mathbb{P}\left[I(t)=j\right] \\
&\le \Delta_{\max}\sum_{t=1}^T \mathbb{P}\left[\hat{\mu}_1(t)\le \mu_1-\varepsilon\right]
+ \sum_{j=2}^K \Delta_j \sum_{t=1}^T \mathbb{P}\left[\hat{\mu}_1(t)\ge \mu_1-\varepsilon,\ I(t)=j\right].
\label{eq:regret_final_decomp}
\end{align}

By the underestimation bound proved in the previous subsection,
\[
\sum_{t=1}^T \mathbb{P}(\hat{\mu}_1(t)\le \mu_1-\varepsilon)
=
O\!\left(\frac{K\log K}{\varepsilon^2}+\frac{K^{1/3}T^{2/3}}{\varepsilon^3}\right).
\]

On the other hand, for each fixed suboptimal arm $j\neq 1$, the saturation analysis shows that
\[
\sum_{t=1}^T \mathbb{P}\left[\hat{\mu}_1(t)\ge \mu_1-\varepsilon,\ I(t)=j\right]=O\!\left(\frac{\log T}{D(\mu_j+\varepsilon\|\mu_1-\varepsilon/2)}\right),
\]

Combining the two bounds, and under the natural condition $T\ge K$ (which is anyway required since each arm is pulled once initially), the additive term $K\log K/\varepsilon^2$ is lower-order in the regime of interest, so we suppress it for readability. Thus,
\[
\mathbb{E}[\mathrm{Reg}(T)] \le \sum_{j=2}^K O\!\left(\frac{\Delta_j \log T}{D(\mu_j+\varepsilon\|\mu_1-\varepsilon/2)}\right) + \Delta_{\max}\, \tilde{O}\left(\frac{K^{1/3}T^{2/3}}{\varepsilon^3}\right).
\]
It remains to specify the admissible range of $\varepsilon$. To apply Lemma~\ref{lem_lone}, we need the exponent $c$ in the bound $\pi_{t,j}\le O(T^{-c})$ to satisfy $c>1$. Since
\[
c = 1.5\left(\frac{\Delta_j-2\varepsilon}{\Delta_j-1.5\varepsilon}\right)^2,
\]
a sufficient condition is
\[
\varepsilon < \left(\frac{2}{5}-\frac{\sqrt{6}}{15}\right)\Delta_j
\]
for every suboptimal arm $j$. Therefore, it is enough to impose the uniform condition
\[
\varepsilon \in (0,\Delta_{\min}/5],
\]
which guarantees $c>1$ simultaneously for all $j\neq 1$. In particular, taking $\varepsilon=\Delta_{\min}/5$ gives
\[
\mathbb{E}[\mathrm{Reg}(T)] \le \sum_{j=2}^K O\!\left(\frac{\log T}{\Delta_j}\right) + \Delta_{\max}\, \tilde{O}\!\left(\frac{K^{1/3}T^{2/3}}{\Delta_{\min}^3}\right),
\]
which proves Theorem~\ref{thm:main}.

\section{Gaussian positive-part identity}
\label{app:gaussian-positive-part}

The closed form in Section~\ref{subsec:reward_variance_adaptation} follows from the following standard identity.

\begin{lemma}[Gaussian Positive-Part Identity]
\label{lem:gaussian-positive-part}
If $X\sim\N(\delta,\sigma^2)$, then
\[
\E[(X)_+] = \sigma\,\phi(\delta/\sigma) + \delta\,\Phi(\delta/\sigma).
\]
\end{lemma}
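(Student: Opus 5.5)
The plan is to reduce to the standard normal by the location–scale representation and then compute two elementary Gaussian integrals. Write $X=\delta+\sigma Z$ with $Z\sim\N(0,1)$ and $\sigma>0$. Since $\sigma>0$, the event $\{X>0\}$ coincides with $\{Z>-\delta/\sigma\}$, so
\[
\E[(X)_+]=\E\bigl[(\delta+\sigma Z)\ind\{Z>-\delta/\sigma\}\bigr]
=\delta\,\mathbb{P}(Z>-\delta/\sigma)+\sigma\,\E\bigl[Z\ind\{Z>-\delta/\sigma\}\bigr].
\]
Splitting this way isolates a probability term and a truncated first-moment term, which I handle separately.

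For the first term, symmetry of the standard normal gives $\mathbb{P}(Z>-a)=\Phi(a)$. With $a=\delta/\sigma$ this yields $\delta\,\Phi(\delta/\sigma)$.

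For the second term, I use the derivative identity $\phi'(z)=-z\phi(z)$, which was also used in the proof of Lemma~\ref{lem_dominance}. It gives
\[
\int_{-a}^{\infty} z\phi(z)\,\dd z=\bigl[-\phi(z)\bigr]_{-a}^{\infty}=\phi(-a)=\phi(a),
\]
where the last step uses evenness of $\phi$. Multiplying by $\sigma$ gives $\sigma\,\phi(\delta/\sigma)$, and adding the two terms gives the claimed identity.

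No step here is a real obstacle. The only points to check are the sign conventions in the symmetry of $\Phi$ and the evenness of $\phi$, and the implicit assumption $\sigma>0$. That assumption holds in every application in the paper, because there $\sigma^2=1/N_i+1/N_j>0$. The degenerate case $\sigma=0$ can be recovered as a limit if it is ever needed.
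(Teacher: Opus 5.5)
Your proof is correct and matches the paper's: both substitute $X=\delta+\sigma Z$ and split $\int_{-\delta/\sigma}^{\infty}(\delta+\sigma z)\phi(z)\,\mathrm{d}z$ into the terms $\delta\,\Phi(\delta/\sigma)$ and $\sigma\,\phi(\delta/\sigma)$. You also spell out the two steps the paper leaves implicit, namely the symmetry $\mathbb{P}(Z>-a)=\Phi(a)$ and the identity $\phi'(z)=-z\phi(z)$.
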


\begin{proof}[Proof of Lemma~\ref{lem:gaussian-positive-part}]
Write $X=\delta+\sigma Z$ with $Z\sim\N(0,1)$. Then
\[
\E[(X)_+]
=
\int_{-\delta/\sigma}^{\infty}(\delta+\sigma z)\phi(z)\,\dd z
=
\delta\Phi(\delta/\sigma)+\sigma\phi(\delta/\sigma).
\qedhere
\]
\end{proof}

Applying Lemma~\ref{lem:gaussian-positive-part} to $X=\theta_i-\theta_j$ with
\[
\theta_i-\theta_j \sim \N\!\left(\delta_{ij}(t),\sigma_{ij}^2(t)\right)
\]
yields Equation~\eqref{eq:gij-closed-form}.

\section{Experimental protocol notes}
\label{app:exp-details}

We summarize the numerical settings used in Section~\ref{sec:experiment} in Table~\ref{tab:exp-settings}. All curves report the across-run mean of the cumulative quantity together with a one standard-error band over the independent runs (column~``Runs'' of Table~\ref{tab:exp-settings}). Error bands are therefore standard errors of the mean, not standard deviations across runs.

\begin{table}[t]
\centering
\caption{Numerical settings used in the experiments. The synthetic instances are the fixed-mean two/three/ten-arm problems of Section~\ref{sec:synthetic-bandits}; the real-data instances use deterministic, dataset-derived arm means.}
\label{tab:exp-settings}
\begin{tabular}{lcccc}
\toprule
 & Synthetic (two/three/ten-arm) & OBD & MovieLens 1M \\
\midrule
Number of arms $K$ & $2/3/10$ & $80$ & $31$ \\
Horizon $T$ & $20{,}000$ & $3{,}000$ & $10{,}000$ \\
Reward noise $\sigma_\eta$ & $0.15/0.02/0.05$ & $1$ & $1$ \\
Independent runs & $1{,}000$ & $100$ & $100$ \\
Init pulls per arm & $1$ & $1$ & $1$ \\
ReMax $M$ (main text) & $2$ & $2$ & $2$ \\
ReMaxGrad sweep $M$ (Appendix~\ref{app:effect-of-M}) & $\{2,3,4\}$ & --- & --- \\
\bottomrule
\end{tabular}
\end{table}

\paragraph{Reward and improper-prior initialization}
Every experiment in Section~\ref{sec:experiment} is run in the frequentist setting: the arm means $\{\mu_i\}_{i\in[K]}$ are deterministic, and a pull of arm $i$ returns reward $r_i\sim\mathcal{N}(\mu_i,\sigma_\eta^2)$. For the synthetic instances of Section~\ref{sec:synthetic-bandits}, the means and noise are the fixed values listed there; for OBD and MovieLens 1M, the means are the normalized real-data values described in Section~\ref{sec:real-world} and $\sigma_\eta=1$. Our analysis is stated under an improper prior, so the posterior of an arm is undefined until it has been pulled at least once. To match this assumption in the implementation, every method (ReMax, TS, KL-UCB, and the ReMaxGrad variants of Appendix~\ref{app:effect-of-M}) starts each run with one mandatory pull of every arm, and the algorithm-specific selection rule is only invoked from round $K{+}1$ onwards. Cumulative quantities are counted from $t=1$, including the $K$ initialization rounds.

\paragraph{ReMax exact optimization with primal--dual active--set method}
For $M=2$ and a Gaussian posterior $\theta_i\sim\mathcal{N}(\hat\mu_i,\sigma_i^2)$, the ReMax objective $J_2(\pi)=\pi^\top G\pi$ is a quadratic on the simplex with the pairwise expected-max matrix
\begin{equation}
    G_{ij} = \mathbb{E}[\max(\theta_i,\theta_j)] = \hat\mu_i\,\Phi(z_{ij}) + \hat\mu_j\,\Phi(-z_{ij}) + s_{ij}\,\phi(z_{ij}),
    \qquad z_{ij} = \frac{\delta_{ij}}{s_{ij}},
    \label{eq:pairwise-G}
\end{equation}
where $\delta_{ij}=\hat\mu_i-\hat\mu_j$ and $s_{ij}^2=\sigma_i^2+\sigma_j^2$. Equation~\eqref{eq:pairwise-G} would compute $\mathbb{E}\max(\theta,\theta')$ for two independent copies on the diagonal, which is too large by $\sigma_i/\sqrt{\pi}$; we therefore overwrite $G_{ii}\gets\hat\mu_i$ to match $\mathbb{E}[\theta_i]$. For the Gaussian pairwise matrix above, $J_2$ is concave on the simplex, so computing the exact ReMax policy is equivalent to solving a small convex quadratic program. Rather than calling a generic convex solver at every bandit round, we solve the KKT equations of this QP directly. Eliminating $\pi_1=1-\sum_{i>1}\pi_i$ gives the linear system
\begin{equation}
    A\,\pi_{2:K} = b,\qquad
    A_{ij} = G_{ij} + G_{11} - G_{i1} - G_{1j},\quad b_i = G_{11}-G_{i1},\quad i,j\in\{2,\ldots,K\},
    \label{eq:remax-kkt}
\end{equation}
which we solve in closed form. 

The solution of Equation~\eqref{eq:remax-kkt} is the unconstrained KKT point after enforcing only the simplex equality, and it can return $\pi_i<0$ whenever the optimum lies on the boundary (this happens with nontrivial probability when several arms are nearly tied; see Appendix~\ref{app:effect-of-M}). We handle these inequality constraints with a primal-dual active-set procedure. For a candidate active set $\mathcal A$, we solve the equality part of the KKT system
\[
    \begin{bmatrix} G & -\mathbf 1 \\ \mathbf 1^\top & 0 \end{bmatrix}\begin{bmatrix}\pi\\ \lambda\end{bmatrix} = \begin{bmatrix}\mathbf 0\\ 1\end{bmatrix} \;\;\text{(active rows)},\qquad \pi_i = 0 \;\;\text{(inactive rows)},
\]
then update $\mathcal A$ using both KKT inequalities: drop active arms with $\pi_i<0$ (primal infeasibility), and re-activate inactive arms with $(G\pi)_i>\lambda$ (dual infeasibility). At convergence the policy satisfies stationarity on the support, $\pi_i\ge0$, $\sum_i\pi_i=1$, and $(G\pi)_i\le\lambda$ off the support; since the problem is convex after sign reversal, these KKT conditions certify the global QP optimum. 

The full procedure is summarized in Algorithm~\ref{alg:remax}.

\begin{algorithm}[H]
\caption{Exact ReMax with primal-dual active set method ($M=2$, Gaussian posterior, improper-prior init).}
\label{alg:remax}
\begin{algorithmic}[1]
\Require Observation noise $\sigma_\eta$, horizon $T$
\State \textbf{Init pulls:} for $i=1,\dots,K$, pull arm $i$, observe $r_i\sim\mathcal{N}(\mu_i,\sigma_\eta^2)$, set $\hat\mu_i\gets r_i$, $\sigma_i^2\gets\sigma_\eta^2$
\For{$t=K+1,\dots,T$}
    \State Compute the pairwise matrix $G_{ij}$ from Equation~\eqref{eq:pairwise-G} using $s_{ij}^2=\sigma_i^2+\sigma_j^2$, with $G_{ii}\gets\hat\mu_i$
    \State Initialize the active set $\mathcal{A}\gets[K]$
    \State Solve the active-set KKT system for $(\pi,\lambda)$ with all arms active 
    \While{$\exists i\in\mathcal A:\pi_i<0$ or $\exists i\notin\mathcal A:(G\pi)_i>\lambda$}
        \State $\mathcal A \gets \{i\in\mathcal A:\pi_i\geq 0\}\cup\{i\notin\mathcal A:(G\pi)_i>\lambda\}$
        \State Resolve the active-set KKT system for $(\pi,\lambda)$ with $\pi_i=0$ for $i\notin\mathcal A$ 
    \EndWhile
    \State Sample $A_t\sim\pi$, observe $r_t\sim\mathcal{N}(\mu_{A_t},\sigma_\eta^2)$
    \State Conjugate update: $\tau_{\mathrm{old}}\gets \sigma_{A_t}^{-2}$,~~
    $\tau_{\mathrm{new}}\gets \tau_{\mathrm{old}}+\sigma_\eta^{-2}$,\\
    \hspace{8.8em} $\hat\mu_{A_t}\gets
    (\tau_{\mathrm{old}}\hat\mu_{A_t}+\sigma_\eta^{-2}r_t)/\tau_{\mathrm{new}}$,~~
    $\sigma_{A_t}^{2}\gets\tau_{\mathrm{new}}^{-1}$
\EndFor
\end{algorithmic}
\end{algorithm}

\paragraph{Thompson Sampling}
We use the conjugate Gaussian Thompson Sampling baseline matched to the improper-prior, fixed-noise setting of our experiments. After each arm has been pulled at least once, the (improper-prior) posterior over the mean of arm~$i$ at the start of round~$t$ is $\mathcal{N}\!\left(\hat\mu_i(t),\,\sigma_\eta^2/N_i(t)\right)$, where $\hat\mu_i(t)$ is the empirical mean of arm~$i$ from its $N_i(t)$ pulls so far. At each round, TS draws one independent posterior sample per arm and pulls the arm with the largest sample:
\begin{equation}
    \tilde\theta_i^{(t)} \overset{\mathrm{iid}}{\sim} \mathcal{N}\!\left(\hat\mu_i(t),\,\frac{\sigma_\eta^2}{N_i(t)}\right),\qquad
    A_t \in \arg\max_{i\in[K]} \tilde\theta_i^{(t)}.
    \label{eq:ts-rule}
\end{equation}

\paragraph{KL-UCB}
KL-UCB selects, at each round, the arm with the largest upper confidence index. We use the Gaussian-KL specialization with confidence factor $c=0$. For Gaussian rewards with known variance $\sigma_\eta^2$, $\mathrm{KL}(\hat\mu_i \Vert q) = (q-\hat\mu_i)^2 / (2\sigma_\eta^2)$, so the implicit KL-UCB index admits the closed form
\begin{equation}
    U_i(t) = \hat\mu_i(t) + \sqrt{\frac{2\sigma_\eta^2 \bigl[\log t + c\,\log\!\log t\bigr]}{N_i(t)}},\qquad
    A_t \in \arg\max_{i\in[K]} U_i(t),
    \label{eq:klucb-rule}
\end{equation}
which we evaluate every round once each arm has been pulled at least once.

\paragraph{Real-data arm means}
The OBD and MovieLens 1M bandit instances of Section~\ref{sec:real-world} use the dataset-derived per-arm summary statistics tabulated in~\citep{komiyama2025rate} (Tables 5 and 6 of that work). For OBD, each arm corresponds to one advertisement and we list the raw empirical click-through rate $\mathrm{CTR}_i$ in Table~\ref{tab:obd-arm-means}. The Gaussian-bandit mean used at run time is obtained by aggregating $10^3$ impressions per pull and normalizing by the across-arm average click-outcome standard deviation $\bar\sigma_{\mathrm{click}} = 0.057774753125$, giving $\mu_i^{\mathrm{OBD}} = \mathrm{CTR}_i\,\sqrt{10^3}/\bar\sigma_{\mathrm{click}}$ (Section~\ref{sec:real-world}); the table itself reports the raw CTRs so that the source data are auditable independently of this transformation. For MovieLens 1M, each retained movie is an arm and $\mu_i^{\mathrm{ML}}$ is its average rating already normalized by the across-movie rating standard deviation, again following~\citep{komiyama2025rate}, and Table~\ref{tab:movielens-arm-means} lists those normalized means directly. The reward noise is $\sigma_\eta=1$ in both cases, so for MovieLens each $\mu_i^{\mathrm{ML}}$ is directly the signal-to-noise ratio of arm~$i$, and for OBD the same role is played by $\mu_i^{\mathrm{OBD}}$ (not by $\mathrm{CTR}_i$). 

\begin{table}[H]
\centering
\caption{Open Bandit Dataset (OBD): raw per-arm click-through rates as extracted by~\citep{komiyama2025rate} (Table 5). Values are reproduced verbatim and read row-major in arm-index order $i=1,\ldots,80$. The Gaussian-bandit mean used by the learners is obtained from these raw CTRs by the transformation $\mu_i = \mathrm{CTR}_i\sqrt{10^3}/\bar\sigma_{\mathrm{click}}$ with $\bar\sigma_{\mathrm{click}}=0.057774753125$ described in Section~\ref{sec:real-world}; the table itself records the unprocessed CTRs. The arm with the largest CTR is bolded.}
\label{tab:obd-arm-means}
\footnotesize
\setlength{\tabcolsep}{4pt}
\begin{tabular}{cccccccc}
\toprule
$0.0029265$ & $0.0014464$ & $0.0021134$ & $0.0026464$ & $0.0018947$ & $0.0032350$ & $0.0024874$ & $0.0052780$ \\
$0.0037272$ & $0.0025919$ & $0.0015018$ & $0.0033327$ & $0.0018368$ & $0.0020283$ & $0.0029336$ & $0.0030222$ \\
$0.0032011$ & $0.0036364$ & $0.0036137$ & $0.0018426$ & $0.0017718$ & $0.0023036$ & $0.0028038$ & $0.0025506$ \\
$0.0024710$ & $0.0019308$ & $0.0021782$ & $0.0016784$ & $0.0037885$ & $0.0015287$ & $0.0045120$ & $0.0041963$ \\
$0.0036784$ & $0.0032292$ & $0.0055569$ & $0.0055678$ & $0.0028800$ & $0.0035584$ & $0.0044478$ & $0.0053337$ \\
$0.0026211$ & $0.0055760$ & $0.0035852$ & $0.0048702$ & $0.0024826$ & $0.0051337$ & $0.0039318$ & $0.0055106$ \\
$0.0044275$ & $0.0057023$ & $0.0034024$ & $0.0056714$ & $0.0049135$ & $0.0028941$ & $0.0026866$ & $0.0038009$ \\
$0.0026913$ & $0.0037623$ & $0.0049876$ & $0.0055036$ & $0.0048012$ & $\mathbf{0.0059725}$ & $0.0044809$ & $0.0056396$ \\
$0.0033993$ & $0.0041044$ & $0.0038471$ & $0.0019121$ & $0.0018957$ & $0.0035998$ & $0.0022913$ & $0.0030215$ \\
$0.0027332$ & $0.0025879$ & $0.0020447$ & $0.0026221$ & $0.0036932$ & $0.0024460$ & $0.0052332$ & $0.0056697$ \\
\bottomrule
\end{tabular}
\end{table}

\begin{table}[H]
\centering
\caption{MovieLens 1M: per-movie normalized ratings as published in~\citep{komiyama2025rate} (Table 6). Values are reproduced verbatim and read row-major in arm-index order $i=1,\ldots,31$; they are already normalized by the across-movie rating standard deviation ($0.17820006619699696$ in~\citet{komiyama2025rate}), so each value is directly the Gaussian-bandit mean $\mu_i^{\mathrm{ML}}$ used at run time. The arm with the largest mean is bolded.}
\label{tab:movielens-arm-means}
\small
\begin{tabular}{cccccccc}
\toprule
$0.86074$ & $0.79806$ & $0.90208$ & $0.79304$ & $0.88125$ & $0.82937$ & $0.89074$ & $0.86747$ \\
$0.85094$ & $0.68196$ & $0.80458$ & $0.84699$ & $0.81170$ & $0.86348$ & $0.75277$ & $0.79061$ \\
$0.85860$ & $0.89554$ & $0.87036$ & $0.86317$ & $0.82550$ & $\mathbf{0.91091}$ & $0.81759$ & $0.82508$ \\
$0.74799$ & $0.83192$ & $0.83041$ & $0.85564$ & $0.84388$ & $0.78111$ & $0.90499$ & --- \\
\bottomrule
\end{tabular}
\end{table}

\section{Additional results}

\subsection{Regret decomposition by underestimation}
\label{app:separated-regret}

To make the connection between underestimation and regret more concrete, we decompose the cumulative regret into two parts according to whether the optimal arm is currently underestimated below the second-best arm:
\begin{align*}
    \mathrm{Reg}^{\mathrm{under}}(T) &= \sum_{t=1}^{T} (\mu_1 - \mu_{I(t)}) \cdot \mathbf{1}\{\hat{\mu}_1(t) < \mu_2\}, \\
    \mathrm{Reg}^{\neg\mathrm{under}}(T) &= \sum_{t=1}^{T} (\mu_1 - \mu_{I(t)}) \cdot \mathbf{1}\{\hat{\mu}_1(t) \ge \mu_2\}.
\end{align*}
Figures~\ref{fig:separated_regret_freq} and~\ref{fig:separated_regret_real} plot these two components separately (dashed line: under\-estimation rounds; solid line: non-under\-estimation rounds). For ReMax and the baselines, the dashed component dominates the solid one in every panel, so almost all the cumulative regret is incurred during rounds where the optimal arm has been pushed below the second-best mean. 

\begin{figure}[t]
    \centering
    \includegraphics[width=1.0\linewidth]{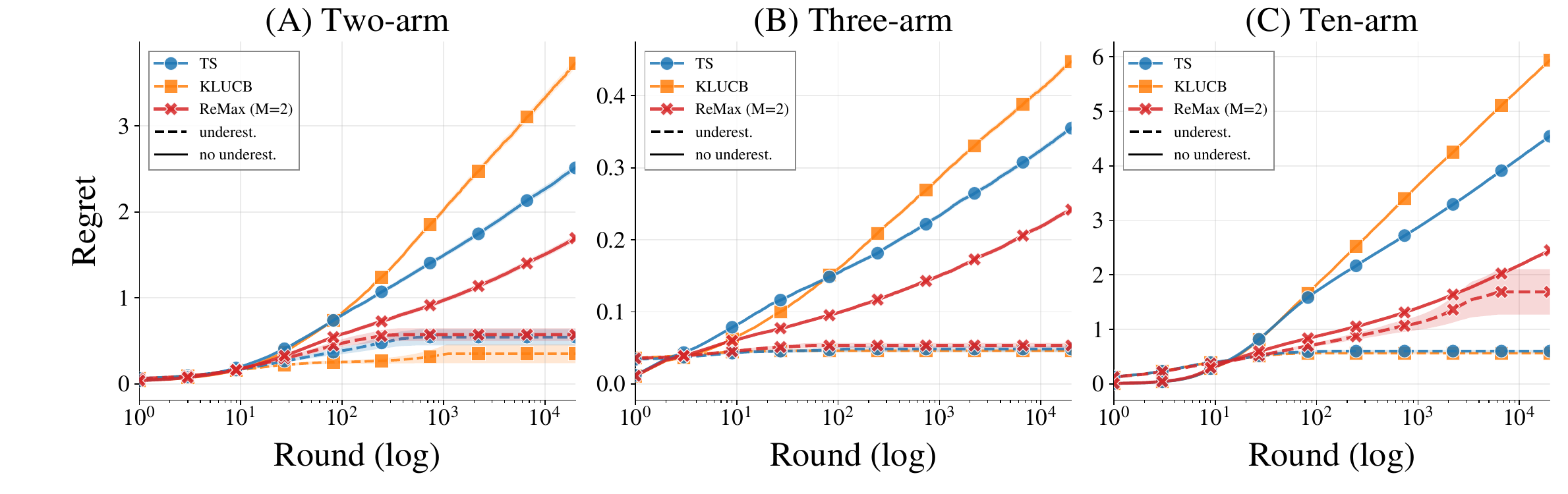}
    \caption{Synthetic instances of Section~\ref{sec:synthetic-bandits}: cumulative regret decomposed into the under\-estimation component (dashed, $\mathbf{1}\{\hat{\mu}_1(t) < \mu_2\}$) and the non-under\-estimation component (solid). (A) Two-arm. (B) Three-arm. (C) Ten-arm. The under\-estimation component dominates total regret across the three instances.}
    \label{fig:separated_regret_freq}
\end{figure}

\begin{figure}[t]
    \centering
    \includegraphics[width=0.9\linewidth]{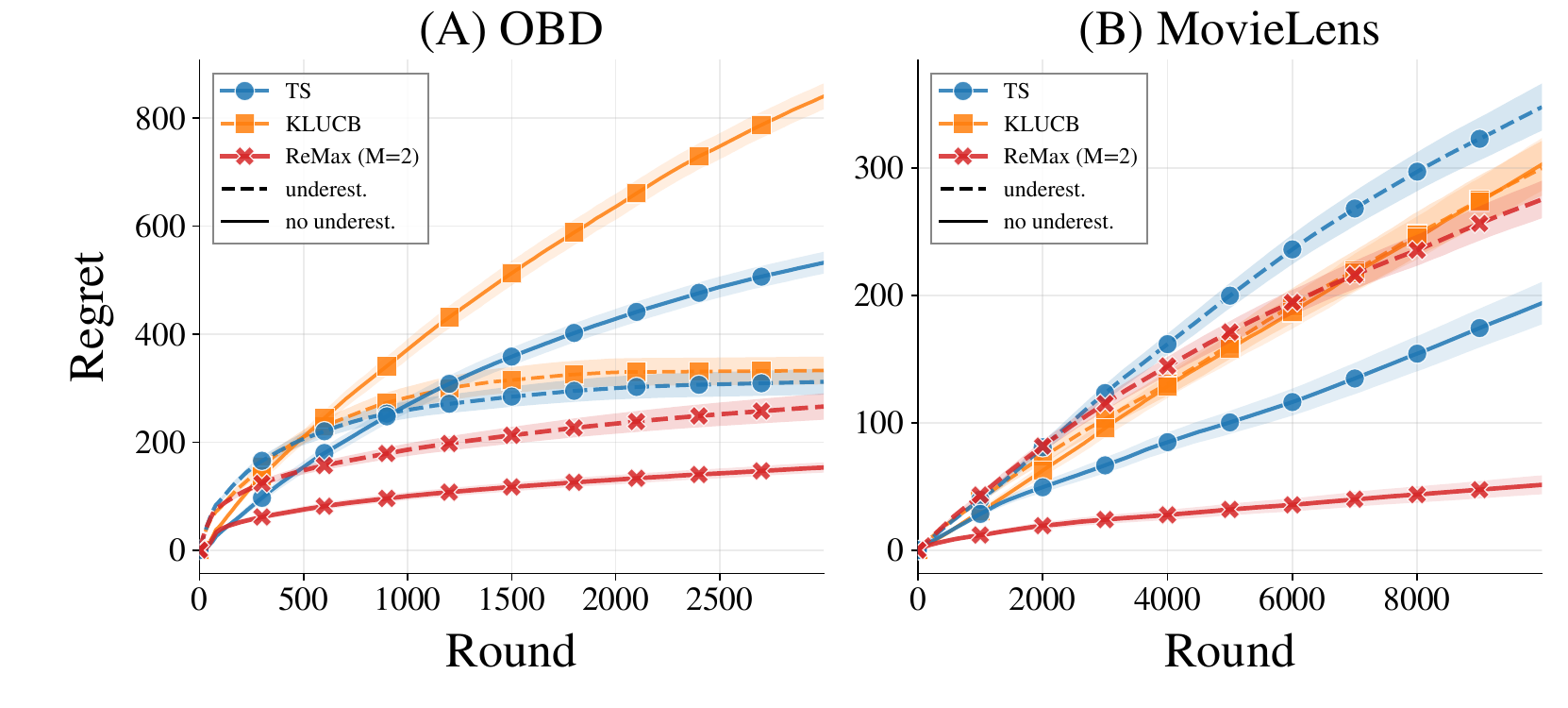}
    \caption{Real-world datasets: cumulative regret decomposed into the under\-estimation component (dashed) and the non-under\-estimation component (solid). (A) Open Bandit Dataset ($K=80$, $T=3{,}000$). (B) MovieLens 1M ($K=31$, $T=10{,}000$). The same pattern as in the synthetic benchmark holds: most of the accumulated regret comes from rounds in which the optimal arm is currently under\-estimated.}
    \label{fig:separated_regret_real}
\end{figure}

\subsection{ReMax with larger $M$ (ReMaxGrad)}
\label{app:effect-of-M}

The previous subsections focus on $M=2$, which is the case our theoretical analysis covers. The ReMax objective $J_M(\pi)=\mathbb{E}_{A_1,\dots,A_M\sim\pi}[\mathbb{E}\max_m\theta_{A_m}]$ is well-defined for any $M\geq 2$, and we expect larger $M$ to produce more exploratory optimal policies because repeated draws of the same arm carry diminishing marginal benefit (the per-arm contribution saturates as $1-(1-\pi_i)^M$).

\paragraph{Setup}
We sweep $M\in\{2,3,4\}$ on the three synthetic frequentist instances of Section~\ref{sec:synthetic-bandits} (two-arm, three-arm, and ten-arm; horizon $T=20{,}000$; $1{,}000$ reward-noise replications), against the TS and KL-UCB baselines and ReMax ($M=2$). All methods share the one-pull-per-arm initialization of Algorithm~\ref{alg:remax}; for $M=2$ both ReMax and ReMaxGrad are reported, so that any difference reflects only the optimization procedure (closed-form QP vs.\ gradient ascent) rather than the choice of $M$. For $M>2$ the optimal policy has no closed form, so we approximate it by stochastic gradient ascent on the sample-average objective
\[
\widehat J_M(\pi) = \frac{1}{S}\sum_{s=1}^S \mathbb{E}_{A_{1:M}\sim\pi}\!\Big[\max_{m\in[M]}\theta^{(s)}_{A_m}\Big],\qquad \theta^{(s)}\overset{\mathrm{iid}}{\sim}\Pi_t,
\]
with $S=50$ posterior samples per round and at most $L=20$ Adam steps at learning rate $\eta=0.05$. We call this gradient-based variant \textbf{ReMaxGrad} and use it for every $M\in\{2,3,4\}$.

\paragraph{ReMax optimization for $M>2$}
For $M>2$, the optimal policy is no longer available in closed form, so we approximate it by stochastic gradient ascent on the sample-average ReMax following \citep{nishimori2026emergence}. Concretely, we draw $50$ posterior samples and take up to $20$ Adam gradient steps with learning rate $0.05$ at each round, following \citep{nishimori2026emergence}. The policy parameters from round $t-1$ are reused as the initialization at round $t$, but the Adam moment estimates are reinitialized to zero at every round to prevent stale momentum from carrying over. We summarize the per-round procedure in Algorithm~\ref{alg:remax-grad}.

We parameterize the policy by softmax logits $z\in\mathbb{R}^K$, $\pi(z)=\mathrm{softmax}(z)$, and use the sampled-mean estimator $\widehat J_M$ of the previous display. The gradient with respect to the logits is the simplex-projected push-forward of the policy gradient:
\begin{equation}
    \nabla_z \widehat J_M(\pi(z)) = \pi(z)\odot\bigl(g - \langle g,\pi(z)\rangle\bigr),\qquad g=\nabla_\pi \widehat J_M(\pi),
    \label{eq:logit-grad}
\end{equation}
where $\langle\cdot,\cdot\rangle$ denotes the inner product over arms and $\odot$ is elementwise. Because $\widehat J_M$ depends on the discrete order statistics of $\theta^{(s)}$, $\nabla_\pi\widehat J_M$ admits the explicit per-sample form
\begin{equation}
    \bigl(\nabla_\pi \widehat J_M\bigr)_{(r)}^{(s)} = \sum_{r'\geq r} M\bigl(\theta^{(s)}_{(r')} - \theta^{(s)}_{(r'+1)}\bigr)\bigl(1 - \textstyle\sum_{j\leq r'}\pi_{(j)}\bigr)^{M-1},
    \label{eq:remax-grad-pi}
\end{equation}
where $(\cdot)_{(r)}$ indexes the $r$-th-largest element of $\theta^{(s)}$ within the sample (ties broken arbitrarily) and the $r=K$ terms vanish; we average~\eqref{eq:remax-grad-pi} over the $S$ posterior samples to obtain $g$.

\paragraph{Adam update and KKT-gap stopping criterion}
Each inner step applies the Adam update~\citep{2015-kingma} to the logits with the stochastic gradient~\eqref{eq:logit-grad}, using the standard $(\beta_1,\beta_2,\varepsilon)=(0.9,0.999,10^{-8})$. With moment estimates $(m_\ell,v_\ell)$ initialized to zero,
\begin{equation}
\begin{aligned}
    m_{\ell+1} &= \beta_1 m_\ell + (1-\beta_1)\,\nabla_z\widehat J_M, \quad &\hat m_{\ell+1} &= m_{\ell+1}/(1-\beta_1^{\ell+1}),\\
    v_{\ell+1} &= \beta_2 v_\ell + (1-\beta_2)\,(\nabla_z\widehat J_M)^{\odot 2}, \quad &\hat v_{\ell+1} &= v_{\ell+1}/(1-\beta_2^{\ell+1}),\\
    z_{\ell+1} &= z_\ell + \eta\,\hat m_{\ell+1}\big/\bigl(\sqrt{\hat v_{\ell+1}}+\varepsilon\bigr).
\end{aligned}
\end{equation}
We additionally exit the inner loop early once the first-order optimality condition for $\widehat J_M$ on the simplex is approximately satisfied. Concretely, we monitor the simplex KKT gap
\begin{equation}
    \mathrm{gap}(\pi) = \max_{i\in[K]} g_i - \langle g,\pi\rangle, \qquad g=\nabla_\pi \widehat J_M(\pi),
    \label{eq:kkt-gap}
\end{equation}
which is non-negative on $\Delta^{K-1}$ and vanishes exactly at stationary points of $\widehat J_M$ over the simplex. Whenever $\mathrm{gap}(\pi)\leq \tau$ with tolerance $\tau=10^{-6}$, we stop the inner loop early; otherwise, we run the full $L=20$ Adam steps.

\begin{algorithm}[H]
\caption{ReMaxGrad: stochastic gradient ascent on $\widehat J_M$}
\label{alg:remax-grad}
\begin{algorithmic}[1]
\Require Posterior-sample count $S$, max inner steps $L$, learning rate $\eta$, KKT tolerance $\tau$, horizon $T$
\State \textbf{Init pulls:} for $i=1,\dots,K$, pull arm $i$, observe $r_i$, update the per-arm posterior; set logits $z\gets \mathbf 0\in\mathbb{R}^K$
\For{$t=K+1,\dots,T$}
    \State Draw $S$ posterior-mean samples $\theta^{(1)},\dots,\theta^{(S)}\overset{\mathrm{iid}}{\sim}\Pi_t$
    \State Reset Adam moments: $m\gets\mathbf 0$, $v\gets\mathbf 0$
    \For{$\ell=1,\dots,L$}
        \State $\pi \gets \mathrm{softmax}(z)$
        \State $g \gets \nabla_\pi \widehat J_M(\pi)$ from Equation~\eqref{eq:remax-grad-pi}, averaged over $\theta^{(1:S)}$
        \If{$\max_i g_i - \langle g,\pi\rangle \leq \tau$} \textbf{break} \EndIf
        \State $g_z \gets \pi \odot \bigl(g - \langle g, \pi\rangle\bigr)$
        \State Adam update: $z \gets \mathrm{Adam}(z,g_z;m,v,\eta)$
    \EndFor
    \State Sample $A_t\sim \mathrm{softmax}(z)$, observe $r_t$
    \State Update the per-arm posterior with $(A_t,r_t)$
\EndFor
\end{algorithmic}
\end{algorithm}

In our experiments we use $S=50$, $L=20$, $\eta=0.05$, $\tau=10^{-6}$ throughout (Adam variant).

\paragraph{Results}
Figures~\ref{fig:regret_remax_grad}, \ref{fig:underestimation_remax_grad}, and~\ref{fig:kkt_remax_grad} report the cumulative regret, the cumulative under\-estimation, and the per-round KKT gap of the inner optimizer. Three trends are apparent.

Expectedly, larger $M$ reduces underestimation by spreading probability mass over more arms, and Figure~\ref{fig:underestimation_remax_grad} shows this monotone trend across all instances, with the clearest gains at $M=3$ and $M=4$.
The gradient solver also appears accurate on the two- and three-arm instances: the KKT gaps in Figure~\ref{fig:kkt_remax_grad} are small, and for $M=2$ the regret curves closely match those of the exact optimizer.
In the ten-arm instance, however, the KKT gap becomes larger and the approximate policy deviates more visibly from the exact $M=2$ behavior; we note that this gap is itself estimated from samples.
Nevertheless, the qualitative trend remains the same: larger $M$ reduces underestimation, but the extra exploration does not necessarily improve cumulative regret.
We therefore view $M$ as an exploration knob rather than a universally better choice, and leave improved inner optimization for larger action spaces to future work.

\begin{figure}[t]
    \centering
    \includegraphics[width=1.0\linewidth]{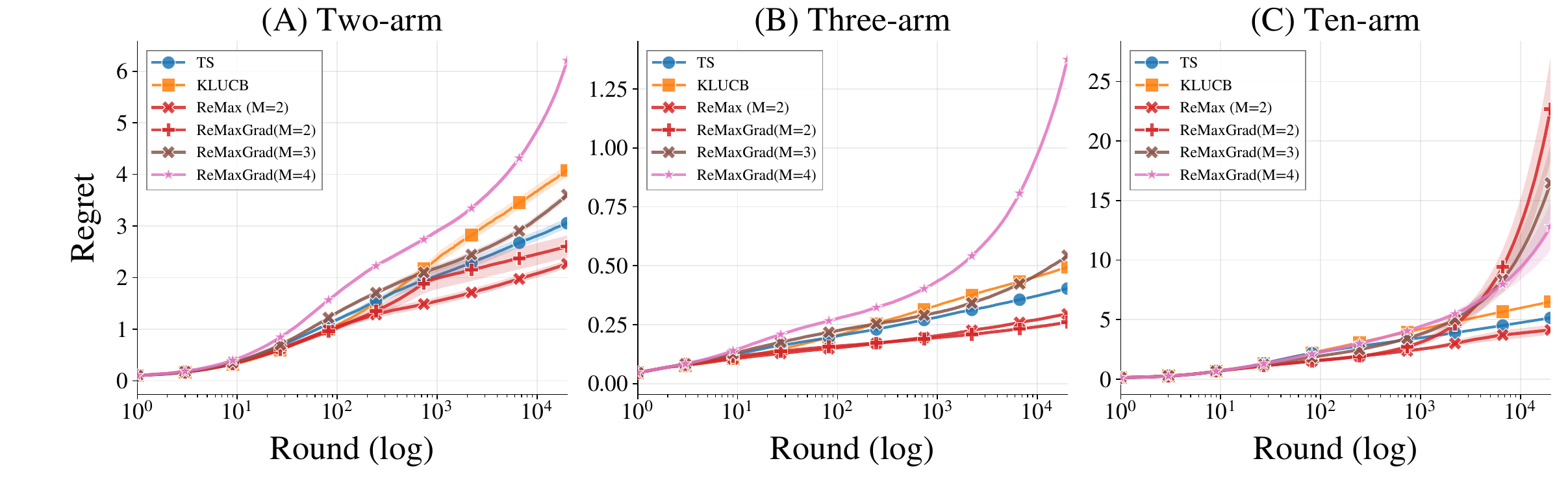}
    \caption{Cumulative regret on the synthetic frequentist instances ($T=20{,}000$, $1{,}000$ replications) for ReMaxGrad with $M\in\{2,3,4\}$ alongside TS, KL-UCB, and ReMax ($M=2$). (A) Two-arm. (B) Three-arm. (C) Ten-arm. Larger $M$ does not by itself improve regret over $M=2$.}
    \label{fig:regret_remax_grad}
\end{figure}

\begin{figure}[t]
    \centering
    \includegraphics[width=1.0\linewidth]{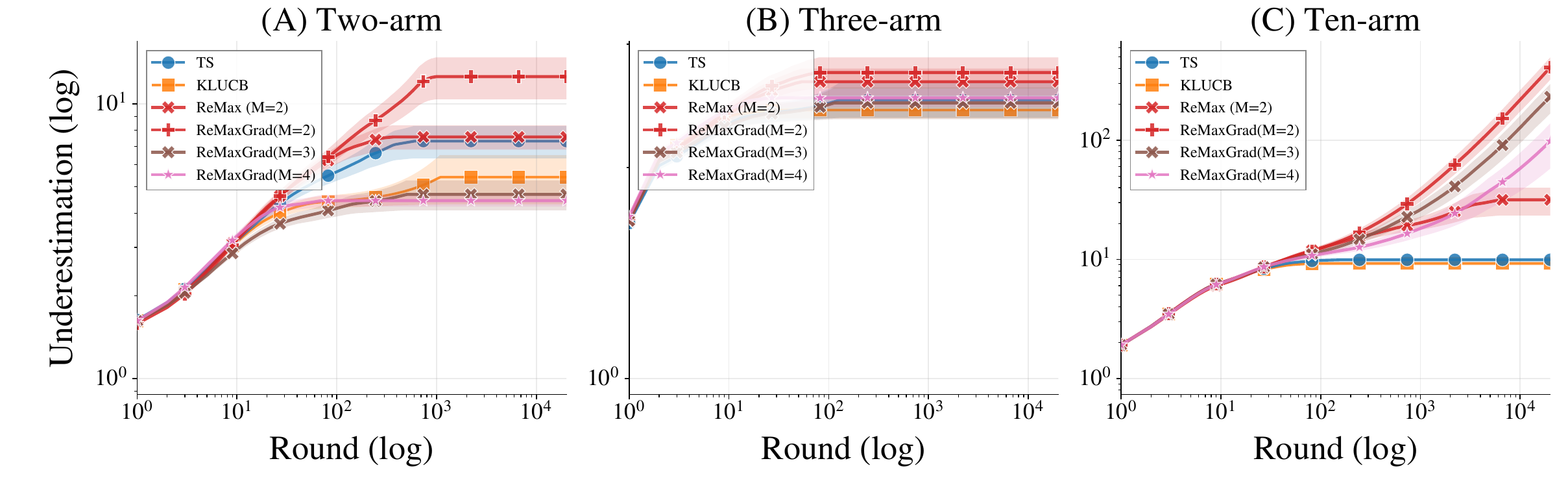}
    \caption{Cumulative under\-estimation on the synthetic frequentist instances for ReMaxGrad with $M\in\{2,3,4\}$ alongside TS, KL-UCB, and ReMax ($M=2$). (A) Two-arm. (B) Three-arm. (C) Ten-arm. Larger $M$ reduces the number of under\-estimation rounds.}
    \label{fig:underestimation_remax_grad}
\end{figure}

\begin{figure}[t]
    \centering
    \includegraphics[width=1.0\linewidth]{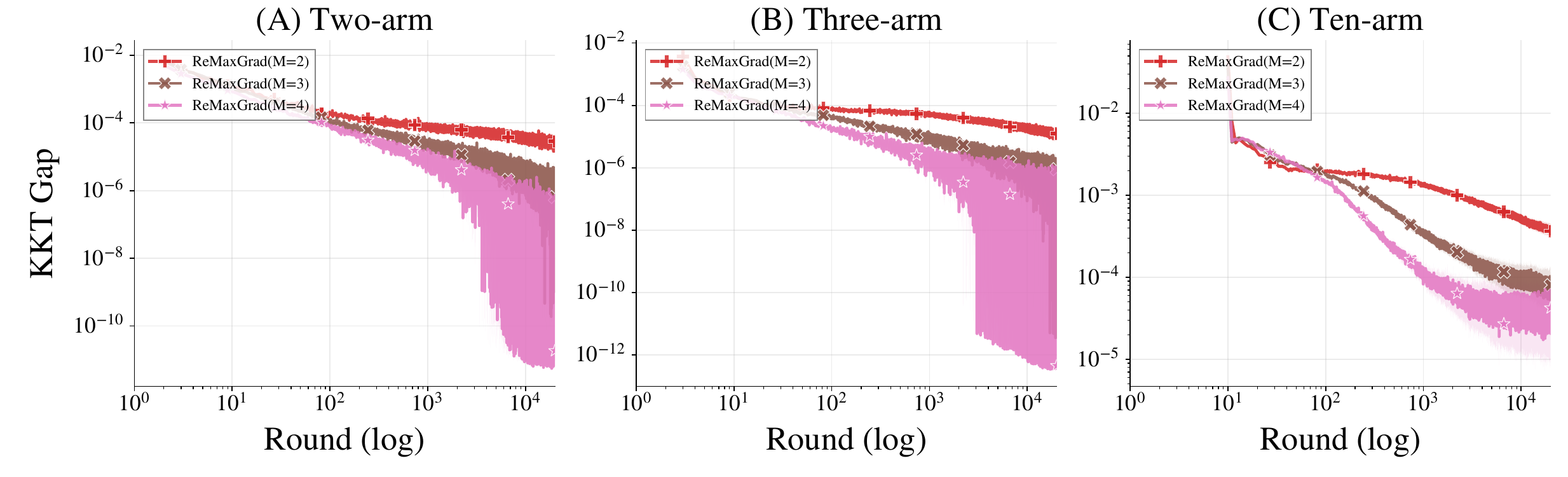}
    \caption{Per-round simplex KKT gap in Equation~\eqref{eq:kkt-gap} of the ReMaxGrad inner optimizer ($S=50$ posterior samples, $L=20$ Adam steps, $\eta=0.05$, $\tau=10^{-6}$) on the same instances. (A) Two-arm. (B) Three-arm. (C) Ten-arm. The gap settles near $10^{-4}$ for $M=2$ and lower for $M\in\{3,4\}$, indicating that the returned policies are close to their respective ReMax optima.}
    \label{fig:kkt_remax_grad}
\end{figure}

\section{Discussion: variance inflation}\label{app:variance_inflation}
\begin{figure}[t]
    \centering
    \includegraphics[width=1.0\linewidth]{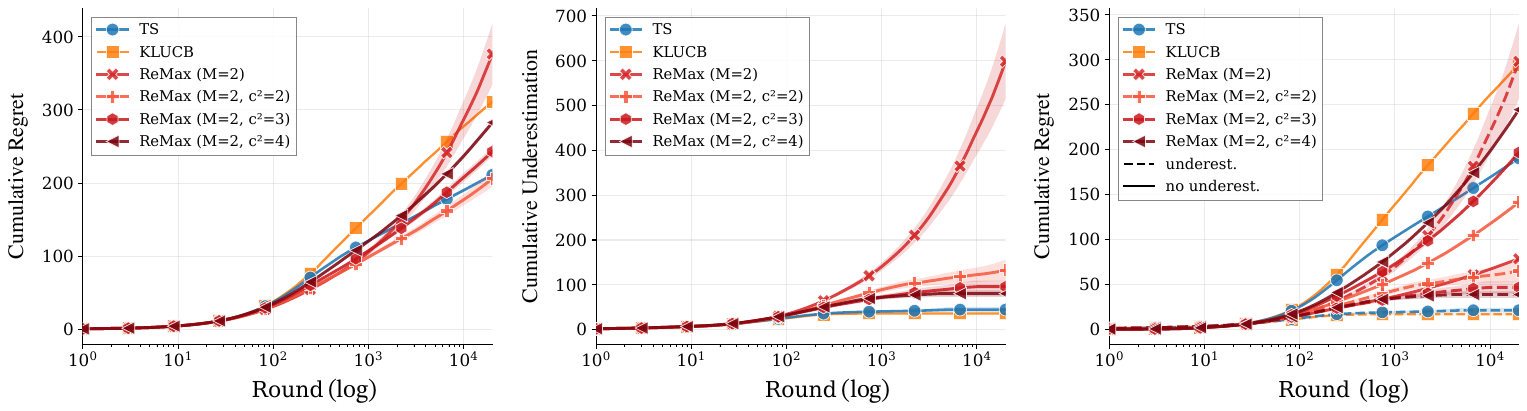}
    \caption{We report the average and standard error of (left) cumulative regret, (center) cumulative underestimation, and (right) separated regret by the occurrence of underestimation.
    In the designed failure mode, standard ReMax suffers from much larger underestimation, which is the direct cause of its worse regret compared with the baselines.
    Underestimation is mitigated by variance inflation, as suggested by the theory.}
    \label{fig:app:failure_mode}
\end{figure}
The example in Section~\ref{subsec_threearminstance} illustrates that, when the optimal arm is underestimated, ReMax may require a long recovery period before the empirical estimate of the optimal arm is corrected. The failure case shown in Figure~\ref{fig:app:failure_mode} provides an empirical illustration of this phenomenon. This observation is consistent with our regret analysis, where the underestimation term contributes a $\tilde{O}(T^{2/3})$ factor to the regret bound.

A natural way to mitigate this underestimation effect is to inflate the variance of the posterior distribution used by ReMax. Specifically, instead of the standard Gaussian posterior, we consider the variance-inflated posterior
\begin{equation}
\Pi^{(c)}_{t,i}=\mathcal{N}\left(\hat{\mu}_i(t),\frac{c^2}{N_i(t)}\right),\qquad c \ge 1 .
\end{equation}
The original ReMax posterior analyzed in Theorem~\ref{thm:main} is recovered as the special case $c=1$.

\paragraph{Intuition on variance inflation}
Roughly speaking, our underestimation analysis for the optimal arm shows that, by Lemma~\ref{lem_bound_e},
\[
E \asymp \exp\left(-n D(\mu_1 - \varepsilon \Vert \hat{\mu}_{1,n}) \right),
\]
up to polynomial factors. Moreover, by Lemma~\ref{lem_main_term}, in the worst case ReMax may wait on the order of $E^{-3}$ rounds until the next, i.e., the $(n+1)$-th, draw of the optimal arm. With variance inflation, we expect instead that
\[
E \asymp \exp\left(-\frac{n}{c^2} D(\mu_1 - \varepsilon \Vert \hat{\mu}_{1,n}) \right),
\]
again up to polynomial factors. Thus, at the level of exponential rates, when $c^2 \ge 3$, the factor $E^{-3}$ is at most of order
\[
\exp\left(n D(\mu_1 - \varepsilon \Vert \hat{\mu}_{1,n}) \right).
\]

By Lemma~\ref{lem_main_term}, the expected waiting time until the next draw of the optimal arm is about $O(LE^{-3})$. On the other hand, the large-deviation probability of observing an empirical mean around $\hat{\mu}_{1,n}$ is of order
\[
\exp\left(- n D(\hat{\mu}_{1,n} \Vert \mu_1) \right).
\]
Therefore, with variance inflation, the fixed-$n$ contribution to the underestimation term can be heuristically bounded as
\begin{align}
\MoveEqLeft L E^{-3} \times\exp\left(- n D(\hat{\mu}_{1,n}  \Vert \mu_1) \right)\nonumber\\
&\lesssim L\exp\left(n D(\mu_1 - \varepsilon \Vert \hat{\mu}_{1,n}) \right)\exp\left(- n D(\hat{\mu}_{1,n}  \Vert \mu_1) \right)\nonumber\\
&=L\exp\left(n \frac{(\mu_1 - \varepsilon - \hat{\mu}_{1,n})^2}{2} \right)\exp\left(- n \frac{(\hat{\mu}_{1,n} - \mu_1)^2}{2} \right)\nonumber\\
&=L\exp\left(n \left[\frac{(\mu_1 - \varepsilon - \hat{\mu}_{1,n})^2}{2}-\frac{(\hat{\mu}_{1,n} - \mu_1)^2}{2}\right]\right)\nonumber\\
&\le L\exp\left(- n \frac{\varepsilon(\mu_1 - \hat{\mu}_{1,n})}{2}\right)\nonumber\\
&\le L \exp\left(- n \frac{\varepsilon^2}{2}\right),
\end{align}
where the last two inequalities use $\hat{\mu}_{1,n} \le \mu_1 - \varepsilon$.

Here, by Lemma~\ref{lem_bound_L}, we have
$$
L=\max_{i,j,t} \left(\hat{\mu}_i(t)-\hat{\mu}_j(t)\right)
=O(\sqrt{\log(KT)})
$$
on the high-probability event $\mathcal{G}_T$. Therefore, summing over $n=1,2,\dots$ gives a geometrically decaying series:
\[
\sum_{n\ge 1} L \exp\left(- n \frac{\varepsilon^2}{2}\right)
=
O\left(\frac{L}{\varepsilon^2}\right).
\]
In particular, this contribution is of order $O(\sqrt{\log T})=o(\log T)$, and hence is expected to be a non-leading term in the distribution-dependent regret bound.

\paragraph{Validation experiments}
Inspired by the example in Section~\ref{subsec_threearminstance}, we constructed an intentional failure mode with a ten-armed bandit instance where underestimation is likely to occur.
Specifically, except for the best arm, whose mean is $\mu_1=1.5$, all other arms have the same mean $\mu_i =1$ for $i=2, \dots,10$, and the arm standard deviation is $\sigma_\eta=1$.
We swept the variance inflation scale over $c^2 = \{2, 3, 4\}$ and compared it with standard ReMax, TS, and KL-UCB.
For the number of rounds and statistics, we used the same setting as in the main synthetic bandit experiments.
Figure~\ref{fig:app:failure_mode} shows the results.
As expected, canonical ReMax suffers from larger underestimation than the baselines, which is the main contributor to its larger regret, as suggested by the separated-regret plot.
At the same time, variance inflation empirically mitigates underestimation, thereby supporting the intuition developed above.
We emphasize, however, that the discussion in this section is heuristic and does not constitute a formal regret proof.
Developing a rigorous analysis of ReMax with variance inflation is an interesting direction for future work.

\section{Computational resources}
\label{app:computational_resources}

We ran our experiments on a GPU node equipped with four GeForce GTX 1080 Ti
GPUs, 256 GB of memory, and a 480 GB SSD.
We implemented our code in JAX \citep{jax2018github} to accelerate computation on GPUs.
The total runtime for all experiments was approximately two hours.

\section{Use of Existing Aggregate Benchmark Instances}
\label{app:dataset-terms}

In the real-data-derived bandit experiments in Section~\ref{sec:experiment}, we do not download or use the raw Open Bandit Dataset (OBD) or the raw MovieLens 1M ratings data. Instead, we construct Gaussian bandit instances from the aggregate arm-level values reported in Tables~5 and~6 of \citet{komiyama2025rate}. Specifically, the OBD instance uses the 80 click-through-rate values reported in their Table~5, and the MovieLens instance uses the 31 normalized movie-rating values reported in their Table~6. We set the reward variance of each arm to one, so each benchmark instance is fully specified by these published arm means.

The source paper \citet{komiyama2025rate} is available under a CC BY 4.0 license. We cite it as the direct source of the numerical arm means used in our experiments. For provenance, we also cite the original Open Bandit Dataset paper \citep{saito2021open} and the MovieLens dataset paper \citep{harper2015movielens}. The original OBD is a public logged bandit dataset released by ZOZO Research, and the original MovieLens 1M dataset is a stable benchmark released by GroupLens. Since our experiments use only aggregate values published in \citet{komiyama2025rate}, we do not redistribute the raw OBD logs or the raw MovieLens ratings.

\clearpage

\end{document}